\newif\ifcondencedmath
\newcommand{\cU}{\mathcal{U}}
\newcommand{\cK}{\mathcal{K}}
\newcommand{\E}{\mathbb{E}}
\newcommand{\cJ}{\mathcal{J}}
\newcommand{\indep}{\perp \!\!\! \perp}
\newcommand{\cY}{\mathcal{Y}}
\newcommand{\cI}{\mathcal{I}}
\newcommand{\R}{\mathbb R}
\newcommand{\eqdef}{\vcentcolon=} 
\def\<#1,#2>{\langle #1,#2\rangle}
\renewcommand{\leq}{\leqslant}
\renewcommand{\geq}{\geqslant}
\def\<{\langle}
\def\>{\rangle}
\def\|{\Vert}
\def\var{{\rm var\,}}
\newcommand{\esp}[1]{\mathbb{E}\left[#1\right]}
\newcommand{\set}[1]{{{\left\{ #1\right\}}}} 
\newcommand{\abs}[1]{{{\left| #1\right|}}} 
\newcommand{\proba}[1]{\mathbb{P}\left(#1\right)}
\newcommand{\cA}{\mathcal{A}}
\newcommand{\cN}{\mathcal{N}}
\newcommand{\cX}{\mathcal{X}}
\newcommand{\cP}{\mathcal{P}}
\newcommand{\cD}{\mathcal{D}}
\newcommand{\one}{\mathds{1}}
\newtheorem{theorem}{Theorem}
\newtheorem{assumption}{Assumption}
\newtheorem{definition}{Definition}
\newtheorem{lemma}{Lemma}
\newtheorem{proposition}{Proposition}
\crefname{assumption}{assumption}{assumptions}
\newtheorem{remark}{Remark}
\newcommand{\gdpmu}{\zeta} 
\newcommand{\Bmu}{B_\mu}
\newcommand{\Bpi}{B_\pi}
\definecolor{mydarkgreen}{RGB}{39,130,67}
\definecolor{mydarkred}{RGB}{192,25,25}
\definecolor{mydarkblue}{RGB}{0,0,140}
\icmltitlerunning{Model Agnostic Differentially Private Causal Inference}
\date{}
\begin{document}

\twocolumn[
  \icmltitle{Model Agnostic Differentially Private Causal Inference}



  \icmlsetsymbol{equal}{*}

  \begin{icmlauthorlist}
    \icmlauthor{Christian Janos Lebeda}{equal,inria}
    \icmlauthor{Mathieu Even}{equal,inria}
    \icmlauthor{Aurélien Bellet}{inria}
    \icmlauthor{Julie Josse}{inria}
  \end{icmlauthorlist}

  \icmlaffiliation{inria}{Inria, Université de Montpellier, INSERM, France}

  \icmlcorrespondingauthor{Christian Janos Lebeda}{christian-janos.lebeda@inria.fr}
  \icmlcorrespondingauthor{Mathieu Even}{mathieu.even@inria.fr}

  \icmlkeywords{casual inference, differential privacy}

  \vskip 0.3in
]



\printAffiliationsAndNotice{\icmlEqualContribution}


\begin{abstract}
\looseness=-1 Estimating causal effects from observational data is essential in fields such as medicine, economics and social sciences, where privacy concerns are paramount.
We propose a general, model-agnostic framework for differentially private estimation of average treatment effects (ATE) that avoids strong structural assumptions on the data-generating process or the models used to estimate propensity scores and conditional outcomes. In contrast to prior work, which enforces differential privacy by directly privatizing these nuisance components,
our approach decouples nuisance estimation from privacy protection. This separation allows the use of flexible, state-of-the-art black-box models, while differential privacy is achieved by perturbing only predictions and aggregation steps within a fold-splitting scheme with ensemble techniques. We instantiate the framework for three classical estimators---the G-Formula, inverse propensity weighting (IPW), and augmented IPW (AIPW)---and provide formal utility and privacy guarantees, together with privatized confidence intervals. Empirical results on synthetic and real data show that our methods maintain competitive performance under realistic privacy budgets.
\end{abstract}


\section{Introduction}\label{section:intro}

Quantifying treatment effects at the population level is a critical task with far-reaching implications for economics, policy, and public health. These estimates guide health interventions, shape policy decisions, and inform clinical recommendations. In modern evidence-based medicine, randomized controlled trials (RCTs) are considered the gold standard for estimating treatment effects—such as the average treatment effect (ATE) via the risk difference (RD)—because they effectively isolate causal effects from confounding factors \citep{imbens2015causal}.
However, RCTs present several limitations: they are costly and time-consuming, and their strict inclusion/exclusion criteria often result in study samples that are small and differ significantly from the broader population eligible for treatment. In contrast, observational data---gathered without deliberate intervention, as in disease registries, cohorts, biobanks, epidemiological studies, or electronic health records---offer a compelling alternative. These data sources are typically more comprehensive, less expensive to collect, and often provide a more accurate representation of real-world populations.
Yet, estimating treatment effects from observational data is challenging due to the presence of \textit{confounders}. Most existing approaches handle this by adjusting for \textit{observed confounders} through regression of \textit{nuisance} parameters \citep{chernozhukov2018double,wager2024causal}.

\looseness=-1 The sensitive nature of individual-level data in causal inference studies raises significant privacy concerns, as medical, economic, and social science datasets often contain personal information that must be protected against reidentification and attribute reconstruction. Although releasing a single causal estimate in isolation is unlikely to pose major privacy risks, in practice such releases rarely occur alone: they are typically accompanied by descriptive statistics (e.g., means, variances, correlations) and are often repeated across multiple studies using the same dataset. This compounding effect has enabled powerful reconstruction attacks, as in the US Decennial Census, where even simple aggregates allowed adversaries to recover a large fraction of individual-level records despite protections such as swapping \citep{doi:10.1073/pnas.2218605120}. Similar risks have been observed in genomics \citep{membership_genomic} and location data \citep{SHRINGARPURE2015631,DBLP:conf/ndss/PyrgelisTC18}. These findings underscore the need for causal inference methods that provide formal privacy guarantees, especially given that large-scale resources such as UK Biobank \citep{10.1371/journal.pmed.1001779} or MIMIC \citep{JohnsonAlistairE.W.2023Mafa} are reused across thousands of studies with overlapping subpopulations.

Differential Privacy (DP) \citep{DworkMNS06,Dwork2014a} is widely recognized as the gold standard for such guarantees. By adding noise proportional to the sensitivity of the computation—i.e., the maximum change in output due to a single individual—it ensures that each individual has limited influence on the output and provides a formal mechanism to track cumulative privacy loss across repeated analyses. While individual analysts (e.g., clinicians or economists) may have little direct incentive to use DP, demand is driven by data custodians—hospitals, biobanks, and statistical agencies—who must comply with regulations such as HIPAA 
and GDPR. Institutions such as the US Census Bureau, Eurostat, and Statistics Canada have adopted or are actively evaluating DP for their data release pipelines. 
Beyond regulatory compliance, strong privacy guarantees can also increase participant willingness to contribute data, improving dataset representativeness and reducing sampling bias.


Current methods for differentially private causal inference are however limited in scope. Some are designed for simplified settings—such as RCTs using private difference-in-means estimators \citep{ChenCBRO24,ohnishi2025locally,yao2024privacypreservingquantiletreatmenteffect}—and therefore fall short of realizing the potential of large-scale observational datasets. Others depend on strong structural assumptions, including parametric forms for nuisance parameters 
\citep{lee2019privacyIPW,ohnishi2024covariateBalancing}, which often do not hold in practice and lead to biased estimators, limiting their applicability to real-world data. Crucially, these methods enforce privacy by privatizing the nuisance models themselves, which requires designing a differentially private training algorithm for the chosen model class and incurs privacy costs that scale with model complexity—costs that are unnecessary if only the final causal estimate is released.

\textbf{Contributions.}
Motivated by these limitations, we propose a flexible, non-parametric, and model-agnostic framework for differentially private causal inference, designed to meet the practical needs of the broader causal inference community. 
Our approach privately estimates the ATE from observational data without structural assumptions, avoiding bias from model misspecification. Unlike prior methods, any \emph{non-private} estimator can be used for nuisance parameters (e.g., propensity scores or outcome regressions), while privacy is enforced at the prediction and aggregation stages. The framework proceeds through four main steps: 
\textit{(i)} split the data into $K$ folds;
\textit{(ii)} estimate nuisance parameters on each fold; 
\textit{(iii)} aggregate the predictions of these $K$ nuisance models on held-out data from the $K-1$ other folds to reduce sensitivity—and thus the amount of noise required for privacy;
\textit{(iv)} construct a private ATE estimator from these predictions and calibrated Gaussian noise.
From this framework, we derive differentially private versions of three widely-used estimators: the plug-in \textit{G-Formula}, \textit{Inverse Propensity Weighting} (IPW), and \textit{Augmented Inverse Propensity Weighting} (AIPW) estimators. We provide a unified analysis of privacy and utility guarantees, leveraging the recent Gaussian DP framework \citep{Dong22GDP} to derive tight privacy bounds, while quantifying the asymptotic variance of the estimators, shedding light on their privacy-utility trade-offs. 
Importantly, we show that our general framework allows to privately construct valid confidence intervals by adapting techniques that are commonly used in practice, such as bootstrap methods. 
We also extend our framework to perform meta analysis, in which several private ATE estimates are aggregated to effectively reduce variance. Finally, we demonstrate the effectiveness of our approach through experiments on synthetic and real data, highlighting its superiority over prior methods in practically relevant scenarios.

\section{Preliminaries} \label{sec:prelim}


\subsection{Causal Inference Framework} \label{sec:prelim-causal-setup}

We assume access to a dataset of $n$ independent and identically distributed (i.i.d.) patients drawn from an underlying distribution $\cP$.
Each patient $i\in[n]$ is described by their \emph{covariates} (feature vector) $X_i\in\cX$, \emph{treatment assignment} $A_i\in\set{0,1}$ (we assume binary treatments), and \emph{observed outcome} $Y_i\in\cY\subset\R$ (treatment response).
Let $\cD\eqdef(X_i,A_i,Y_i)_{i\in[n]}\sim \cP$ denote the dataset, which may originate from either a randomized control trial (RCT) or an observational study.
We adopt the potential outcome framework, which formalizes the notion of causal effects by positing the existence of two potential outcomes for each individual $i$, $Y_i(1),Y_i(0)\in\cY$, corresponding to the outcomes under treatment and control, respectively.
Throughout this work, we make the following standard assumptions.

\begin{assumption}[Stable Unit Treatment Value]\label{asm:SUTVA} 
For all $i\in[n]$, $Y_i = A_i Y_i(1) + (1 - A_i) Y_i(0)$.
\end{assumption}


\begin{assumption}[Unconfoundedness]
    \label{asm:unconfoundedness}
    For all $i\in[n]$,
    we have $\{Y_i(0), Y_i(1)\} \indep A_i \vert X_i$.
\end{assumption}

\begin{assumption}[Overlap]\label{asm:overlap}
	There exists $\eta\in(0,1/2]$ such that for all $x\in\mathcal X$, we have $\pi(x)\in[\eta,1-\eta]$, where $\pi(x) \eqdef\proba{A=1|X=x}$ is the propensity score.
\end{assumption}

We quantify treatment effects by estimating the average treatment effect (ATE) with the risk difference (RD).

\begin{definition}[Average Treatment Effect (ATE)]
    The ATE with the RD is defined as
    $\tau \eqdef \E [Y_i(1) - Y_i(0)]$.
    The expectation is taken over the distribution $\cP$.   
\end{definition}

\subsection{Non-Private ATE Estimators} \label{sec:prelim-ATE-estimator}

We now present standard non-private ATE estimators under \Cref{asm:SUTVA,asm:unconfoundedness,asm:overlap}, 
beginning with the associated nuisance functions.
The propensity score $\pi(x) \eqdef\proba{A=1|X=x}$
is the probability of receiving treatment as a function of the covariates, while
$\mu_a(x) \eqdef \E[Y \vert X = x, A = a]$ is the expected outcome as a function of covariates and treatment. 
The plug-in G-Formula, IPW, and AIPW estimators we consider in this work can all be written as an average $\hat\tau \eqdef \frac{1}{n}\sum_{i=1}^n \Gamma_i$ of \emph{scores}  
$\Gamma_i$ for each patient $i\in[n]$ 
using $(X_i,A_i,Y_i)$ and (non-parametric) estimators $\hat\mu_0,\hat\mu_1,\hat\pi$ of $\mu_0,\mu_1,\pi$:\footnote{These non-parametric models are usually trained on a dataset independent of $(X_i,A_i,Y_i)_{i\in[n]}$.} 
\ifcondencedmath
\begin{align*}\label{eq:scores-g-formula-ipw-aipw}
\text{G-Formula: }&\Gamma_i^\mathrm{G} = \hat\mu_1(X_i)-\hat \mu_0(X_i) \,,\\
\text{IPW: } &\Gamma_i^\mathrm{IPW} =  \frac{A_i}{\hat\pi(X_i)}Y_i - \frac{1-A_i}{1-\hat\pi(X_i)}Y_i \,,\\
\text{AIPW: } &\Gamma_i^\mathrm{AIPW} = \hat\mu_1(X_i)-\hat \mu_0(X_i)~+ \\
\frac{A_i}{\hat\pi(X_i)}&(Y_i-\hat\mu_1(X_i)) - \frac{1-A_i}{1-\hat\pi(X_i)}(Y_i-\hat\mu_0(X_i))\,.
\end{align*}
\else
\begin{equation}\label{eq:score-g-formula-ipw}
\text{G-Formula: }\Gamma_i^\mathrm{G} = \hat\mu_1(X_i)-\hat \mu_0(X_i) \,,\quad
\text{IPW: } \Gamma_i^\mathrm{IPW} =  \frac{A_i}{\hat\pi(X_i)}Y_i - \frac{1-A_i}{1-\hat\pi(X_i)}Y_i
\end{equation}
\begin{equation}\label{eq:score-aipw}
\text{AIPW: }\Gamma_i^\mathrm{AIPW} = \hat\mu_1(X_i)-\hat \mu_0(X_i) + \frac{A_i}{\hat\pi(X_i)}(Y_i-\hat\mu_1(X_i)) - \frac{1-A_i}{1-\hat\pi(X_i)}(Y_i-\hat\mu_0(X_i))\,.
\end{equation}
\fi
\looseness=-1 We denote by $\hat\tau_\mathrm{G}$, $\hat\tau_\mathrm{IPW}$ and $\hat\tau_\mathrm{AIPW}$ the G-Formula, IPW and AIPW estimators respectively.
We note that some methods instead estimate the conditional ATE, $\mu_1(x)-\mu_0(x)$, as a nuisance function and average it over the population, which can be done with, e.g., causal forests \citep{athey2019estimating}. Our framework also accommodates these approaches (see \Cref{remark:direct_cate}).

In causal inference, the theoretical ``utility'' of an estimator is typically reported as its asymptotic variance, rather than through some finite sample bounds.
For the three estimators above, the following asymptotic variance results (asymptotic unbiasedness and normality) hold \citep{wager2024causal} under \Cref{asm:SUTVA,asm:unconfoundedness,asm:overlap}:
\begin{equation*}
    \sqrt{n}(\hat\tau - \esp{Y_i(1)-Y_i(0)})\Rightarrow\cN(0,V^\star)\,, \text{with}
\end{equation*}
\vspace{-.7cm}
\ifcondencedmath
\begin{align} \label{eq:var-g-formula}
V_\mathrm{G}^\star &\eqdef \var\!\big(\mu_1(X) - \mu_0(X)\big), \\
\label{eq:var-ipw}V_\mathrm{IPW}^\star &\eqdef 
   \E\!\left[\frac{Y(1)^2}{\pi(X)}\right] 
 + \E\!\left[\frac{Y(0)^2}{1-\pi(X)}\right] - \tau^2, \\
V_\mathrm{AIPW}^\star &\eqdef 
   \E\!\left[\tfrac{(Y(1)-\E[Y(1)|X])^2}{\pi(X)}\right] 
 + \E\!\left[\tfrac{(Y(0)-\E[Y(0)|X])^2}{1-\pi(X)}\right]  \nonumber\\
& \quad~~ + \var\big(\mu_1(X)-\mu_0(X)\big) \label{eq:var-aipw} .
\end{align}
\else
\begin{equation}\label{eq:var-g-formula-ipw}
	V_\mathrm{G}^\star \eqdef \var\big(\mu_1(X) - \mu_0(X)\big) \,,\quad
	V_\mathrm{IPW}^\star \eqdef 
 \mathbb{E}\Big[\frac{Y(1)^2}{\pi(X)}\Big] +  \mathbb{E}\Big[\frac{Y(0)^2}{1-\pi(X)}\Big] - \tau^2\,,
\end{equation}
\begin{equation}\label{eq:var-aipw}
	V_\mathrm{AIPW}^\star \eqdef \mathbb{E}\Big[\frac{(Y(1)-\esp{Y(1)|X})^2}{\pi(X)}\Big]\! + \mathbb{E}\Big[\frac{(Y(0)-\esp{Y(0)|X})^2}{1-\pi(X)}\Big] \!+ \var\left(\mu_1(X)-\mu_0(X)\right)\,,
\end{equation}
\fi
provided that the nuisance parameters are pointwise consistent and
$\mathbb{E}\big[(\hat\mu_a(X_i)-\mu_a(X_i))^2\big]=o(n^{-1})$ (for G-Formula),
$\mathbb{E}\big[(\hat\pi(X_i)-\pi(X_i))^2\big]=o(n^{-1})$ (for IPW), or
$\mathbb{E}\big[(\hat\pi(X_i)-\pi(X_i))^2\big]\mathbb{E}\big[(\hat\mu_a(X_i)-\mu_a(X_i))^2\big]=o(n^{-1})$ (for AIPW).
Note that under this assumption, AIPW estimated with cross-fitting \citep{chernozhukov2018double} achieves the semi-parametric efficient variance.
Importantly, it is also \emph{doubly robust}, meaning it remains consistent if either the outcome model or the propensity score model is correctly specified---a key reason for its popularity.
Since $V^\star$ is usually unknown, it is often estimated as $\hat V= \frac{1}{n-1}\sum_{i=1}^n (\Gamma_i-\hat\tau)^2$ \citep{wager2024causal}.
Note that $V^*$ above represents the asymptotic variance of $\sqrt{n} (\hat \tau - \tau)$. Since $\hat \tau$ is scaled by $\sqrt{n}$ in this expression, the variance of the ATE estimate $\hat \tau$ itself can then be estimated by $\hat V / n$.

\subsection{Differential Privacy}\label{sec:prelim-DP}


\looseness=-1 Informally, Differential Privacy (DP) \citep{DworkMNS06,Dwork2014a} ensures that a (randomized) algorithm’s output changes little when a single individual's data is modified, limiting what an adversary can infer. DP is widely regarded as the gold standard for privacy-preserving analysis due to several properties: \textit{(i)} strong, rigorous guarantees even against adversaries with arbitrary side information; \textit{(ii)} \emph{post-processing immunity}, so any function of a DP output preserves privacy; and \textit{(iii)} \emph{composition}, enabling precise tracking of cumulative privacy loss across multiple analyses.

In this work, we express privacy guarantees using Gaussian Differential Privacy (GDP) \citep{Dong22GDP}, a framework that naturally describes the privacy loss from adding Gaussian noise and provides a tight analysis under composition. GDP is increasingly recommended for reporting DP guarantees \citep{gomez2025varepsilondeltaconsideredharmful}, and it can be converted directly to the classical $(\varepsilon, \delta)$-DP definition (see Appendix~\ref{app:gdp-to-dp}).

Intuitively, GDP frames privacy as a hypothesis testing problem. Consider an adversary trying to determine whether an algorithm $\mathcal{A}$ was run on a dataset $\cD$ or a neighboring dataset $\cD'$ that differs from $\cD$ by replacing a single individual's data.
The harder it is to distinguish these datasets, the stronger the privacy.
To formalize this, we use the trade-off function, which captures the adversary's best possible performance.

\begin{definition}[Trade-off function]
Let $P$ and $Q$ be two probability distributions. For any decision rule $\phi$ that outputs the probability of rejecting $H_0 : P$ in favor of $H_1 : Q$, the trade-off function is
\[
T(P,Q)(\alpha) \eqdef \inf \big\{1 - \mathbb{E}_Q[\phi] : \mathbb{E}_P[\phi] \leq \alpha \big\},
\]
\looseness=-1 where the infimum is over all possible rules $\phi$, and $0 \leq \alpha \leq 1$ is the allowable type I error (false positive rate).
\end{definition}

Now we can define GDP.

\begin{definition}[$\gdpmu$-Gaussian Differential Privacy \citep{Dong22GDP}]
    \label{def:gdp}
    Let $\gdpmu\geq 0$. We say that a randomized algorithm $\cA$ 
    satisfies $\gdpmu$-GDP if, for all $\cD \sim \cD'$, we have
    \[
        T(\cA(\cD),\cA(\cD')) \geq T(\mathcal{N}(0, 1), \mathcal{N}(\gdpmu, 1)) \enspace .
    \]
\end{definition}

In other words, a randomized algorithm $\mathcal{A}$ satisfies $\gdpmu$-Gaussian Differential Privacy (GDP) if distinguishing its outputs on two neighboring datasets (differing by one individual) is at least as hard as distinguishing between two unit-variance Gaussian distributions whose means differ by $\gdpmu$.
Thus, the smaller $\gdpmu$, the stronger the privacy guarantees. 
Our private ATE estimates rely on the Gaussian mechanism, a simple and widely used method to enforce $\gdpmu$-GDP.
The Gaussian mechanism adds noise proportional to the \emph{sensitivity} of the function—how much the output could change if a single individual’s data is modified.

\begin{lemma}[Gaussian mechanism, \citealt{Dong22GDP}]
    \label{lem:gaussian-mech-dp}
    Let $f \colon \cX^n \rightarrow \mathbb{R}$ be a function with sensitivity $\max_{\cD \sim \cD'} \vert f(\cD) - f(\cD') \vert \leq \Delta$.
    Then $\cA(\cD)\eqdef f(\cD) + Z$, with $Z \sim \cN(0, \Delta^2/\gdpmu^2)$, satisfies $\gdpmu$-GDP.
\end{lemma}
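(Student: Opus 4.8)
The plan is to unfold \Cref{def:gdp} and reduce the trade-off function of the mechanism on an arbitrary pair of neighbors to the canonical Gaussian trade-off function $G_\mu \eqdef T(\cN(0,1),\cN(\mu,1))$, and then to exploit that $G_\mu$ is pointwise nonincreasing in $\mu$. Fix neighbors $\cD \sim \cD'$, write $\sigma \eqdef \Delta/\gdpmu$ and $d \eqdef |f(\cD)-f(\cD')|$, so that $d \le \Delta$ by the sensitivity bound. Since $Z\sim\cN(0,\sigma^2)$ is independent of the data, $\cA(\cD)\sim\cN(f(\cD),\sigma^2)$ and $\cA(\cD')\sim\cN(f(\cD'),\sigma^2)$, and by \Cref{def:gdp} it suffices to show $T\big(\cN(f(\cD),\sigma^2),\cN(f(\cD'),\sigma^2)\big) \ge G_{\gdpmu}$ on $[0,1]$.

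First I would record the invariance of trade-off functions under a common bijective reparametrization of the sample space: if $\psi$ is a bijection (with measurable inverse) and $\psi(P)$ denotes the law of $\psi(X)$ for $X\sim P$, then $T(\psi(P),\psi(Q)) = T(P,Q)$, because decision rules for the pushforwards are in bijection with decision rules for the originals and this correspondence preserves both the type-I and type-II errors. Applying the affine bijection $\psi(x) \eqdef (x-f(\cD))/\sigma$ transforms the pair $\big(\cN(f(\cD),\sigma^2),\cN(f(\cD'),\sigma^2)\big)$ into $\big(\cN(0,1),\cN(d/\sigma,1)\big)$, hence $T(\cA(\cD),\cA(\cD')) = G_{d/\sigma}$. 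In the degenerate case $d=0$ this reads $G_0(\alpha)=1-\alpha$, the trade-off function of two identical distributions.

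Next I would identify $G_\mu$ explicitly via Neyman--Pearson. The likelihood ratio of $\cN(\mu,1)$ to $\cN(0,1)$ at $x$ is monotone increasing in $x$ (for $\mu\ge 0$), so the most powerful level-$\alpha$ test rejects $H_0$ when $x \ge \Phi^{-1}(1-\alpha)$, with $\Phi$ the standard normal CDF; its type-II error is $\mathbb{P}_{x\sim\cN(\mu,1)}\!\big(x<\Phi^{-1}(1-\alpha)\big)=\Phi\big(\Phi^{-1}(1-\alpha)-\mu\big)$. Thus $G_\mu(\alpha)=\Phi\big(\Phi^{-1}(1-\alpha)-\mu\big)$, which for each fixed $\alpha$ is nonincreasing in $\mu$. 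Since $d\le\Delta$ gives $d/\sigma=\gdpmu\, d/\Delta\le\gdpmu$, this monotonicity yields $G_{d/\sigma}(\alpha)\ge G_{\gdpmu}(\alpha)$ for all $\alpha\in[0,1]$, so that $T(\cA(\cD),\cA(\cD'))\ge G_{\gdpmu}$ for every neighboring pair --- which is precisely $\gdpmu$-GDP.

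The only genuine content is the reparametrization invariance of $T$ (the equality case of the data-processing inequality for trade-off functions) together with the Neyman--Pearson computation of $G_\mu$; the rest is a one-line monotonicity argument. I expect the step requiring the most care is the invariance claim: one must check that $\psi$ and $\psi^{-1}$ are measurable (immediate here, being affine with $\sigma>0$) so that the correspondence of decision rules is exact, and one should not forget the degenerate $d=0$ case, which is absorbed via $G_0(\alpha)=1-\alpha\ge G_{\gdpmu}(\alpha)$.
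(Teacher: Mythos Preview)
The paper does not prove this lemma; it is stated in \Cref{sec:prelim-DP} as a cited result from \citet{Dong22GDP} and invoked as a black box thereafter. Your argument is the standard derivation of the Gaussian mechanism's GDP guarantee (location--scale invariance of trade-off functions, the Neyman--Pearson computation $G_\mu(\alpha)=\Phi(\Phi^{-1}(1-\alpha)-\mu)$, and monotonicity in $\mu$) and is correct. One small imprecision: after applying $\psi(x)=(x-f(\cD))/\sigma$ the second law is $\cN\big((f(\cD')-f(\cD))/\sigma,1\big)$, whose mean may be $-d/\sigma$ rather than $d/\sigma$; you need one more reflection $x\mapsto -x$ (again a measurable bijection) to reach $\cN(d/\sigma,1)$ and hence $G_{d/\sigma}$. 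With that caveat the proof is complete.
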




\section{Related Work} \label{sec:related}

Previous work on private ATE estimation falls into two main categories. The first relies on simple difference-in-means estimators that avoid estimating nuisance functions.
Most of these methods are limited to RCTs \citep{ChenCBRO24,javanmard2024causalinferencedifferentiallyprivate,yao2024privacypreservingquantiletreatmenteffect}, whereas our focus is on observational data. \citet{KogaCP24} propose a matching-based estimator for observational settings, pairing each data point with another that has identical covariates but opposite treatment assignment. However, their method assumes discrete covariates and requires well-balanced treatment groups (because unmatched samples are discarded). We do not make such strong assumptions in this work. 

The second category uses estimators that rely on nuisance functions like those considered in this paper.
These methods follow a two-step procedure: (i) fit the nuisance functions using a differentially private algorithm, and (ii) plug them into the ATE estimator and add noise to privatize the result. Differential privacy follows by composition from the privacy of each step.
\cite{lee2019privacyIPW} proposed a private version of IPW using logistic regression fitted via Differentially Private Empirical Risk Minimization (DP-ERM). \cite{ohnishi2024covariateBalancing} also consider logistic regression together with covariate balancing, which can reduce the bias of \cite{lee2019privacyIPW} in some cases. Privacy is enforced with the 2-Norm Gradient Mechanism, reducing noise but making the approach intractable with more than a few features.
Both works assume a linear propensity score and are limited to IPW, which is known to exhibit high variability in practice even in well-specified, non-private settings \citep{Kang2007}.
Furthermore, making the propensity score model DP causes the privacy cost to grow with the dimension \citep{Bassily2014a}\footnote{This dependence is hidden by assuming $\|X_i\|_2 \leq 1$ in \citep{lee2019privacyIPW,ohnishi2024covariateBalancing}.}. This cost is unnecessary if only the final ATE estimate is needed.
In contrast, our method adds noise solely to the ATE, supports arbitrary models—including non-parametric ones—and privatizes not only IPW but also the G-Formula and doubly-robust AIPW estimators, which are 
widely used and effective in practice.

Concurrently with our work, \citet{DBLP:journals/csda/GuhaR25} proposed private IPW estimators using a data-splitting scheme that bears some resemblance to ours. However, their approach differs fundamentally: they estimate the propensity scores and ATE \emph{independently on each data split}, then aggregate the results in a meta-analysis fashion. In contrast, we estimate nuisance functions on all but one split and compute the ATE on the held-out split—reducing variance and ensuring the independence structure needed for establishing asymptotic normality in non-parametric settings. Additionally, their method is limited to IPW with binary outcomes.


Other existing work on DP causal inference has considered settings with public treatment assignments and outcomes \citep{agarwalSingh21}, as well as Conditional Average Treatment Effect (CATE) estimation \citep{nori21DPEBM,NiuNQCNK22,schroder2025differentially}. Here, we focus on the ATE rather than CATE. The 
ATE is the primary estimand in many applied settings.

\section{Differentially Private ATE Estimators}
\label{sec:DP_estimators}
In this section, we introduce our private ATE estimators.
We first present the general framework, then instantiate it to obtain differentially private G-Formula, IPW, and AIPW estimators, provide a unified privacy and utility analysis, and show how to construct private confidence intervals.\looseness=-1

\vspace{-0.5em}
\subsection{General Framework}
\label{sec:framework}

In the ATE estimators defined in \Cref{sec:prelim-ATE-estimator}, it is necessary to protect both the training data used to learn the nuisance functions and the data points on which the ATE is evaluated. Our goal in this work is to avoid making strong structural assumptions (such as parametric forms) about the data or the nuisance functions. However, this flexibility introduces a key challenge: without such assumptions, the sensitivity of these estimators---which determines the scale of the noise required for DP (see \Cref{lem:gaussian-mech-dp})---becomes inherently high. In particular, modifying a single data point can arbitrarily impact the fitted nuisance models, leading to a sensitivity of $O(1)$ that does not decrease with dataset size, making accurate private ATE estimation infeasible even with large datasets.
%
A straightforward strategy would be to train \textit{differentially private non-parametric models} for the nuisance functions $\pi, \mu_0, \mu_1$. However, this tends to be privacy-inefficient and restricts the use of powerful, state-of-the-art methods such as causal forests, limiting both flexibility and performance.

Instead, we leverage the fact that only the final ATE estimate is released. Consequently, the nuisance models do not need to be differentially private; it suffices to privatize the \emph{predictions} of these models with respect to both their training data and the data used for ATE estimation. 
Building on this insight, we partition the dataset $\cD$ into $K$ folds and learn \emph{non-private} nuisance estimators independently on each fold.
For each fold, we aggregate predictions from the models trained on the $K-1$ other folds to form ensemble estimates. This folding strategy substantially reduces the overall sensitivity of the final computation, allowing us to use flexible, non-private nuisance models while still providing strong privacy guarantees for the ATE.

Formally, our framework follows the following five steps.
\vspace{-.9em}
\begin{enumerate}
	\item \textbf{Data splitting.}
	Partition $\cD$ into $K$ folds $(\cI_k)_{k\in[K]}$. For each patient $i\in[n]$, there exists a unique $k(i)\in[K]$ such that $i\in\cI_{k(i)}$.
	\item \textbf{Nuisance estimation.} Learn $K$ sets of (non-parametric) estimators $\big(\hat \pi^{(k)},\hat\mu_1^{(k)},\hat\mu_0^{(k)}\big)_{k\in[K]}$, where $\hat \pi^{(k)},\hat\mu_1^{(k)},\hat\mu_0^{(k)}$ are trained on $\cI_k$.
	\item \textbf{Aggregation.} For all $i\in[n]$, compute the estimated nuisance parameters of patient $i$ as: 
		\begin{align}
		    \hat\pi_1(i) = & \textstyle\Big( \frac{1}{K-1}\sum_{k\in[K]\setminus\set{k(i)}} \frac{1}{\hat\pi^{(k)}(X_i)}\Big)^{-1}\,,\nonumber\\
		    1-\hat\pi_0(i) = &\textstyle\Big( \frac{1}{K-1}\sum_{k\in[K]\setminus\set{k(i)}} \frac{1}{1-\hat\pi^{(k)}(X_i)}\Big)^{-1}\,,\nonumber\\
		    \hat\mu_a(i) = & \textstyle\frac{1}{K-1}\sum_{k\in[K]\setminus\set{k(i)}} \hat\mu_a^{(k)}(X_i) \,.\label{eq:models_nuisances}
		\end{align}
	    In other words, the propensity score (resp. the conditional outcome) of patient $i$ is estimated using a harmonic (resp. arithmetic) mean over the $K-1$ models trained \emph{excluding} patient $i$.\footnote{We use the harmonic mean for propensity scores as the ATE estimators involve the \emph{inverses} of these scores.}
	    Intuitively, this design ensures that the sensitivity of the nuisance estimators $\hat\mu_a,\frac{1}{\hat\pi_1},\frac{1}{1-\hat\pi_0}$ is $O(1/K)$, which decreases as the number of folds $K$ increase.
	    Note that this aggregation implies separate propensity estimates for treated and control patients. 
    \item \textbf{Private ATE estimator.}
    For all $i\in[n]$, compute the score $\hat \Gamma_i$ of data point $i$ as in \Cref{sec:prelim-ATE-estimator} but with the ``ensemble'' nuisance models defined in \Cref{eq:models_nuisances}.
    For instance, $\hat \Gamma_i=\hat\mu_1(i)-\hat \mu_0(i) + \frac{A_i}{\hat\pi_1(i)}(Y_i-\hat\mu_1(i)) - \frac{1-A_i}{1-\hat\pi_0(i)}(Y_i-\hat\mu_0(i))$ for AIPW.
Then, aggregate these scores and add Gaussian noise to get the private ATE estimate:
\ifcondencedmath
\begin{equation}
    \hat\tau_\mathrm{DP}\eqdef \hat\tau + \cN(0,\sigma_1^2)\,, \text{ with } \textstyle\hat\tau \eqdef \frac{1}{n}\sum_{i=1}^n \hat \Gamma_i\,.
\end{equation}
\else
\begin{equation}
    \hat\tau_\mathrm{DP}\eqdef \hat\tau + \cN(0,\sigma_1^2)\,,\quad \text{with}\qquad\textstyle\hat\tau \eqdef \frac{1}{n}\sum_{i=1}^n \hat \Gamma_i\,.
\end{equation}
\fi

    \item \textbf{
    Private variance estimation.}
    Privately estimate the variance of the scaled estimator as
\ifcondencedmath
\begin{equation}\label{eq:variance}
    \hat V_\mathrm{DP}\eqdef \left( \sqrt{\frac{1}{n-1}\sum_{i=1}^n\big(\hat\Gamma_i-\hat\tau\big)^2} + \cN(0,\sigma_2^2)\right)^2 + n\sigma_1^2\,.
\end{equation}
\else
\begin{equation}\label{eq:variance}
    \textstyle\hat V_\mathrm{DP}\eqdef \Big( \sqrt{\frac{1}{n-1}\sum_{i=1}^n\big(\hat\Gamma_i-\hat\tau\big)^2} + \cN(0,\sigma_2^2)\Big)^2+\sigma_1^2+2.33\sigma_2^2\,.
\end{equation}
\fi
\end{enumerate}

Different choices of scores $\hat \Gamma_i$ yield our three private ATE estimators: $\hat\tau_\mathrm{DP-G}$, $\hat\tau_\mathrm{DP-IPW}$, and $\hat\tau_\mathrm{DP-AIPW}$.

\begin{remark}[Data splitting and connections to existing techniques]
Our folding strategy relates to cross-fitting, commonly used to reduce overfitting in nuisance estimation \citep{chernozhukov2018double,athey2021policy}; $K=2$ folds recovers standard cross-fitting, while $K>2$ departs from the classical setup. Unlike traditional cross-fitting, our main motivation is to reduce sensitivity and improve privacy-utility trade-offs rather than purely statistical efficiency \citep{bach2024hyperparameter}.
Our folding strategy also resembles subsample-and-aggregate~\citep{NissimRS07,smith11}, where data is split, estimators are trained per fold, and aggregated in a private manner. Our novelty is that nuisance estimators are combined across folds, leveraging the structure of ATE estimation to effectively increase sample size without extra privacy cost (see~\Cref{app:subsample-and-aggregate} for an empirical comparison).
Finally, our scheme shares a conceptual link with PATE \citep{DBLP:conf/iclr/PapernotSMRTE18}, in that each data point affects only one model, but the goals, settings, and aggregation mechanisms differ substantially.
\end{remark}

\subsection{Unified Privacy and Utility Analysis}

We now present a unified privacy and utility analysis of our private ATE estimators, deriving expressions for setting the noise multipliers $\sigma_1^2$ and $\sigma_2^2$ to achieve a target privacy level, and providing utility guarantees in terms of the resulting asymptotic variance.

\textbf{Privacy analysis.} A well-known negative result in DP states that privately estimating the mean of unbounded reals is impossible without distributional assumptions \citep{BunNSV15}. Therefore, for our privacy analysis, we assume that both the outcomes and the nuisance function estimators are bounded.
\begin{assumption}[Bounded outcomes and nuisance estimators]\label{asm:boundedness}
    There exists ${\Bmu > 0}$ and ${\Bpi > 1}$ s.t. for all $k\in[K]$, $x\in\cX$, and $a\in\set{0,1}$ we have $\cY\subseteq[-\Bmu,\Bmu]$, $|\hat \mu^{(k)}_a(x)|\leq \Bmu$ and $\max\left( \frac{1}{\hat\pi^{(k)}(x)},\frac{1}{1-\hat\pi^{(k)}(x)}\right)\leq \Bpi$.
\end{assumption}

Boundedness is a standard requirement in differentially private algorithms. In many applications, outcomes are naturally bounded (e.g., binary events or blood pressure), so this natural bound can be directly used as $\Bmu$. When natural bounds are not available, they can be enforced through simple clipping—a common practice in DP algorithms. The same principle applies to propensity scores, where clipping scores near $0$ or $1$ is a well-established technique in non-private causal inference for handling poor overlap \citep{sturmer2021propensity}.
Although boundedness constraints for outcome models are less common in non-DP causal proofs—which typically rely on the assumptions of \Cref{thm:utility_general} to ensure well-specified models—they are sometimes imposed in specific settings \citep{BoughdiriJosseScornet2025}.

We are now ready to state privacy guarantees.
\begin{theorem}[Privacy analysis]\label{thm:privacy_unified}
Assume \Cref{asm:boundedness} holds. Let $\gdpmu_1,\gdpmu_2>0$. Set the noise multiplier $\sigma_1^2$ to
	\begin{equation}\label{eq:DP-sigma1}
		\sigma_1^2 = \frac{C}{\gdpmu_1^2}\Big( \frac{1}{n} + \frac{1}{K-1} \Big)^2\,,
	\end{equation}
	where
	respectively for the cases where $\hat\tau_\mathrm{DP} = \hat\tau_\mathrm{DP-G}$, $\hat\tau_\mathrm{DP} = \hat\tau_\mathrm{DP-IPW}$ and $\hat\tau_\mathrm{DP} = \hat\tau_\mathrm{DP-AIPW}$:
	\ifcondencedmath
    \begin{equation}\label{eq:constant_C}
    \begin{aligned}
    C_\mathrm{DP\text{-}G} &= 16\Bmu^2, \quad 
    C_\mathrm{DP\text{-}IPW} = 4\Bmu^2 \Bpi^2, \\
    C_\mathrm{DP\text{-}AIPW} &= 16\Bmu^2 (1+\Bpi)^2.
    \end{aligned}
    \end{equation}
    \else
    \begin{equation}\label{eq:constant_C}
	    C_\mathrm{DP-G} =  16\Bmu^2 \,,\quad
	        C_\mathrm{DP\text{-}IPW} = 4\Bmu^2\Bpi^2 \,,\quad
	        \text{and}
	        \quad
	        C_\mathrm{DP\text{-}AIPW} =16\Bmu^2\left(1+\Bpi\right)^2 \,.
	\end{equation}
    \fi
	Then, releasing the private ATE estimator $\hat\tau_\mathrm{DP}$ satisfies $\gdpmu_1$-GDP. 
	Furthermore, if we set $\sigma_2^2$ to: 
    \begin{equation*}
        \sigma_2^2= \frac{2Cn}{\gdpmu_2^2(n-1)}\Big(\frac{1}{n}+\frac{1}{K-1}+ \sqrt{\frac{1}{n}+\frac{1}{K-1}}  \Big)^2 \,,
    \end{equation*}
    then releasing $\hat\tau_\mathrm{DP}$ and the variance estimate defined in \Cref{eq:variance} satisfies $\sqrt{\gdpmu_1^2 + \gdpmu_2^2}$-GDP.
\end{theorem}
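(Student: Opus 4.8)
The plan is to apply the Gaussian mechanism (\Cref{lem:gaussian-mech-dp}) separately to $\hat\tau(\cD)=\tfrac1n\sum_i\hat\Gamma_i$ and to the inner standard‑deviation term $s(\cD)\eqdef\sqrt{\tfrac1{n-1}\sum_{i}\big(\hat\Gamma_i-\hat\tau\big)^2}$ of $\hat V_\mathrm{DP}$, and then to stitch the two guarantees together using post‑processing immunity and GDP composition. Under \Cref{asm:boundedness} everything reduces to two sensitivity bounds. First, $|\hat\tau(\cD)-\hat\tau(\cD')|\le\Delta_1\eqdef\sqrt C\big(\tfrac1n+\tfrac1{K-1}\big)$: by \Cref{lem:gaussian-mech-dp}, $\hat\tau_\mathrm{DP}=\hat\tau+\cN(0,\Delta_1^2/\gdpmu_1^2)$ is then $\gdpmu_1$-GDP, and $\Delta_1^2/\gdpmu_1^2$ is exactly the $\sigma_1^2$ of \eqref{eq:DP-sigma1}. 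Second, $|s(\cD)-s(\cD')|\le\Delta_2\eqdef\sqrt{\tfrac{2Cn}{n-1}}\,(t+\sqrt t)$ with $t\eqdef\tfrac1n+\tfrac1{K-1}$, so that $s(\cD)+\cN(0,\Delta_2^2/\gdpmu_2^2)$ is $\gdpmu_2$-GDP with $\Delta_2^2/\gdpmu_2^2$ equal to the stated $\sigma_2^2$. Since $\hat V_\mathrm{DP}$ is obtained from $s(\cD)+\cN(0,\sigma_2^2)$ by squaring and adding a \emph{data-independent} constant, it inherits the $\gdpmu_2$-GDP guarantee by post‑processing; and since the two added noises are independent, GDP composition \citep{Dong22GDP} yields that releasing the pair $(\hat\tau_\mathrm{DP},\hat V_\mathrm{DP})$ is $\sqrt{\gdpmu_1^2+\gdpmu_2^2}$-GDP.

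For the first sensitivity bound, I would fix neighbors $\cD\sim\cD'$ differing in the record of one individual $j$ (so $j\in\cI_{k(j)}$ in both, the partition being data‑independent) and set $w_i\eqdef\hat\Gamma_i(\cD)-\hat\Gamma_i(\cD')$. The key structural fact is that the ensemble nuisances $\hat\mu_a(i)$, $1/\hat\pi_1(i)$ and $1/(1-\hat\pi_0(i))$ entering $\hat\Gamma_i$ aggregate only the models trained on folds $\neq k(i)$. Hence $w_i=0$ whenever $k(i)=k(j)$ and $i\neq j$ (nothing $\hat\Gamma_i$ depends on has changed); for $i=j$ the nuisances are unchanged while $(X_j,A_j,Y_j)$ is replaced, and \Cref{asm:boundedness} bounds $|\hat\Gamma_j|$ (for AIPW, $|\hat\Gamma_j|\le 2\Bmu(1+\Bpi)$), giving $|w_j|\le\sqrt C$; and for $k(i)\neq k(j)$ only the single fold-$k(j)$ model changes, entering the relevant average with weight $\tfrac1{K-1}$, so $|w_i|\le\tfrac{\sqrt C}{K-1}$. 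This per‑coordinate bound is exactly what pins down the three values of $C$ in \eqref{eq:constant_C}: the G‑Formula score is linear in $\hat\mu_a$, the IPW score is linear in the inverse‑propensity averages, and the AIPW score requires expanding the bilinear term $\tfrac{A_i}{\hat\pi_1(i)}(Y_i-\hat\mu_1(i))$ and using $1/\hat\pi_1(i)\in[1,\Bpi]$ together with $|Y_i-\hat\mu_a(i)|\le 2\Bmu$ (all from \Cref{asm:boundedness}). Summing then gives $|\hat\tau(\cD)-\hat\tau(\cD')|=\tfrac1n|\sum_i w_i|\le\tfrac1n\big(\sqrt C+(n-|\cI_{k(j)}|)\tfrac{\sqrt C}{K-1}\big)\le\Delta_1$.

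For the second bound, I would write $v(\cD)\eqdef\big(\hat\Gamma_i(\cD)-\hat\tau(\cD)\big)_{i\in[n]}$ so that $s(\cD)=\tfrac1{\sqrt{n-1}}\norm{v(\cD)}_2$; the reverse triangle inequality gives $|s(\cD)-s(\cD')|\le\tfrac1{\sqrt{n-1}}\norm{v(\cD)-v(\cD')}_2$, and $v(\cD)-v(\cD')=w-\bar w\,\bfone$ where $\bar w\eqdef\hat\tau(\cD)-\hat\tau(\cD')=\tfrac1n\sum_i w_i$ satisfies $|\bar w|\le\Delta_1$ by the first part. The per‑coordinate bounds give $\norm{w}_\infty\le\sqrt C$ and $\norm{w}_1\le\sqrt C\big(1+\tfrac n{K-1}\big)=\sqrt C\,nt$, hence $\norm{w}_2^2\le\norm{w}_\infty\norm{w}_1\le Cnt$; combining via $\norm{v(\cD)-v(\cD')}_2^2\le 2\norm{w}_2^2+2n\bar w^2\le 2Cnt+2n\Delta_1^2=2Cnt(1+t)\le 2Cn(t+\sqrt t)^2$ (using $1+t\le(1+\sqrt t)^2$) yields $\Delta_2$. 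Feeding $\Delta_1,\Delta_2$ into \Cref{lem:gaussian-mech-dp} and applying post‑processing and composition as in the first paragraph finishes the argument.

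I expect the main obstacle to be the AIPW per‑coordinate bound $|w_i|\le\tfrac{\sqrt C}{K-1}$: unlike the G‑Formula and IPW scores, the AIPW score is bilinear in the perturbed outcome and inverse‑propensity ensembles, so one has to carefully expand $\hat\Gamma_i(\cD)-\hat\Gamma_i(\cD')$, control the cross terms, and check the constant stays below $16\Bmu^2(1+\Bpi)^2$. The remaining pieces---bounding $|\hat\Gamma_j|$ in each case, the interpolation $\norm{w}_2^2\le\norm{w}_\infty\norm{w}_1$, and the post‑processing/composition bookkeeping for GDP---should be routine.
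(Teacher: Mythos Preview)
Your proposal is correct and follows essentially the same approach as the paper's proof: the paper's \Cref{prop:sensitivity_unified} carries out exactly your per-coordinate decomposition (the cases $i=j$, $i\in\cI_{k(j)}\setminus\{j\}$, and $k(i)\neq k(j)$) and the reverse-triangle-inequality bound on $s$, while \Cref{lem:g-formula,lem:ipw,lem:aipw_means} verify the constants $\sqrt C$ and $\sqrt C/(K-1)$ for each estimator, including the bilinear AIPW case you flag as the main obstacle. Your use of $\norm{w}_2^2\le\norm{w}_\infty\norm{w}_1$ is a slightly cleaner variant of the paper's step (which bounds $(\hat\Gamma_i-\hat\Gamma_i')^2\le 2M|\hat\Gamma_i-\hat\Gamma_i'|$ via the uniform score bound $M$), but both routes land on the same $\Delta_2$.
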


The privacy guarantees hold due to the bounded sensitivity of the folding technique. The proofs are in the appendix and follow \Cref{prop:sensitivity_unified} and \Cref{lem:g-formula,lem:ipw,lem:aipw_means}.
A few remarks about this theorem are in order. First, \Cref{eq:DP-sigma1} makes explicit how the privacy noise level $\sigma_1^2$ depends on the number of data points $n$, the number of folds $K$, and the bounds $\Bmu$ and $\Bpi$ on the outcomes and imposed overlap, through the constant $C$.
Our folding and ensembling scheme ensures that the privacy cost (i.e., the variance $\sigma_1^2$ of the added noise) of achieving a given privacy level $\gdpmu_1$ for the ATE estimate decreases with both $n$ and $K$. Since in practice $K$ is typically much smaller than $n$ to preserve enough data in each fold for reliable nuisance estimation, the noise variance is of order $\frac{1}{K^2}$.
Second, the additional noise $\sigma_2^2$ needed for the variance estimate is much smaller, 
so this variance can be released with only a minor additional privacy cost.
Importantly, unlike prior work enforcing privacy by privatizing nuisance model estimators \citep{lee2019privacyIPW,ohnishi2024covariateBalancing}, our guarantees do not require any structural assumption on these models beyond boundedness.

\looseness=-1 The constant $C$ in \Cref{eq:constant_C} is the only component of the privacy bound that depends on the specific choice of ATE estimator.
This unified analysis underscores the flexibility of our approach.
In settings with large overlap $\eta$ (small $\Bpi$)---resembling RCTs---the privacy cost introduced by the use of propensity models in IPW and AIPW estimators remains modest.
However, in small-overlap regimes, which are challenging even in non-private causal inference, the required noise for privacy increases substantially for these estimators. Therefore, the G-Formula estimator, whose privacy cost is unaffected by overlap, may be preferable in such scenarios.


\textbf{Utility analysis.} We now analyze the utility of our private ATE estimators, measured by their variance, which captures the privacy-utility trade-off.

\begin{theorem}[Utility analysis]
\label{thm:utility_general}
	Assume that \Cref{asm:SUTVA,asm:unconfoundedness,asm:overlap,asm:boundedness} hold.
	Suppose the (non-parametric) estimators $\hat\mu_1^{(k)}, \hat\mu_0^{(k)}, \hat\pi^{(k)}$ are all point-wise consistent almost everywhere and the estimators satisfy 
    $\esp{\left(\hat\mu_a(i)-\mu_a(X_i)\right)^2}=o(n^{-1})$ if $\hat\tau_\mathrm{DP}=\hat\tau_\mathrm{DP-G}$, $\esp{\left(\hat\pi(i)-\pi(X_i)\right)^2}=o(n^{-1})$ 
    if $\hat\tau_\mathrm{DP}=\hat\tau_\mathrm{DP-IPW}$, and $\esp{\left(\hat\pi(i)-\pi(X_i)\right)^2}\esp{\left(\hat\mu_a(i)-\mu_a(X_i)\right)^2}=o(n^{-1})$ if $\hat\tau_\mathrm{DP}=\hat\tau_\mathrm{DP-AIPW}$. 
	Then, $\hat\tau_\mathrm{DP-G}$, $\hat\tau_\mathrm{DP-IPW}$ and $\hat\tau_\mathrm{DP-AIPW}$ are asymptotically unbiased and, for $\sigma_1^2,\sigma_2^2$ and $C$ as in \Cref{thm:privacy_unified}, we have asymptotic variance
	\ifcondencedmath
    \begin{multline*}
	    \E\left[\hat V_\mathrm{DP}\right] = V^\star + \frac{C}{\gdpmu_1^2}\left(\frac{n}{(K-1)^2} + \frac{2}{(K-1)} \right) + \\ \frac{2C}{\gdpmu_2^2}\Big(\frac{1}{K-1}+\sqrt{\frac{1}{K-1}}\Big)^2 + o\left(1\right)\,,
	\end{multline*}
    \else
    \begin{equation*}
	    \hat V_\mathrm{DP}= \frac{V^\star}{n} + \frac{C}{\gdpmu_1^2}\frac{1}{(K-1)^2} + \frac{4.66C}{\gdpmu_2^2}\frac{1}{n-1}\left(\frac{1}{K-1}+\sqrt{\frac{1}{K-1}}\right)^2 + o\left(n^{-1}\right)\,,
	\end{equation*}
    \fi
	where $V^\star$ denotes $V^\star_\mathrm{G}$, $V^\star_\mathrm{IPW}$ or $V^\star_\mathrm{AIPW}$ when $\hat\tau_\mathrm{DP}$ corresponds to $\hat\tau_\mathrm{DP-G}$, $\hat\tau_\mathrm{DP-IPW}$, or $\hat\tau_\mathrm{DP-AIPW}$, respectively,
	with $V^\star_\mathrm{G},V^\star_\mathrm{IPW},V^\star_\mathrm{AIPW}$ defined in \ifcondencedmath\Cref{eq:var-g-formula,eq:var-ipw,eq:var-aipw}\else\Cref{eq:var-g-formula-ipw,eq:var-aipw}\fi.
\end{theorem}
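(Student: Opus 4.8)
The plan is to reduce the analysis to the classical cross-fitting theory and then account for the two independent Gaussian perturbations in closed form. First I would show that the ensemble nuisance estimators $\hat\mu_a(i)$, $\hat\pi_1(i)$ and $1-\hat\pi_0(i)$ inherit the needed properties from their fold-wise components: each is an arithmetic (resp.\ harmonic) mean of $K-1$ point-wise-consistent estimators, and, since \Cref{asm:overlap,asm:boundedness} keep the inverse maps $x\mapsto x^{-1}$ Lipschitz on the region where the estimates take values, the ensemble estimators are again point-wise consistent a.e.; the $o(n^{-1})$ mean-squared-error rates are assumed directly on the ensemble quantities. The key structural point is that, for each $i$, the ensemble nuisances entering $\hat\Gamma_i$ are computed only from folds other than $k(i)$, hence are independent of $(X_i,A_i,Y_i)$ --- exactly the independence structure exploited by $K$-fold cross-fitting. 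I would then invoke the standard argument of \citet{chernozhukov2018double,wager2024causal}: decompose $\sqrt n(\hat\tau-\tau)=\frac1{\sqrt n}\sum_i(\Gamma_i^\star-\tau)+\frac1{\sqrt n}\sum_i(\hat\Gamma_i-\Gamma_i^\star)$, where $\Gamma_i^\star$ is the oracle score built with the true $\mu_0,\mu_1,\pi$ (so $\E[\Gamma_i^\star]=\tau$ and $\E[(\Gamma_i^\star-\tau)^2]=V^\star$ for the matching $V^\star$); the first term obeys a CLT, and the second term is $o_P(1)$ --- for AIPW the Neyman-orthogonal form cancels the first-order nuisance-error term and the product rate condition handles the remainder, while for the G-Formula and IPW the single relevant $o(n^{-1})$ mean-squared-error condition bounds the averaged remainder by $o_P(n^{-1/2})$ via Cauchy--Schwarz. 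This yields $\sqrt n(\hat\tau-\tau)\Rightarrow\cN(0,V^\star)$, hence asymptotic unbiasedness of $\hat\tau$ and, since the added noise is independent and centred, of $\hat\tau_\mathrm{DP}=\hat\tau+\cN(0,\sigma_1^2)$.

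Next I would compute $\E[\hat V_\mathrm{DP}]$ directly. With $\hat V\eqdef\frac1{n-1}\sum_i(\hat\Gamma_i-\hat\tau)^2\ge 0$ and $G_2\sim\cN(0,\sigma_2^2)$ independent of the data, independence and $\E[G_2]=0$ give $\E[(\sqrt{\hat V}+G_2)^2]=\E[\hat V]+\sigma_2^2$, so $\E[\hat V_\mathrm{DP}]=\E[\hat V]+\sigma_2^2+n\sigma_1^2$ by the definition in \Cref{eq:variance}. It remains to show $\E[\hat V]=V^\star+o(1)$: using $\sum_i(\hat\Gamma_i-\hat\tau)^2=\sum_i(\hat\Gamma_i-\tau)^2-n(\hat\tau-\tau)^2$ and $\E[(\hat\tau-\tau)^2]=O(1/n)$ from the previous step, the second term contributes $O(1/n)$ after dividing by $n-1$; the first term is handled by the decomposition $\hat\Gamma_i=\Gamma_i^\star+(\hat\Gamma_i-\Gamma_i^\star)$ with $\E[(\hat\Gamma_i-\Gamma_i^\star)^2]\to 0$ (point-wise consistency a.e.\ upgraded to $L^2$ by bounded convergence under \Cref{asm:boundedness}) and Cauchy--Schwarz for the cross terms, together with $\frac n{n-1}\to 1$ and $\E[(\Gamma_i^\star-\tau)^2]=V^\star$. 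Finally I would substitute the explicit $\sigma_1^2,\sigma_2^2$ of \Cref{thm:privacy_unified}: expand $n\sigma_1^2=\frac{C}{\gdpmu_1^2}(\frac n{(K-1)^2}+\frac2{K-1}+\frac1n)$, let $n\to\infty$ in $\sigma_2^2$ (where $\frac n{n-1}\to1$ and the $\frac1n$ terms inside the square vanish) so that $\sigma_2^2\to\frac{2C}{\gdpmu_2^2}(\frac1{K-1}+\sqrt{\frac1{K-1}})^2$, and collect terms, absorbing the $O(1/n)$ residuals into $o(1)$, to obtain the stated expression.

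The hard part is the first paragraph --- verifying that the leave-one-fold-out ensembling does not introduce extra cross-fold covariance that would inflate the asymptotic variance above $V^\star$. Unlike pure sample splitting, all scores in a given fold share the same nuisance model, and the model for fold $k$ is trained on data that also serves as evaluation points for the scores of the other folds, so the $\hat\Gamma_i$ are not mutually independent; the standard resolution is that this coupling enters only through nuisance-estimation error, which is negligible at the assumed rates, so the leading behaviour is that of the i.i.d.\ oracle scores $\Gamma_i^\star$. I would lean on the cited cross-fitting results to keep this step short, checking only the one new ingredient --- that replacing a single fitted $\hat\pi$ by the harmonic mean of $K-1$ fitted propensity scores is compatible with those arguments, which holds because \Cref{asm:overlap,asm:boundedness} keep the relevant inverse maps Lipschitz on the set where the estimates live, so the ensemble error is controlled by the average of the fold-wise errors.
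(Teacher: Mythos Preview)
Your proposal is correct and follows exactly the approach the paper takes: the paper's own proof is a one-line appeal to classical cross-fitting results \citep[e.g., Theorem 3.2]{wager2024causal} ``adapted to our setting,'' which is precisely the reduction you carry out in your first paragraph, and your explicit computation of $\E[\hat V_\mathrm{DP}]=\E[\hat V]+\sigma_2^2+n\sigma_1^2$ together with the substitution of the $\sigma_1^2,\sigma_2^2$ from \Cref{thm:privacy_unified} fills in the arithmetic the paper leaves implicit. If anything, you are more careful than the paper, which does not spell out the harmonic-mean/Lipschitz point or the $\E[\hat V]\to V^\star$ step.
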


We provide the proof of the theorem in \Cref{app:unified_analysis_assumptotics}. 
This utility theorem shows that when $n$ is large, and $K$ is chosen on the order of $\sqrt n$, the variance overhead from privacy remains of the same order as the oracle variance of the non-private estimator. Unlike previous approaches that pay a cost for privatizing high-dimensional nuisance models, our result does not depend on the covariate dimension or bounds on the covariates: since we privatize only scalar quantities, the error does not scale with dimension.

\Cref{thm:utility_general} relies on convergence rate assumptions analogous to those in the non-private setting~\citep[see e.g. Equation~3.6 in][]{wager2024causal}. 
In the non-private setting, nuisance estimators are typically trained on folds containing half of the dataset using cross-fitting. In contrast, we use a larger number of smaller folds, which can affect the convergence rate. Nevertheless, by aggregating more nuisance estimates, we often mitigate this effect, as discussed in \Cref{app:exp:effect-k}. 

It is also worth noting that, in some cases, the assumptions of \Cref{thm:utility_general} do not hold even in the non-private setting. To increase applicability, confidence intervals are often reported using more flexible methods, such as the bootstrap or infinitesimal jackknife, rather than by relying on asymptotic-variance-based results of the type given by \Cref{thm:utility_general}. In \Cref{sec:private_CIs}, we present alternative methods for constructing private confidence intervals, demonstrating that our framework can readily accommodate these flexible techniques from the non-private setting.


\begin{remark}[Model sensitivity] 
Our privacy analysis assumes \textit{worst-case} model sensitivities, allowing a single data point in $\cI_k$ to affect the output of $\big(\hat \pi^{(k)},\hat\mu_1^{(k)},\hat\mu_0^{(k)}\big)$ arbitrarily (within the bounds of Assumption~\ref{asm:boundedness}). While this is conservative, it ensures broad applicability. Nonetheless, our analysis can directly incorporate smaller sensitivities—potentially as low as $O(1/|\cI_k|)$—when using models and training algorithms for which such bounds are known \citep[see e.g.,][]{10.1162/153244302760200704,chaudhuri2011}, thereby enhancing the privacy-utility trade-off.
\end{remark}

\begin{remark}\label{remark:direct_cate}
Our approach and analysis also extend to estimators targeting the CATE function $\mu_1(x)-\mu_0(x)$, such as causal forests~\citep{athey2019estimating}. Within our framework, these can be treated—in a somewhat ad hoc way—as a special case of the G-Formula by setting $\hat\mu_0 = 0$ and $\hat\mu_1$ to the CATE estimator.
\end{remark}

\subsection{Private Confidence Intervals}
\label{sec:private_CIs}

Reporting confidence intervals (CIs) is crucial in causal inference, especially for high-stakes decisions that require quantified uncertainty. A straightforward approach is to use the asymptotic normality of $\hat \tau_\mathrm{DP}$, which holds under the assumptions of \Cref{thm:utility_general}, yielding a private, asymptotically valid $1-\alpha$ coverage CI as in \Cref{eq:private_CI_p} with variance computed privately (see \Cref{app:CI_var}). However, such intervals may be overconfident even in the non-private setting, since the assumptions of \Cref{thm:utility_general} might not fully hold. In practice, CIs in causal inference are often obtained via bootstrap methods \citep{efron1992bootstrap,efron1997improvements,davison1997bootstrap}, or through built-in error estimation, as in the causal forest (grf) package \citep{athey2019estimating}, which combines pointwise asymptotic normality of the forest-based conditional average treatment effect (CATE) estimator with variance estimates from the infinitesimal jackknife.

To construct valid confidence intervals, we estimate the CATE at a point $X_i$ using the score $\hat\Gamma_i$, and then form a local coverage interval of the form $[\hat\Gamma_{i,-},\hat\Gamma_{i,+}]$. This can be done either by estimating the asymptotic variance $\hat V:\cX \to \R$, under which $\hat\Gamma_i \approx \cN(\mathrm{CATE}(X_i), \hat V(X_i))$, or via a bootstrap procedure. We rely on the same folds as in \Cref{sec:DP_estimators} and proceed as follows.
\begin{enumerate}
    \item \textbf{Lower- and upper-bound estimation.}
    Learn $K$ sets of lower and upper-CATEs functions $(\hat\Gamma_{-}^{(k)},\hat\Gamma_{+}^{(k)})_{k\in[K]}$, where $\hat\Gamma_{\pm}^{(k)}$ are trained on $\cI_k$.
    \item \textbf{Aggregate.} For each patient $i\in[n]$, compute $\hat\Gamma_{i,\pm} = \textstyle\frac{1}{K-1}\sum_{k\in[K]\setminus\set{k(i)}} \hat\Gamma_{\pm}^{(k)}(i)$.
    \item \textbf{Privatize.} Output the confidence interval $[\hat\tau_-,\hat \tau_+]$ where 
    $\hat\tau_{\pm}\!=\!\frac{1}{n}\sum_{i=1}^n\!\hat\Gamma_{i,\pm}\! +\! \cN(0,\sigma_1^2)\pm c(\sigma_1+\frac{\Bmu}{2\sqrt n})$.
\end{enumerate}
We propose two strategies to learn $\hat\Gamma_{\pm}^{(k)}$ in step 1. We give an overview below and defer details to \Cref{app:details_CIs}.

\textbf{Bootstrap.}
Our first strategy applies bootstrap within each fold.
Let $\hat\Gamma_{i}^{(k)}$ denote the score for patient $i$ obtained from fold $\cI_k$. For a given number $R$ of bootstrap replications, we compute $\hat\Gamma_{i,b}^{(k)}$ for $b \in [R]$ by re-estimating the score on bootstrap samples drawn from $\cI_k$. The interval bounds $\hat\Gamma_{i,-}^{(k)}$ and $\hat\Gamma_{i,+}^{(k)}$ are then defined as the $\alpha/2$ and $1-\alpha/2$ quantiles of the debiased bootstrap distribution $(\hat\Gamma_{i,b}^{(k)}-\mathrm{median}(\hat\Gamma_{i,r}^{(k)})_r + \hat\Gamma_{i}^{(k)})_{b\in[R]}$.

\textbf{Pointwise variance.}
The second strategy, inspired by causal forests and ideally suited to them, relies on an estimate $\hat V^{(k)}_i$ of the asymptotic variance of $\hat\Gamma_{i}^{(k)}$. We construct the interval using $\hat\Gamma_{i,\pm}^{(k)} = \hat\Gamma_{i}^{(k)}\pm \Phi^{-1}(1-\alpha/2) \sqrt{\hat V^{(k)}}$, where $\Phi^{-1}$ denotes the inverse Gaussian cdf. This approach mimics the internal variance-based procedure of causal forests, but in a privacy-preserving way. Its validity relies on the pointwise asymptotic normality of the scores $\hat\Gamma_{i}^{(k)}$ around the true CATE at $X_i$.

\begin{theorem}\label{thm:CIs}
    Assume that $\hat\Gamma_{i,\pm}^{(k)}$ are bounded by $2B_\mu$ and that $\sigma_1$ is as in \Cref{eq:DP-sigma1} for $C=C_\mathrm{DP-G}$. Then $[\hat\tau_-,\hat \tau_+]$ is $\sqrt{2}\gdpmu$-GDP.
    Furthermore, if each $[\hat\Gamma_{\pm}^{(k)}(i)]$ are $1-\alpha/(nK)$ valid coverage intervals for the true CATE computed at $X_i$, then $[\hat\tau_\pm]$ is a valid $1-\alpha-\beta$ coverage interval for the ATE, if $c$ is the $1-\frac\beta2$ Gaussian quantile.
\end{theorem}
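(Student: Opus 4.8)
\textbf{Proof proposal for Theorem~\ref{thm:CIs}.}

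The plan is to treat the two claims separately: the privacy statement follows from a sensitivity computation plus the Gaussian mechanism and composition (exactly as in the proof of \Cref{thm:privacy_unified} for the G-Formula case), while the coverage statement is a union-bound argument over the $n$ pointwise intervals combined with the extra Gaussian perturbation added in step~3.

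For the privacy claim, I would first observe that $\hat\tau_- $ and $\hat\tau_+$ are each of the form $\frac1n\sum_i \hat\Gamma_{i,\pm} + \cN(0,\sigma_1^2)$ shifted by a \emph{data-independent} constant $\pm c(\sigma_1 + \Bmu/(2\sqrt n))$; since constant shifts do not affect privacy, it suffices to bound the sensitivity of $\frac1n\sum_i \hat\Gamma_{i,\pm}$. Here $\hat\Gamma_{i,\pm}$ is an arithmetic mean over the $K-1$ models not trained on $i$, and by assumption each $\hat\Gamma_{\pm}^{(k)}$ is bounded by $2\Bmu$ in absolute value. This is structurally identical to the G-Formula score $\hat\mu_1(i)-\hat\mu_0(i)$, whose sensitivity bound underlies $C_\mathrm{DP-G}=16\Bmu^2$: replacing one data point changes the trained model on the one fold containing it (contributing a change of $O(\Bmu/(K-1))$ to roughly $n/K$ of the summands, i.e.\ $O(\Bmu/n)$ overall after dividing by $n$, to the indices in that fold through $k(i)$) and also changes which points use that fold's model. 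The net sensitivity of each of $\frac1n\sum_i\hat\Gamma_{i,-}$ and $\frac1n\sum_i\hat\Gamma_{i,+}$ is at most $\sqrt{C_\mathrm{DP-G}}\,(\frac1n + \frac1{K-1}) = 4\Bmu(\frac1n+\frac1{K-1})$, so with $\sigma_1^2 = \frac{C_\mathrm{DP-G}}{\gdpmu^2}(\frac1n+\frac1{K-1})^2$ each release is $\gdpmu$-GDP by \Cref{lem:gaussian-mech-dp}. Since we use independent noise for $\hat\tau_-$ and $\hat\tau_+$ (and, crucially, the same neighboring-dataset change is reflected in both), releasing the pair $[\hat\tau_-,\hat\tau_+]$ composes to $\sqrt{2}\gdpmu$-GDP by the composition property of GDP \citep{Dong22GDP}. (If one wanted to be more careful one could argue the joint sensitivity vector has $\ell_2$ norm $\sqrt{2}\cdot 4\Bmu(\tfrac1n+\tfrac1{K-1})$ and invoke the multivariate Gaussian mechanism directly, giving the same $\sqrt2\gdpmu$.)

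For the coverage claim, the idea is that $[\hat\Gamma_{\pm}^{(k)}(i)]$ is, by hypothesis, a $1-\alpha/(nK)$ coverage interval for $\mathrm{CATE}(X_i)$; averaging the $K-1$ such intervals over $k\neq k(i)$ and then over $i$, the bounds $\hat\tau_\pm^{0}\eqdef\frac1n\sum_i\hat\Gamma_{i,\pm}$ (before the Gaussian step) sandwich $\frac1n\sum_i\mathrm{CATE}(X_i)$ — which concentrates on $\tau$ — with probability at least $1-\alpha$ by a union bound over the at most $nK$ relevant pointwise intervals (the $nK$ appears precisely so the union bound closes at level $\alpha$). Then the step~3 perturbation widens each endpoint by $\cN(0,\sigma_1^2)$ plus the deterministic inflation $c(\sigma_1+\Bmu/(2\sqrt n))$; choosing $c$ to be the $1-\beta/2$ standard Gaussian quantile ensures that, except on an event of probability $\beta$, the added noise $\cN(0,\sigma_1^2)$ is dominated in magnitude by $c\sigma_1$ (so the privatized lower bound lies below $\hat\tau_-^0$ and the privatized upper bound above $\hat\tau_+^0$), and the $\Bmu/(2\sqrt n)$ term absorbs the sampling fluctuation of $\frac1n\sum_i\mathrm{CATE}(X_i)$ around $\tau$ (using $|\mathrm{CATE}|\le 2\Bmu$ and a Hoeffding/Chebyshev bound). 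Combining the two failure events gives coverage at least $1-\alpha-\beta$.

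The main obstacle I expect is the coverage argument rather than the privacy one: making rigorous the claim that the \emph{average} of valid pointwise CIs for $\mathrm{CATE}(X_i)$ yields a valid CI for $\tau=\E[\mathrm{CATE}(X)]$ requires care on two fronts — (i) the union bound is over randomness in the nuisance/score estimators, but the $X_i$ themselves are random, so one must condition appropriately and check that ``valid $1-\alpha/(nK)$ coverage at $X_i$'' is meant conditionally on $X_i$ (or marginally, in which case the union bound is cleaner but the interpretation shifts); and (ii) one must control the gap between $\frac1n\sum_i \mathrm{CATE}(X_i)$ and $\tau$, which is where the $\Bmu/(2\sqrt n)$ deterministic widening and the choice of $c$ as a Gaussian quantile enter, and one should confirm the constants genuinely line up (the clean statement suggests the authors fold the $O(1/\sqrt n)$ sampling error and the $O(\sigma_1)$ privacy noise into a single Gaussian-tail budget $\beta$). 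A secondary subtlety is that the deterministic shift $\pm c(\sigma_1+\Bmu/(2\sqrt n))$ must be verified to be genuinely data-independent — it is, since $\sigma_1$, $\Bmu$, $n$, $c$ are all fixed — so it neither costs privacy nor biases coverage in an uncontrolled direction; this is the point that makes the ``privatize then widen deterministically'' recipe work.
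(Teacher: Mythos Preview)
Your proposal is correct and follows essentially the same route as the paper. The privacy argument is identical: bound the sensitivity of $\frac{1}{n}\sum_i \hat\Gamma_{i,\pm}$ exactly as in the G-Formula case (the paper invokes \Cref{prop:sensitivity_unified} with $\Delta_= = 4\Bmu$ and $\Delta_{\ne} = \frac{4K\Bmu}{K-1}$), then compose the two Gaussian releases to get $\sqrt{2}\gdpmu$-GDP. The coverage argument also matches: the paper decomposes the failure probability into (a) the event that $\tau = \esp{\mathrm{CATE}(X_i)}$ deviates from $\frac{1}{n}\sum_i \mathrm{CATE}(X_i)$ plus the Gaussian noise by more than $c(\sigma_1+\Bmu/(2\sqrt n))$, bounded by $\beta$ via sub-Gaussianity, and (b) the event that $\frac{1}{n}\sum_i \mathrm{CATE}(X_i) \notin [\frac{1}{n}\sum_i \Gamma_{i,\pm}]$, bounded by $\alpha$ via the union bound over all $(i,k)$ pairs. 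The only cosmetic difference is that you split the $\beta$-event into two pieces (noise dominated by $c\sigma_1$, sampling error dominated by $c\Bmu/(2\sqrt n)$), whereas the paper handles them jointly as a single sub-Gaussian tail; both executions are equivalent. Your flagged concerns about conditioning on the $X_i$ and about whether the constant $\Bmu/(2\sqrt n)$ is exactly right are legitimate and apply equally to the paper's own proof, which is terse on these points.
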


\looseness=-1 The proof of the theorem is in \Cref{app:proof_theorem_CIs}. To ensure privacy in \Cref{thm:CIs}, it is enough to clip the scores to the outcome range, which preserves statistical validity. Coverage validity holds asymptotically for both procedures: under standard bootstrap assumptions \citep{efron1992bootstrap} for the first method, and when causal forests are used for the second.

\subsection{Meta-Analysis of DP ATEs}

Meta-analyses are statistical methods for combining the estimates from $N\geq2$ independent studies to increase statistical power \citep{hunter2004methods,borenstein2021introduction}.
We extend this to \emph{private} ATE estimates.
Due to space limitations, these results can be found in \Cref{app:meta-analyses}.

\section{Experiments} \label{sec:experiments}

\subsection{Synthetic Data}

\ifcondencedmath 
\begin{figure*}[!t]
  \centering
  \includegraphics[width=\textwidth]{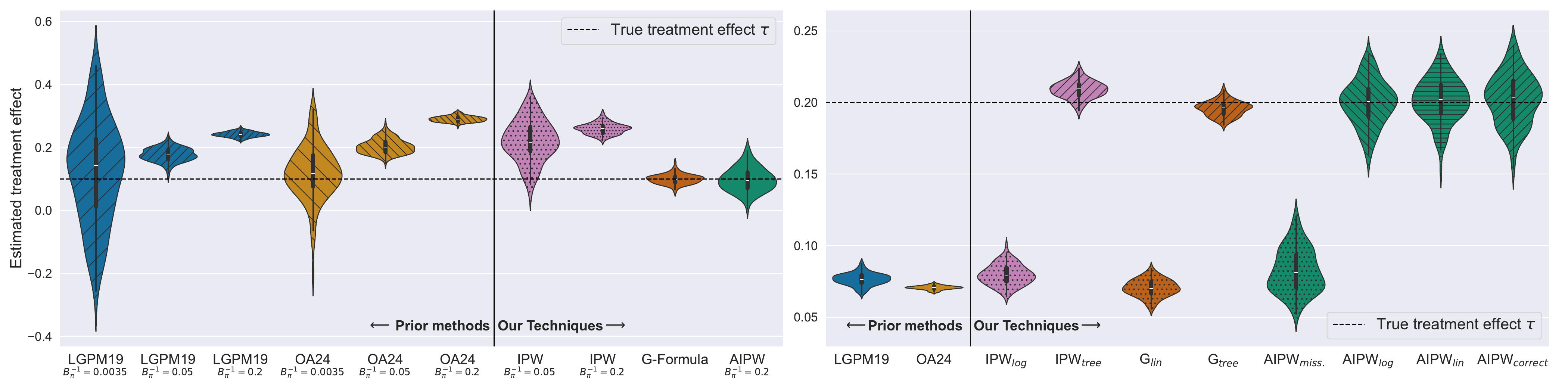}
  \caption{\textit{Left:} well-specified setting with low overlap. \textit{Right:} misspecified estimators have large bias.}
  \label{fig:experiments-plot}
\end{figure*}
\fi

We estimate the ATE on synthetic data and compare our private estimators with prior approaches \citep{lee2019privacyIPW, ohnishi2024covariateBalancing}, denoted LPGM19 and OA24, respectively. For binary outcomes (\Cref{app:exp:results}), we also compare with \cite{DBLP:journals/csda/GuhaR25} (GR25).
Covariates are sampled as $X_i \sim \cN(0,I_d)$ with $d$ up to $10$, since OA24 becomes intractable for larger $d$ or low overlap. To satisfy the bounded-norm assumption of LPGM19 and OA24, we rescale covariates so that $\|X_i\|_2 \leq 1$ for 99$\%$ of samples, clipping the rest. This gives a best-case setup for their methods. We set $\Bmu=1$, repeat each setup 100 times, and use privacy budget $1.5$-GDP ($\equiv (7.05,10^{-5})$-DP) unless otherwise noted. Further details are given in the appendix.

\textbf{Well-specified setting.}
We first study a well-specified case: logistic regression for $\pi$ and linear regression for $\mu_0,\mu_1$. IPW methods are known to suffer from high variance under limited overlap \citep{Kang2007}. To illustrate, we generate data with $\eta \approx 0.004$ (\Cref{fig:experiments-plot}, $K=200$). With $\Bpi=1/0.0035$, all IPW-based estimators show large variance; stronger clipping ($\Bpi=1/0.05$ or $1/0.2$) reduces variance but adds bias. In contrast, our G-Formula estimator is unaffected by overlap, and AIPW remains unbiased with clipping due to its outcome regression component. Additional results with larger overlap and binary outcomes (\Cref{app:exp:results}) confirm that LPGM19, OA24, and GR25 perform well, while our estimators remain competitive.

\ifcondencedmath
\else
\begin{figure}[htbp]
  \centering
   \includegraphics[width=\linewidth]{images/plot4_and_5_combined.pdf}
  \caption{The left plot depicts a well-specified setting with low overlap. The right plot shows a setting where misspecified estimators have large bias.}
  \label{fig:experiments-plot}
\end{figure}
\fi



\textbf{Misspecified setting.}
We next consider misspecification, modeling both $\pi$ and outcomes with decision trees ($K=500$). As shown in \Cref{fig:experiments-plot} (Right), prior methods relying on parametric models are highly biased. In contrast, our estimators with tree models avoid such bias. Moreover, AIPW performs well even if one model is misspecified, thanks to its double robustness property.

\textbf{Price of privacy.} 
One can see the price of privacy as requiring a larger sample size to reach the same level of evidence as in the non-private setting. \Cref{fig:sample-AZH} (left) illustrates this for the private G-Formula compared to its non-private counterpart in an experiment with binary outcomes. For a fixed sample size, the variance of the private estimator is higher. Increasing the sample size allows the private estimator to recover the same level of evidence as the non-private version.

\textbf{Effect of $K$.}
\looseness=-1 Our approach requires choosing the number of folds $K$. Appendix~\ref{app:exp:effect-k} analyzes its effect and shows that our method is quite stable with respect to $K$.



\subsection{Real Data}

We present an experiment on a real-world dataset.
We combined the NACC \citep{beekly2007national} and ADNI \citep{petersen2010alzheimer} datasets, both focused on Alzheimer's disease. After applying inclusion-exclusion criteria, the final cohort has 3.2k patients. Treatment is Acetylcholinesterase inhibitors (ChEI), and the outcome is the difference of MMSE score between baseline and 1 year after baseline. Dataset descriptions and experimental details are in \Cref{app:details_real_semi-synth}.
We evaluate two estimators: (i) the G-Formula (\Cref{sec:DP_estimators}) using regression forests to model the outcomes, and (ii) causal forests, which directly estimate the CATE, as outlined in \Cref{remark:direct_cate}. \Cref{fig:sample-AZH} (right) displays results for the non-private versions of these estimators (using $2$-fold cross-fitting and no Gaussian noise) alongside our private versions, which satisfy $1$-GDP using $K=32$ folds. 95\% confidence intervals (CIs) are computed via bootstrapping ($R=200$) for the G-Formula and using the built-in variance function for causal forests. 
We do not report (A)IPW estimators due to extremely low overlap in this dataset ($\eta < 0.05$), causing the private CIs to become unstable. Despite wider confidence intervals, the differentially private estimators still show significant results, favoring placebo over ChEI treatment.

  
  
\vspace{-1em}
\begin{figure}[h]
  \centering
    \begin{minipage}{0.53\columnwidth}
          \includegraphics[width=1\columnwidth]{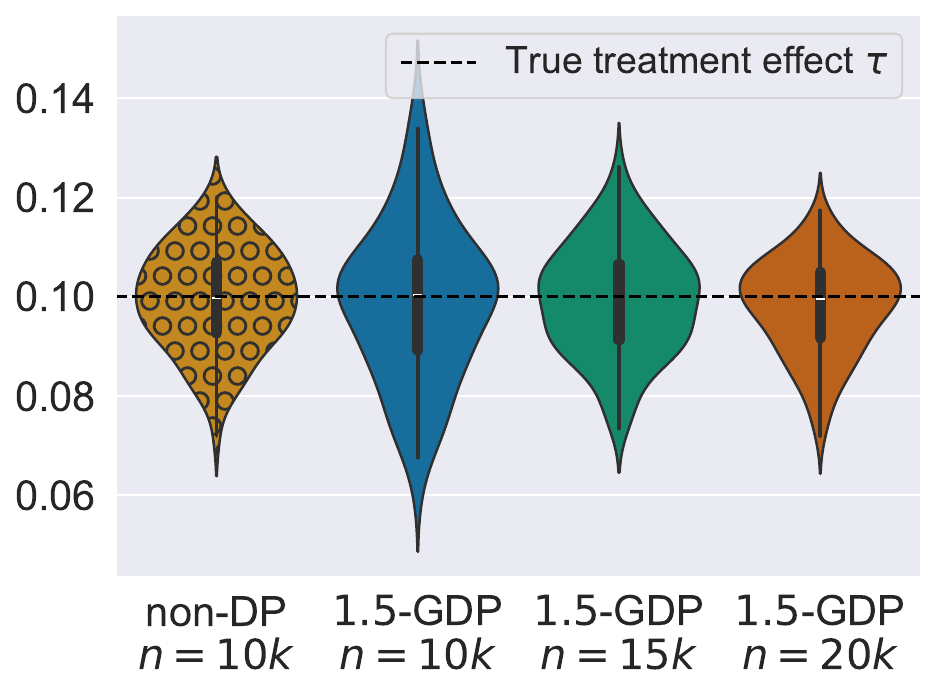}
    \end{minipage}
    \hfill
    \begin{minipage}{0.45\columnwidth}
        \includegraphics[width=1\columnwidth]{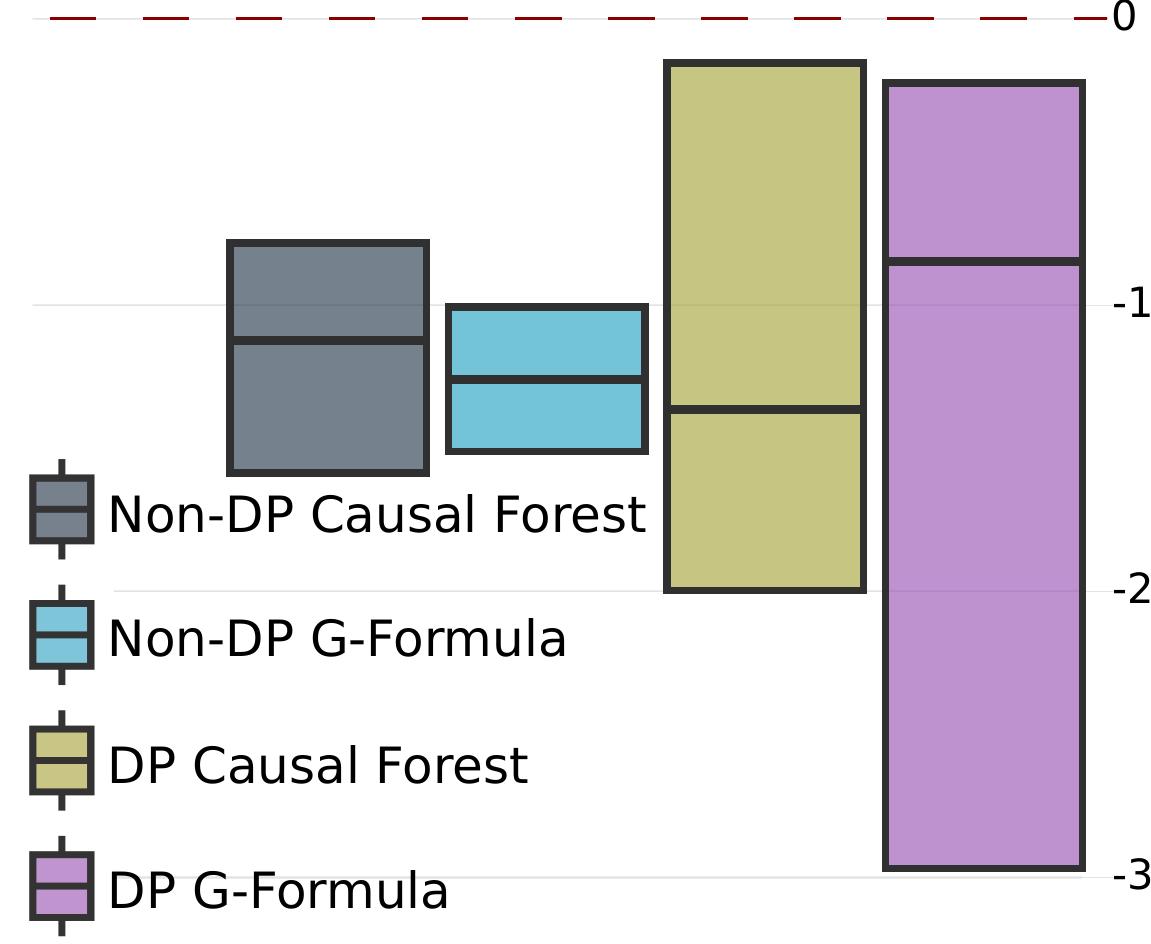}
    \end{minipage}
    \caption{\textit{Left:} Comparison of non-private G-formula with $n=10000$ with our private version for increasing $n$ and $K = n/100$. \textit{Right:} Comparison of non-private and 1-GDP ATE estimators on Alzheimer data.}
    \label{fig:sample-AZH}
\end{figure}

\vspace{-2em}
\section{Conclusion} \label{sec:conclusion}

In this work, we propose a general, \textit{plug-in and model-agnostic} approach for estimating average treatment effects under differential privacy. Unlike prior methods that rely on RCTs or parametric assumptions for nuisance models, our approach is flexible and performs well across diverse settings, as demonstrated by our experiments. This work not only supports broader adoption of differential privacy in causal inference but also suggests several promising directions for future research, including \textit{(i)} extending the framework to private estimation of heterogeneous treatment effects, and \textit{(ii)} developing methods for private generalization of causal estimates across populations, such as enabling transportability of ATEs under differing source and target populations. 

\section*{Acknowledgements}
We thank Ahmed Boughdiri and Tudor Cebere for helpful comments.
We thank Yuki Ohnishi for sharing the implementation of their technique.
The work of Christian Janos Lebeda and Aurélien Bellet is supported by grant ANR-20-CE23-0015 (Project PRIDE) and the ANR 22-PECY-0002 IPOP (Interdisciplinary Project on Privacy) project of the Cybersecurity PEPR.
Mathieu Even and Julie Josse acknowledge fundings by Theremia.

The NACC database is funded by NIA/NIH Grant U24 AG072122. NACC data are contributed by the NIA-funded ADRCs: P30 AG062429 (PI James Brewer, MD, PhD), P30 AG066468 (PI Oscar Lopez, MD), P30 AG062421 (PI Bradley Hyman, MD, PhD), P30 AG066509 (PI Thomas Grabowski, MD), P30 AG066514 (PI Mary Sano, PhD), P30 AG066530 (PI Helena Chui, MD), P30 AG066507 (PI Marilyn Albert, PhD), P30 AG066444 (PI David Holtzman, MD), P30 AG066518 (PI Lisa Silbert, MD, MCR), P30 AG066512 (PI Thomas Wisniewski, MD), P30 AG066462 (PI Scott Small, MD), P30 AG072979 (PI David Wolk, MD), P30 AG072972 (PI Charles DeCarli, MD), P30 AG072976 (PI Andrew Saykin, PsyD), P30 AG072975 (PI Julie A. Schneider, MD, MS), P30 AG072978 (PI Ann McKee, MD), P30 AG072977 (PI Robert Vassar, PhD), P30 AG066519 (PI Frank LaFerla, PhD), P30 AG062677 (PI Ronald Petersen, MD, PhD), P30 AG079280 (PI Jessica Langbaum, PhD), P30 AG062422 (PI Gil Rabinovici, MD), P30 AG066511 (PI Allan Levey, MD, PhD), P30 AG072946 (PI Linda Van Eldik, PhD), P30 AG062715 (PI Sanjay Asthana, MD, FRCP), P30 AG072973 (PI Russell Swerdlow, MD), P30 AG066506 (PI Glenn Smith, PhD, ABPP), P30 AG066508 (PI Stephen Strittmatter, MD, PhD), P30 AG066515 (PI Victor Henderson, MD, MS), P30 AG072947 (PI Suzanne Craft, PhD), P30 AG072931 (PI Henry Paulson, MD, PhD), P30 AG066546 (PI Sudha Seshadri, MD), P30 AG086401 (PI Erik Roberson, MD, PhD), P30 AG086404 (PI Gary Rosenberg, MD), P20 AG068082 (PI Angela Jefferson, PhD), P30 AG072958 (PI Heather Whitson, MD), P30 AG072959 (PI James Leverenz, MD).

Data collection and sharing for the Alzheimer's Disease Neuroimaging Initiative (ADNI) is funded by the National Institute on Aging (National Institutes of Health GrantU19AG024904). The grantee organization is the Northern California Institute for Research and Education. In the past, ADNI has also received funding from the National Institute of Biomedical Imaging and Bioengineering, the Canadian Institutes of Health Research, and private sector contributions through the Foundation for the National Institutes of Health(FNIH) including generous contributions from the following: AbbVie, Alzheimer’s Association; Alzheimer’s Drug Discovery Foundation; Araclon Biotech; BioClinica, Inc.; Biogen; Bristol-Myers Squibb Company; CereSpir, Inc.; Cogstate; Eisai Inc.; Elan Pharmaceuticals, Inc.; EliLilly and Company; EuroImmun; F. Hoffmann-La Roche Ltd and its affiliated company Genentech, Inc.; Fujirebio; GE Healthcare; IXICO Ltd.; Janssen Alzheimer Immunotherapy Research \& Development, LLC.; Johnson \& Johnson Pharmaceutical Research \& Development LLC.; Lumosity; Lundbeck; Merck \& Co., Inc.; Meso Scale Diagnostics, LLC.; NeuroRx Research; Neurotrack Technologies; Novartis Pharmaceuticals Corporation; PfizerInc.; Piramal Imaging; Servier; Takeda Pharmaceutical Company; and Transition Therapeutics.

\newpage
\bibliography{biblio.bib}
\bibliographystyle{plainnat}


\newpage
\appendix

\onecolumn






\section{Conversion from $\gdpmu$-GDP to $(\varepsilon,\delta)$-DP and GDP composition}
\label{app:gdp-to-dp}
For completeness, we discuss here the relationship between $\gdpmu$-GDP and the classical $(\varepsilon,\delta)$-DP definition, which we recall below. For further discussion on its interpretation, we refer the reader to \citep{Dwork2014a}.

\begin{definition}[$(\varepsilon, \delta)$-DP \citep{Dwork2014a}]
A randomized algorithm $\cA$ is $(\varepsilon, \delta)$-DP if for any pair of neighboring datasets $\cD\sim \cD'$, and any event $E$,
\[
    \mathbb{P}[\cA(\cD) \in E] \leq e^\varepsilon \mathbb{P}[\cA(\cD') \in E] + \delta.
\]
where the probability is taken over the randomness of $\cA$.
\end{definition}

Unlike GDP, $(\varepsilon, \delta)$-DP does not tightly handle composition. Yet, $(\varepsilon, \delta)$-DP offers more intuitive control over the privacy leakage (especially when $\delta$ is taken to be negligible) and is often referenced in legal, policy, and industry contexts as the standard formal privacy definition. It can thus be useful to convert GDP guarantees to $(\varepsilon,\delta)$-DP guarantees. 
\cite{desfontainesblog20240602} provides a helpful online tool for converting specific parameters.

\begin{lemma}[Conversion from $\gdpmu$-GDP to $(\varepsilon,\delta)$-DP \citep{Balle18, Dong22GDP}]
    \label{lem:convertion-approx-dp}
    Let $\cA$ be an algorithm satisfying $\gdpmu-$GDP. Then $\cA$ also satisfies $(\varepsilon, \delta)$-DP for any $\varepsilon > 0$ where 
    \[
        \delta = \Phi \left( - \frac{\varepsilon}{\gdpmu} + \frac{\gdpmu}{2} \right) - e^\varepsilon \Phi \left( - \frac{\varepsilon}{\gdpmu} - \frac{\gdpmu}{2} \right) \enspace ,
    \]
    where $\Phi$ is the CDF of the standard Gaussian distribution.
\end{lemma}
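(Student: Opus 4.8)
The plan is to route the claim through the trade-off function characterization of both notions of privacy, and then reduce it to computing a single convex conjugate. By \Cref{def:gdp}, the hypothesis that $\cA$ is $\gdpmu$-GDP says that for every pair of neighbors $\cD\sim\cD'$ the trade-off function $T(\cA(\cD),\cA(\cD'))$ dominates the Gaussian trade-off function $G_{\gdpmu}\eqdef T(\mathcal{N}(0,1),\mathcal{N}(\gdpmu,1))$, which has the closed form $G_{\gdpmu}(\alpha)=\Phi(\Phi^{-1}(1-\alpha)-\gdpmu)$. In parallel I would invoke the standard equivalence (Wasserman--Zhou / Kairouz--Oh--Viswanath, restated in the GDP language by \citealt{Dong22GDP}) that $\cA$ is $(\varepsilon,\delta)$-DP if and only if its trade-off function lies above the piecewise-linear function $\alpha\mapsto\max\{0,\,1-\delta-e^\varepsilon\alpha,\,e^{-\varepsilon}(1-\delta-\alpha)\}$. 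Since domination of trade-off functions is transitive, it suffices to show that $G_{\gdpmu}$ itself is $(\varepsilon,\delta)$-DP for the claimed $\delta$; the conclusion for $\cA$ is then immediate from $T(\cA(\cD),\cA(\cD'))\geq G_{\gdpmu}$.

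The core computation is to identify, for fixed $\varepsilon>0$, the smallest $\delta$ for which $G_{\gdpmu}$ stays above the supporting line of slope $-e^\varepsilon$. Writing this minimal $\delta$ as a convex conjugate, $\delta(\varepsilon)=1+G_{\gdpmu}^*(-e^\varepsilon)=1+\sup_{\alpha\in[0,1]}\{-e^\varepsilon\alpha-G_{\gdpmu}(\alpha)\}$, I would locate the maximiser by solving $G_{\gdpmu}'(\alpha)=-e^\varepsilon$. Differentiating $G_{\gdpmu}$ and using the Gaussian likelihood ratio $\phi(u-\gdpmu)/\phi(u)=\exp(\gdpmu u-\gdpmu^2/2)$ with $u=\Phi^{-1}(1-\alpha)$ collapses this equation to the linear condition $\gdpmu u-\gdpmu^2/2=\varepsilon$, so $u=\varepsilon/\gdpmu+\gdpmu/2$ and the optimal $\alpha^\star=\Phi(-\varepsilon/\gdpmu-\gdpmu/2)$. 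Substituting back gives $G_{\gdpmu}(\alpha^\star)=\Phi(\varepsilon/\gdpmu-\gdpmu/2)$, and inserting these into $\delta(\varepsilon)=1-e^\varepsilon\alpha^\star-G_{\gdpmu}(\alpha^\star)$, together with the reflection identity $1-\Phi(t)=\Phi(-t)$, yields exactly $\delta=\Phi(-\varepsilon/\gdpmu+\gdpmu/2)-e^\varepsilon\Phi(-\varepsilon/\gdpmu-\gdpmu/2)$.

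One point worth dispatching cleanly is that $G_{\gdpmu}$ is symmetric, i.e. $G_{\gdpmu}=G_{\gdpmu}^{-1}$, so the two affine constraints in the $(\varepsilon,\delta)$-DP characterization are mirror images of each other and controlling the single supporting line of slope $-e^\varepsilon$ automatically controls its reflection; this is what justifies optimizing only one constraint. The main obstacle I anticipate is purely the analytic bookkeeping of the second step: differentiating $G_{\gdpmu}$ correctly through the chain rule (the factor $-1/\phi(\Phi^{-1}(1-\alpha))$ coming from $\Phi^{-1}$), verifying that the stationary point is a genuine maximiser (convexity of $G_{\gdpmu}$ gives uniqueness), and checking that $\alpha^\star\in(0,1)$ as $\varepsilon$ ranges over $(0,\infty)$. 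None of this is deep, but it is where sign errors or a misapplied identity would creep in, so I would carry out the conjugate computation carefully and cross-check the resulting formula against the known Gaussian-mechanism privacy profile of \citet{Balle18}.
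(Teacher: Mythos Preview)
Your argument is correct and is essentially the derivation given in \citet{Dong22GDP}: reduce to the Gaussian trade-off curve via the definition of $\gdpmu$-GDP, use the dual characterization of $(\varepsilon,\delta)$-DP as a supporting-line condition on the trade-off function, and compute the conjugate $1+G_{\gdpmu}^*(-e^\varepsilon)$ by solving $G_{\gdpmu}'(\alpha)=-e^\varepsilon$ through the Gaussian likelihood ratio. The calculus you outline (chain rule through $\Phi^{-1}$, the identity $\phi(u-\gdpmu)/\phi(u)=e^{\gdpmu u-\gdpmu^2/2}$, and the reflection $1-\Phi(t)=\Phi(-t)$) lands exactly on the stated formula, and your remark about the symmetry $G_{\gdpmu}=G_{\gdpmu}^{-1}$ correctly handles why only one of the two affine constraints needs to be checked.

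For comparison with the paper: the authors do not prove this lemma at all. It is stated in the appendix purely as a citation of \citet{Balle18} and \citet{Dong22GDP}, serving as a conversion tool so that GDP guarantees can be reported in $(\varepsilon,\delta)$ form in the experiments. So your proposal goes well beyond what the paper does; there is nothing to compare against on the paper's side, and your sketch is the standard route one would take if asked to supply the proof.
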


In our experiments, we report the privacy guarantees in both $\gdpmu$-GDP and the corresponding $(\varepsilon,\delta)$-DP guarantee obtained with Lemma~\ref{lem:convertion-approx-dp} for a fixed value of $\delta$. 

We note that it is also possible to convert GDP guarantees to other popular variants of DP, such as Rényi DP \citep[Corollary 3]{Mironov17} and zCDP \citep[Proposition 1.6]{BunSteinke16}. 

We use the standard composition property of GDP in our analysis.
\begin{lemma}[Composition of GDP~{\citep[Corollary~3.3]{Dong22GDP}}]
    \label{lem:composition}
    Let $\mathcal{A}_1$ and $\mathcal{A}_2$ denote a pair of algorithms that satisfies $\gdpmu_1$-GDP and $\gdpmu_2$-GDP, respectively. 
    Then the algorithm $\mathcal{A}(D) = (\mathcal{A}_1(D), \mathcal{A}_2(D))$ that outputs the result from both algorithms satisfies $\sqrt{\gdpmu^2_1 + \gdpmu_2^2}$-GDP.
\end{lemma}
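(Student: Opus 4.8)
The plan is to work entirely with trade-off functions and to reduce the claim to two facts: that composing independent mechanisms corresponds to a \emph{tensor product} of the associated trade-off functions, and that the tensor product of two Gaussian trade-off functions is again Gaussian, with the parameters combined in quadrature. Throughout I write $G_{\gdpmu} \eqdef T(\mathcal{N}(0,1),\mathcal{N}(\gdpmu,1))$, so that Definition~\ref{def:gdp} reads: $\mathcal{A}$ is $\gdpmu$-GDP iff $T(\mathcal{A}(\cD),\mathcal{A}(\cD')) \ge G_{\gdpmu}$ for every neighboring pair $\cD \sim \cD'$.

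First I would fix an arbitrary neighboring pair $\cD \sim \cD'$ and set $P_i \eqdef \mathcal{A}_i(\cD)$, $Q_i \eqdef \mathcal{A}_i(\cD')$ for $i \in \{1,2\}$. Because $\mathcal{A}_1$ and $\mathcal{A}_2$ draw independent internal randomness, the joint output distributions factorize: $\mathcal{A}(\cD) = P_1 \otimes P_2$ and $\mathcal{A}(\cD') = Q_1 \otimes Q_2$ as product measures. The goal is therefore to lower bound $T(P_1 \otimes P_2,\, Q_1 \otimes Q_2)$. For this I invoke the composition calculus for trade-off functions: for product measures one has the identity $T(P_1\otimes P_2, Q_1 \otimes Q_2) = T(P_1,Q_1) \otimes T(P_2,Q_2)$, where the operation $\otimes$ on trade-off functions is defined, for any representations $f = T(P,Q)$ and $g = T(P',Q')$, by $f \otimes g \eqdef T(P \otimes P', Q \otimes Q')$. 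Two properties of $\otimes$ are needed: it is \emph{well-defined} (the result depends only on the trade-off functions, not on the representing measures), and it is \emph{monotone} (if $f_i \ge g_i$ pointwise then $f_1 \otimes f_2 \ge g_1 \otimes g_2$). Granting these, the $\gdpmu_i$-GDP hypotheses $T(P_i,Q_i) \ge G_{\gdpmu_i}$ yield
\[
T(P_1\otimes P_2,\, Q_1 \otimes Q_2) \;=\; T(P_1,Q_1)\otimes T(P_2,Q_2)\;\ge\; G_{\gdpmu_1}\otimes G_{\gdpmu_2}.
\]

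It then remains to compute $G_{\gdpmu_1}\otimes G_{\gdpmu_2}$. By well-definedness I may take the representing measures to be the Gaussians themselves, so $G_{\gdpmu_1}\otimes G_{\gdpmu_2} = T(\mathcal{N}(0,I_2),\,\mathcal{N}(v,I_2))$ with $v = (\gdpmu_1,\gdpmu_2)$. Here I apply the Neyman--Pearson lemma: the most powerful tests reject for large values of the likelihood ratio, which for these two Gaussians is a monotone function of $\langle v, x\rangle$. Projecting onto the unit vector $v/\norm{v}$ reduces the two-dimensional testing problem to the one-dimensional problem of distinguishing $\mathcal{N}(0,1)$ from $\mathcal{N}(\norm{v},1)$, and rotational invariance of $\mathcal{N}(0,I_2)$ guarantees that no power is lost in this reduction. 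Hence $G_{\gdpmu_1}\otimes G_{\gdpmu_2} = G_{\norm{v}} = G_{\sqrt{\gdpmu_1^2+\gdpmu_2^2}}$; chaining with the displayed inequality gives $T(\mathcal{A}(\cD),\mathcal{A}(\cD')) \ge G_{\sqrt{\gdpmu_1^2+\gdpmu_2^2}}$, and since $\cD \sim \cD'$ was arbitrary, $\mathcal{A}$ is $\sqrt{\gdpmu_1^2+\gdpmu_2^2}$-GDP.

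The main obstacle is establishing the two structural properties of $\otimes$—well-definedness and monotonicity—rather than the Gaussian computation, which is routine once Neyman--Pearson is in place. Monotonicity in particular is delicate: replacing each coordinate's trade-off function by a pointwise smaller one must not accidentally improve the composed function, and proving this requires a coupling/post-processing argument showing that a less private pair of measures can be realized from a more private one, together with the fact that $\otimes$ respects post-processing. These facts are precisely the content of the general composition theorem of \citet{Dong22GDP} (their Theorem~3.2), from which the stated corollary follows; in a fully self-contained treatment this is where the real work lies.
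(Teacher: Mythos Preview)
The paper does not supply its own proof of this lemma: it is stated with an explicit citation to \citet[Corollary~3.3]{Dong22GDP} and used as a black box. Your proposal is a correct sketch of the argument underlying that corollary---tensorizing trade-off functions, using well-definedness and monotonicity of $\otimes$ (which you rightly flag as the substantive step, deferred to Theorem~3.2 of \citet{Dong22GDP}), and computing $G_{\gdpmu_1}\otimes G_{\gdpmu_2}=G_{\sqrt{\gdpmu_1^2+\gdpmu_2^2}}$ via Neyman--Pearson and rotational invariance---so it aligns with the cited source rather than diverging from anything in the present paper.
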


\section{Details on private confidence intervals reporting}\label{app:details_CIs}

\subsection{Bootstrapping}

Bootstrapping is a resampling-based method for estimating the variability of a statistic and constructing confidence intervals (CIs) without relying heavily on parametric assumptions. The principle is to repeatedly draw samples with replacement from the observed data (each of the same size as the original sample), compute the statistic of interest for each resample, and use the resulting empirical distribution to approximate its sampling distribution. Confidence intervals are then constructed from this bootstrap distribution, commonly using percentile-based or bias-corrected methods. For the obtained CIs to be valid, certain implicit assumptions must hold: the observed sample should be representative of the population, the data should be independent and identically distributed (i.i.d.), and the sample size should be sufficiently large for the resampling distribution to approximate the true sampling distribution. When these conditions are reasonably met, the bootstrap provides a flexible, data-driven way to quantify uncertainty \citep{efron1992bootstrap,efron1997improvements,davison1997bootstrap}.

We now detail how we incorporate bootstrapping in our framework, to provide differentially private confidence intervals.
We proceed as follows, detailing each step including the estimation step.
\begin{enumerate}
    \item \textbf{Data splitting.} Split the dataset into $K$ folds $(\cI_k)_{k\in[K]}$.
    \item \textbf{Nuisance estimation.} For $k\in[K]$, learn the nuisance estimation parameters $\hat\eta^{(k)}$ on $\cI_k$ (that could be propensity scores, outcome models, CATE), to obtain the score $\Gamma_i$ obtained via $\Gamma_i = \frac{1}{K-1}\sum_{k\ne k(i)}\phi(\hat\eta^{(k)},X_i,A_i,Y_i)$ exactly as in \Cref{sec:DP_estimators}.
    \item \textbf{Bootstrapping.} We now perform bootstrapping, to account for uncertainties in nuisance models and averaging.
    Let $R$ be the number of bootstraps to perform.
    For $k\in[K]$, let $\cI_k^{(b)}$ be a randomly sampled bootstrapped dataset from $\cI_k$, from which nuisance models $\hat\eta_b^{(k)}$ are learned. 
    This leads to scores $\Gamma_i^{(b)}=\frac{1}{K-1}\sum_{k\ne k(i)}\phi(\hat\eta_b^{(k)},X_i,A_i,Y_i)$ for $b\in[R]$.
    For some $\alpha\in(0,1/2)$, select $\Gamma_{-,i}$ and $\Gamma_{+,i}$ as respectively the $\alpha/2$ and $1-\alpha/2$ quantiles of the debiased  values $\set{\Gamma_i^{(b)}+\Gamma_i-\mathrm{median}(\Gamma_i^{(b)})_b\,,\, b\in[R]}$. 
    
    \item \textbf{Privatized CI.} Report the confidence interval $[\hat\tau_-,\hat\tau_+]$, where:
    \begin{equation*}
        \hat\tau_\pm \eqdef \frac{1}{n}\sum_{i=1}^n \Gamma_{\pm,i} + \cN(0,\sigma_1^2) \pm \Phi^{-1}(1-\beta/2)(\sigma_1 + \Bmu/(2\sqrt n))\,, 
    \end{equation*}
    where $\Phi^{-1}$ is the inverse cdf of a Gaussian.
\end{enumerate}
Then, assuming that when one element between $\eta$, or $(X,A,Y)$ is changed, the value of $\phi(\eta,X,A,Y)$ changes by a maximum value of $4\Bmu$ (e.g. when $Y_i \in [-\Bmu,\Bmu]$ such that $(Y_i(1) - Y_i(0)) \in [-2\Bmu,2\Bmu]$), 
the privacy analysis of \Cref{thm:privacy_unified} can be applied for the constant of the G-Formula.

\subsection{Pointwise asymptotic normality of CATE estimates}

In causal forests \citep{athey2019estimating} for instance, the conditional average treatment effect (CATE) estimator enjoys pointwise asymptotic normality, meaning that for a fixed covariate value $x$, the estimator is approximately normally distributed around the true CATE as the sample size grows. This property allows us to construct valid confidence intervals for heterogeneous treatment effects, since the variance of the estimator can be consistently estimated using techniques such as the infinitesimal jackknife \citep{Efron03072014}. Thus, causal forests provide not only flexible, nonparametric estimates of treatment heterogeneity but also principled measures of uncertainty.

We here describe how we can obtain asymptotically valid and private CIs for the ATE, based on pointwise asymptotic normality and variance estimates.
We proceed as follows, detailing each steps as in the bootstrap method.

\begin{enumerate}
    \item \textbf{Data splitting.} Split the dataset into $K$ folds $(\cI_k)_{k\in[K]}$.
    \item \textbf{Nuisance estimation.} For $k\in[K]$, learn the nuisance estimation parameters $\hat\eta^{(k)}$ on $\cI_k$ (that could be propensity scores, outcome models, CATE), to obtain the score $\Gamma_i$ obtained via $\Gamma_i = \frac{1}{K-1}\sum_{k\ne k(i)}\phi(\hat\eta^{(k)},X_i,A_i,Y_i)$.
    For causal forests, $\hat \eta^{(k)}$ is an estimator of the CATE function $\mathrm{CATE}:x\mapsto\esp{Y_i|A_i=1,X_i=x}-\esp{Y_i|A_i=0,X_i=x}$ and $\phi(\hat\eta^{(k)},X_i,A_i,Y_i)=\hat\eta^{(k)}(X_i)$.
    \item \textbf{Pointwise variance.}
    The underlying assumption (true for instance for causal forests) is that the estimate $\phi(\hat\eta^{(k)},X_i,A_i,Y_i)$ satisfies $\phi(\hat\eta^{(k)},X_i,A_i,Y_i)\sim \cN(\mathrm{CATE}(X_i),V_k(X_i))$, asymptotically.
    The idea is thus to learn on $\cI_k$ an estimate $\hat V_k$ of the pointwise variance function $V_k$.
    Therefore, an asymptotically valid $1-\alpha$ coverage CI for $\mathrm{CATE}(X_i)$ is $[\phi(\hat\eta^{(k)},X_i,A_i,Y_i)\pm \Phi^{-1}(1-\alpha/2) \hat V_k(X_i)^{\frac{1}{2}}]$. However, this interval is \textit{pointwise}, while we require \textit{uniform} coverage for all of our $n$ samples and $K$ models.
    Therefore, using a Bonferroni correction, we get $[\phi(\hat\eta^{(k)},X_i,A_i,Y_i)\pm \Phi^{-1}(1-\alpha/(2nK)) \hat V_k(X_i)^{\frac{1}{2}}]$.
    Then, let 
    \begin{equation*}
        \Gamma_{i,\pm}\eqdef \frac{1}{K-1}\sum_{k\ne k(i)}\mathrm{proj}_{[-2\Bmu,2\Bmu]}\Big(\phi(\hat\eta^{(k)},X_i,A_i,Y_i)\pm \Phi^{-1}(1-\alpha/(2nK)) \hat V_k(X_i)^{\frac{1}{2}}\Big)\,,
    \end{equation*}
    where $\mathrm{proj}_{[-2\Bmu,2\Bmu]}$ is the projection on the set $[-2\Bmu,2\Bmu]$. 

    \item \textbf{Privatized CI.} Report the confidence interval $[\hat\tau_-,\hat\tau_+]$, where:
    \begin{equation*}
        \hat\tau_\pm \eqdef \frac{1}{n}\sum_{i=1}^n \Gamma_{\pm,i} + \cN(0,\sigma_1^2) \pm \Phi^{-1}(1-\beta/2)(\sigma_1^2+\Bmu/(2\sqrt n))\,,
    \end{equation*}
    where $\Phi^{-1}$ is the inverse cdf of a Gaussian.
\end{enumerate}
Similarly to the bootstrap method, here the privacy analysis of \Cref{thm:privacy_unified} can be applied for the constant of the G-Formula.

\subsection{Proof of \Cref{thm:CIs}}\label{app:proof_theorem_CIs}

\begin{proof}
    The privacy part of the proof is a direct application of \Cref{prop:sensitivity_unified} with $\Delta_= = 4\Bmu$ and $\Delta_{\ne} = \frac{4K\Bmu}{K-1}$ (more details in the G-Formula privacy proof).

    Then,
    \begin{align*}
        \proba{\tau\in [\tau_\pm]} &=  \proba{\esp{\mathrm{CATE}(X_i)}\in \left[\frac{1}{n}\sum_{i\in[n]} \Gamma_{i,\pm}+ \cN(0,\sigma_1^2) \pm \Phi^{-1}(1-\beta/2)(\sigma_1 + \Bmu/(2\sqrt n))\right]}\\
        & \geq 1 - \proba{\left|\esp{\mathrm{CATE}(X_i)} - \cN(0,\sigma_1^2) - \frac{1}{n}\sum_{i=1}^n\mathrm{CATE}(X_i)\right| >  \Phi^{-1}(1-\beta/2)(\sigma_1+\Bmu/(2\sqrt n))} \\
        &\quad-   \proba{\frac{1}{n}\sum_{i=1}^n\mathrm{CATE}(X_i)\notin \left[\frac{1}{n}\sum_{i\in[n]} \Gamma_{i,\pm} \right]}\\
        &\geq 1-\beta -\proba{\frac{1}{n}\sum_{i=1}^n\mathrm{CATE}(X_i)\notin \left[\frac{1}{n}\sum_{i\in[n]} \Gamma_{i,\pm}\right]}\,,
    \end{align*}
    where the first probability is sub-Gaussiannity and the second inequality is a Gaussian bound.
    For the remaining probability, 
    \begin{align*}
        \proba{\frac{1}{n}\sum_{i=1}^n\mathrm{CATE}(X_i)\notin \left[\frac{1}{n}\sum_{i\in[n]} \Gamma_{i,\pm}\right]} & = \proba{\frac{1}{n}\sum_{i=1}^n\mathrm{CATE}(X_i)\in \left[\frac{1}{n(K-1)}\sum_{i\in[n],k\ne k(i)} \Gamma_{\pm}^{(k)}(i)\right]}\\
        & \leq \proba{\exists i\in[n],\exists k\ne k(i), \mathrm{CATE}(X_i) \notin \left[\Gamma_{\pm}^{(k)}(i)\right]}\\
        & \leq nK \alpha/(nK)\,,
    \end{align*}
    concluding the proof.
\end{proof}

\section{Unified analysis}
\label{app:unified_analysis_assumptotics}

\subsection{Proof of \Cref{thm:utility_general}}

\begin{proof}
	The proof follows from classical non-DP proofs that use crossfitting \citep[e.g., Theorem 3.2]{wager2024causal}, adapted to our setting. See \Cref{app:CI_var} below for details on CIs asymptotic validity.
\end{proof}

\subsection{Asymptotic Variance-based Confidence Intervals}\label{app:CI_var}

In the variance step of our method, we could generalize and instead privately estimate the variance of the estimator by adding an offset related to $\sigma_2^2$ in order to use the variance for confidence intervals as, for $\alpha\in(0,1)$ and $\alpha_1\in(0,p/2)$:
\begin{equation}
    \textstyle\hat V_\mathrm{DP}\eqdef \Big( \sqrt{\frac{1}{n-1}\sum_{i=1}^n\big(\hat\Gamma_i-\hat\tau\big)^2} + \cN(0,\sigma_2^2)\Big)^2+\sigma_1^2+\Phi^{-1}(1-\alpha_1/2)\sigma_2^2\,,
\end{equation}
where $\Phi(t)=\proba{\cN(0,1)\geq t}$ for $t\in\R$,  
and construct a private asymptotically valid $\alpha-$coverage confidence interval as:
\begin{equation}\label{eq:private_CI_p}
    \mathrm{CI}_\alpha \eqdef\big[ \hat \tau_\mathrm{DP} - \Phi^{-1}(1-\alpha/2+\alpha_1/2) \hat V_\mathrm{DP}^{1/2}/\sqrt{n}\,,\, \hat \tau_\mathrm{DP} + \Phi^{-1}(1-\alpha/2+\alpha_1/2) \hat V_\mathrm{DP}^{1/2}/\sqrt{n} \big]\,.
\end{equation}
Indeed, we first have that, for $\hat V =\frac{1}{n-1}\sum_{i=1}^n\big(\hat\Gamma_i-\hat\tau\big)^2$:
\begin{align*}
    &\proba{\Big( \sqrt{\frac{1}{n-1}\sum_{i=1}^n\big(\hat\Gamma_i-\hat\tau\big)^2} + \cN(0,\sigma_2^2)\Big)^2 + \Phi^{-1}(1-\alpha_1)\sigma_2^2 \geq \hat V  }\\
    & \geq  \proba{\cN(0,\sigma_2^2)^2\leq \Phi^{-1}(1-\alpha_1)^2\sigma_2^2}\\
     & \geq  \proba{\cN(0,1)\leq \Phi^{-1}(1-\alpha_1)}\\
     &\geq 1-\alpha_1\,,
\end{align*}
by definition of $\Phi$.
Thus, with probability at least $1-\alpha_1$, we have that $\hat V_\mathrm{DP}\geq \hat V + \sigma_1^2$.
Then, \textit{in an oracle setting} where the estimated variance $\hat V$ is the actual variance $V^\star$ of the oracle estimator up to $o(1/n)$ and under asymptotic $\sqrt{n}-$consistency assumptions, under the central limit theorem, with $\hat \tau = \frac{1}{n}\sum_{i=1}^n\hat\Gamma_i$ and $c=\Phi^{-1}(1-\alpha/2+\alpha_1/2)$:
\begin{align*}
    \proba{\tau\in \mathrm{CI}_\alpha} &= \proba{\hat\tau + \cN(0,\sigma_1^2) \in\left[ \tau - c\sqrt{\hat V_\mathrm{DP}/n},\tau +c\sqrt{\hat V_\mathrm{DP}/n }\right]}\\
    &\geq \proba{\hat\tau + \cN(0,\sigma_1^2) \in\left[ \tau - c\sqrt{\hat V/n + \sigma_1^2/n},\tau + c\sqrt{\hat V/n + \sigma_1^2/n} \right]\Big|\hat V_\mathrm{DP} \geq \hat V+\sigma_1^2} - \alpha_1\\
    &\approx \proba{\cN(0,V^\star/n) + \cN(0,\sigma_1^2/n) \in\left[ \tau - c\sqrt{V^\star/n + \sigma_1^2/n},\tau + c\sqrt{V^\star/n + \sigma_1^2/n } \Big|\hat V_\mathrm{DP} \geq \hat V+\sigma_1^2 \right]} - \alpha_1\\
    &= \proba{\cN(0,\sqrt{V^\star/n+\sigma_1^2/n}) \in\left[ \tau - c\sqrt{V^\star /n+ \sigma_1^2/n},\tau + c\sqrt{V^\star/n + \sigma_1^2/n } \Big|\hat V_\mathrm{DP} \geq \hat V+\sigma_1^2 \right]} - \alpha_1\\
    &= 1-2\proba{\cN(0,1)\geq c } - \alpha_1\\
    & = 1- 2\alpha/2 +2\alpha_1/2 -\alpha_1\\
    &= 1-\alpha\,,
\end{align*}
where the first inequality uses the probabilistic bound above, and the $\approx$ holds up to a $o(1)$ under $\sqrt{n}-$consistency and exact oracle variance up to $o(1/n)$.
These assumptions are classically implied by those of \Cref{thm:utility_general} in the case of our estimators \citep{wager2024causal}.

\subsection{More general aggregation steps}

\begin{remark}\label{remark:nuisances}
The \textbf{aggregation step} can be made more general:
    \begin{equation}\label{eq:models_nuisances_general}
		\begin{aligned}
		    \hat\pi(i) = &\Phi_\pi\Big(\set{\hat\pi^{(k)}},k(i)\Big)\,,\quad \hat\mu_a(i) = \Phi_\mu\Big(\set{\hat\mu_a^{(k)}},k(i)\Big)\,,\\
		    &1-\tilde\pi(i) = \Phi_{1-\pi}\Big(\set{1-\hat\pi^{(k)}},k(i)\Big)\,,
		\end{aligned}
	    \end{equation}
Examples of functions $\Phi$ used in \Cref{eq:models_nuisances_general} are, for some models $(\hat m^{(k)})_{k\in[K]}$ (that could represent either $\hat\pi^{(k)},\hat\mu_a^{(k)}$ or 
$1-\hat\pi^{(k)}$):
\begin{enumerate}
    \item \textbf{Sampling.}
    For all $i\in[n]$, let
    \begin{equation}\label{eq:sampling}
        \Phi_\ell\Big(\set{\hat m^{(k)}},k(i)\Big) = \hat m^{(\ell(i))}\,,
    \end{equation}
	for $\ell(i)\in[K]\setminus\set{k(i)}$. For instance, $\ell(i)\sim\cU([K]\setminus\set{k(i)})$ can be chosen.
    \item \textbf{Mean.}
        For $\cK\subset[K]$, for all $i\in[n]$, let
    \begin{equation}\label{eq:mean}
        \Phi_\cK\Big(\set{\hat m^{(k)}},k(i)\Big) = \frac{1}{\#\cK-\one_\set{k(i)\in\cK}}\sum_{k\in\cK\setminus \set{k(i)}}\hat m^{(\ell)}\,.
    \end{equation}
    \item \textbf{Harmonic mean.}
    For $\cK\subset[K]$, if $m^{(k)}$ have positive values, for all $i\in[n]$, let
    \begin{equation}\label{eq:harmonic_mean}
        \Tilde\Phi_\cK\Big(\set{\hat m^{(k)}},k(i)\Big) = \left(\frac{1}{\#\cK-\one_\set{k(i)\in\cK}}\sum_{k\in\cK\setminus \set{k(i)}}\frac{1}{\hat m^{(\ell)}}\right)^{-1}\,.
    \end{equation}
\end{enumerate}
Other alternatives could be used, such as a median estimator (for its robustness properties), or more evolved ensemble methods to combine the models $m^{(k)}$.

In this paper and in all our results, propensity scores use the harmonic mean aggregator while conditional outcomes use the mean aggregator.
Motivations for this choice of aggregators stem from \Cref{lem:sensitivity_ensemble_models}.
Note that the sampling approach could be also used, for both propensity scores \textbf{and} conditional outcomes. 
We provide sensitivity analyses for this approach in the proofs in the appendices below.
\end{remark}

\subsection{Unified sensitivity analysis}

\begin{proposition}[Sensitivity analysis]\label{prop:sensitivity_unified}
Let 
\begin{equation*}
    \hat \tau = \frac{1}{n}\sum_{i=1}^n \hat \Gamma_i\,,
\end{equation*}
where $\hat \Gamma_i = \Psi\left((X_i,A_i,Y_i), k(i), \set{\hat\pi^{(k)}},\set{\hat\mu_a^{(k)}},\set{\tilde\pi^{(k)}}\right)$.
Assume that for all $i\in[n]$, for all datasets $\cD\sim_i\cD'$, we have that:
\begin{enumerate}
    \item $|\hat\Gamma_i-\hat\Gamma_i'|\leq \Delta_=$,
    \item For all $j\in\cI_{k(i)}\setminus\set{i}$, $\hat\Gamma_j=\hat\Gamma_j'$,
    \item  $\frac{1}{n}\sum_{j\notin \cI_{k(i)}}|\hat\Gamma_j-\hat\Gamma_j'|\leq \frac{\Delta_{\ne}}{K}$,
\end{enumerate}
where $(\hat\Gamma_j')$ corresponds to the quantities when models are trained and evaluated on $\cD'$.
Then, we have that:
\begin{equation*}
    \sup_{\cD\sim\cD'}|\hat\tau-\hat\tau'|\leq \frac{\Delta_=}{n}+\frac{\Delta_{\ne}}{K}\,,
\end{equation*}
so that for all $\sigma>0$, 
the estimator $\hat\tau_\mathrm{DP}\eqdef\hat\tau + \cN(0,\sigma^2)$ is $\gdpmu-$GDP with $\gdpmu^2=\frac{\left(\frac{\Delta_=}{n}+\frac{\Delta_{\ne}}{K}\right)^2 }{\sigma^2}$.

Under the same assumptions and if furthemore $\sup_i|\hat\Gamma_i|\leq M$:
\begin{equation*}
    \sup_{\cD\sim\cD'}|\sqrt{\hat V}-\sqrt{\hat V'}|\leq \sqrt{\frac{2n}{(n-1)}}\left(2\sqrt M\left(\frac{\Delta_=}{n}+\frac{\Delta_{\ne}}{K}\right)^{1/2}+\frac{\Delta_=}{n}+\frac{\Delta_{\ne}}{K}\right)\,,
\end{equation*}
where $\hat V\eqdef \frac{1}{n-1}\sum_{i=1}^n\left(\hat\Gamma_i-\hat\tau\right)^2$.
Thus, for $\gamma_1,\gamma_2\sim\cN(0,1)$ independent and $\sigma_1,\sigma_2>0$, writing $\hat V_\mathrm{DP}= (\sqrt{\hat V} +\sigma_2\gamma_2)^2$ and $\hat\tau_\mathrm{DP}\eqdef\hat\tau + \cN(0,\sigma^2)$, we have that outputting the confidence interval
\begin{equation*}
    \cA(\cD)=\left[ \hat \tau_\mathrm{DP} - 1,96 \sqrt{\hat V_\mathrm{DP}} ,+, \hat \tau_\mathrm{DP} + 1,96 \sqrt{\hat V_\mathrm{DP}}\right]\,,
\end{equation*}
is $\gdpmu-$GDP, with $\gdpmu^2 = \frac{\left(\frac{\Delta_=}{n}+\frac{\Delta_{\ne}}{K}\right)^2 }{\sigma_1^2} + \frac{\left(\frac{\Delta_=}{n}+\frac{\Delta_{\ne}}{K} + (2M)^{\frac12} \left(\frac{\Delta_=}{n}+\frac{\Delta_{\ne}}{K}\right)^{\frac12}\right)^2}{\sigma_2^2}\frac{2n}{n-1}$.
\end{proposition}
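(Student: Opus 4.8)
The plan is to reduce everything to two sensitivity computations---one for the point estimate $\hat\tau$, one for the unscaled empirical standard deviation $\sqrt{\hat V}$---and then to invoke the Gaussian mechanism (\Cref{lem:gaussian-mech-dp}), GDP composition (\Cref{lem:composition}), and post-processing immunity.

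For $\hat\tau$, I would fix $i$ and $\cD\sim_i\cD'$ and write $\hat\tau-\hat\tau'=\frac1n\sum_{j=1}^n(\hat\Gamma_j-\hat\Gamma_j')$, splitting the index set into $\{i\}$, $\cI_{k(i)}\setminus\{i\}$ and $[n]\setminus\cI_{k(i)}$: by hypotheses~1--3 these three blocks contribute to $|\hat\tau-\hat\tau'|$ at most $\Delta_=/n$, $0$ and $\Delta_{\ne}/K$ respectively, so $\sup_{\cD\sim\cD'}|\hat\tau-\hat\tau'|\le \Delta_=/n+\Delta_{\ne}/K$. Applying \Cref{lem:gaussian-mech-dp} with this sensitivity and noise $\sigma$ gives that $\hat\tau+\cN(0,\sigma^2)$ is $\gdpmu$-GDP with $\gdpmu^2=(\Delta_=/n+\Delta_{\ne}/K)^2/\sigma^2$, which is the first claim.

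For $\sqrt{\hat V}$, the key observation is that, writing $\Gamma=(\hat\Gamma_1,\dots,\hat\Gamma_n)$, we have $\sqrt{\hat V}=\tfrac{1}{\sqrt{n-1}}\norm{\Gamma-\hat\tau\mathbf 1}_2$, so by the reverse triangle inequality for $\norm{\cdot}_2$,
\[
|\sqrt{\hat V}-\sqrt{\hat V'}|\le\tfrac{1}{\sqrt{n-1}}\Big(\norm{\Gamma-\Gamma'}_2+\sqrt n\,|\hat\tau-\hat\tau'|\Big).
\]
I would then bound $\norm{\Gamma-\Gamma'}_2^2\le\big(\max_j|\hat\Gamma_j-\hat\Gamma_j'|\big)\sum_j|\hat\Gamma_j-\hat\Gamma_j'|\le 2M\big(\Delta_=+n\Delta_{\ne}/K\big)$, using $|\hat\Gamma_j|\le M$ together with the per-block bounds from the previous paragraph (the coordinates in $\cI_{k(i)}\setminus\{i\}$ again drop out). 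Writing $S\eqdef\Delta_=/n+\Delta_{\ne}/K$, this reads $\norm{\Gamma-\Gamma'}_2\le\sqrt{2MnS}$ and $|\hat\tau-\hat\tau'|\le S$, and the routine estimates $\sqrt{2MnS}/\sqrt{n-1}\le\sqrt{2n/(n-1)}\cdot 2\sqrt M\sqrt S$ and $\sqrt n\,S/\sqrt{n-1}\le\sqrt{2n/(n-1)}\cdot S$ give exactly the displayed sensitivity bound on $\sqrt{\hat V}$, which I will call $\Delta_V$.

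Finally, the confidence interval $\cA(\cD)$ is a deterministic function of the pair $\big(\hat\tau+\sigma_1\gamma_1,\ \sqrt{\hat V}+\sigma_2\gamma_2\big)$ with $\gamma_1,\gamma_2\sim\cN(0,1)$ independent; the first coordinate is the Gaussian mechanism on $\hat\tau$ (sensitivity $S$), hence $\gdpmu_1$-GDP with $\gdpmu_1=S/\sigma_1$, and the second is the Gaussian mechanism on $\sqrt{\hat V}$ (sensitivity $\Delta_V$), hence $\gdpmu_2$-GDP with $\gdpmu_2=\Delta_V/\sigma_2$. Composition makes the released pair $\sqrt{\gdpmu_1^2+\gdpmu_2^2}$-GDP, and post-processing transfers this to $\cA(\cD)$; substituting $S$ and $\Delta_V$ yields the stated expression for $\gdpmu^2$. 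I expect the only genuinely non-mechanical step to be the middle one: one must first recognise $\sqrt{\hat V}$ as a rescaled Euclidean norm to unlock its Lipschitzness, and then carefully combine the perturbation of the score vector with the induced shift of its empirical mean---after which everything is a direct application of the hypotheses and of the privacy toolkit already recalled in the paper.
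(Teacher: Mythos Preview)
Your proposal is correct and follows essentially the same route as the paper's proof: the three-block decomposition for $\hat\tau$ is identical, and for $\sqrt{\hat V}$ both arguments recognise it as a rescaled $\ell^2$-norm, apply the reverse triangle inequality, and control $\sum_j(\hat\Gamma_j-\hat\Gamma_j')^2$ via $\max_j|\hat\Gamma_j-\hat\Gamma_j'|\cdot\sum_j|\hat\Gamma_j-\hat\Gamma_j'|\le 2M\cdot nS$. The only cosmetic difference is that the paper first uses $(a+b)^2\le 2a^2+2b^2$ coordinatewise and then $\sqrt{a+b}\le\sqrt a+\sqrt b$, whereas you apply the triangle inequality $\|u+v\|_2\le\|u\|_2+\|v\|_2$ directly---your intermediate bound is in fact tighter by a $\sqrt 2$, which you then relax to match the stated constants; the final privacy claim via the Gaussian mechanism, GDP composition, and post-processing is the same.
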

\begin{proof}
We begin by proving that under the listed assumptions, we have:
\begin{equation*}
    \sup_{\cD\sim\cD'}|\hat\tau-\hat\tau'|\leq \frac{\Delta_=}{n}+\frac{\Delta_{\ne}}{K}\,.
\end{equation*}
Let $\cD\sim_i\cD'$.
We have:
\begin{align*}
    |\hat \tau-\hat\tau'| &= \left| \frac{1}{n}\sum_{j=1}^n \hat\Gamma_j-\hat\Gamma'_j\right|\\
    &\leq  \frac{1}{n}\sum_{j=1}^n \left|\hat\Gamma_j-\hat\Gamma'_j\right|\\
    &\leq  \frac{\left|\hat\Gamma_i-\hat\Gamma'_i\right|}{n} + \frac{1}{n}\sum_{j\in\cI_{k(i)}\setminus\set{i}}^n \left|\hat\Gamma_j-\hat\Gamma'_j\right|+\frac{1}{n}\sum_{j\notin\cI_{k(i)}}^n \left|\hat\Gamma_j-\hat\Gamma'_j\right|\\
    &\leq  \frac{\Delta_=}{n} + \frac{1}{n}\sum_{j\in\cI_{k(i)}\setminus\set{i}}^n 0+\frac{\Delta_{\ne}}{K}\\
    &\leq \frac{\Delta_=}{n}+\frac{\Delta_{\ne}}{K}\,,
\end{align*}
by a direct application of our assumptions.
The DP guarantees are then given by applying \Cref{lem:gaussian-mech-dp}.

We now show that
\begin{equation*}
    \sup_{\cD\sim\cD'}|\sqrt{\hat V}-\sqrt{\hat V'}|\leq \frac{n2}{n-1}\left(\frac{\Delta_=}{n}+\frac{\Delta_{\ne}}{K}\right)\,.
\end{equation*}
We have, for $\cD\sim_i\cD'$:
\begin{align*}
    \left|\sqrt{\hat V}-\sqrt{\hat V'}\right|&=\sqrt{\frac{1}{n-1}} \left|\sqrt{\sum_{i=1}^n\left(\hat\Gamma_i-\hat\tau\right)^2 }-\sqrt{\sum_{i=1}^n\left(\hat\Gamma_i'-\hat\tau'\right)^2 }\right|\\
    &\leq\sqrt{\frac{1}{n-1}} \sqrt{\sum_{i=1}^n\left(\hat\Gamma_i-\hat\tau -\set{\hat\Gamma_i'-\hat\tau'}\right)^2 }\\
    &\leq\sqrt{\frac{1}{n-1}} \sqrt{2\sum_{i=1}^n\left(\hat\Gamma_i -\hat\Gamma_i'\right)^2 + 2\sum_{i=1}^n\left(\hat\tau-\hat\tau'\right)^2 }\\
    &\leq\sqrt{\frac{2}{n-1}}\left( 2\sqrt M\sqrt{\sum_{i=1}^n\left|\hat\Gamma_i- \hat\Gamma_i'\right|}+\sqrt{n}|\hat\tau-\hat\tau'|\right) \\
    &\leq\sqrt{\frac{2}{n-1}}\left( 2\sqrt M\sqrt{n}\sqrt{\frac{1}{n}\sum_{i=1}^n\left|\hat\Gamma_i- \hat\Gamma_i'\right|}+\sqrt{n}|\hat\tau-\hat\tau'|\right) \\
    &\leq \sqrt{\frac{2n}{n-1}}\left(2\sqrt M\left(\frac{\Delta_=}{n}+\frac{\Delta_{\ne}}{K}\right)^{1/2}+\frac{\Delta_=}{n}+\frac{\Delta_{\ne}}{K}\right)\,,
\end{align*}
using the same arguments as above.
The DP guarantees are then given by applying \Cref{lem:gaussian-mech-dp}.

\end{proof}

\subsection{Sensitivity of our aggregators}

\begin{lemma}
    \label{lem:sensitivity_ensemble_models}
    For models $\set{m^{(k)}}$ trained on dataset $\cD$ as in \Cref{sec:framework} and $\hat m:[n]\to \R$ defined as $\hat m(i) = \Phi\left(\set{m^{(k)}},k(i)\right)$ as in \Cref{eq:models_nuisances}.
    Let $\cD\sim_i\cD'$ be two adjacent datasets, and denote as $\hat m'$ the global model corresponding to training on $\cD'$.
    We have the following properties for the examples detailed in \Cref{remark:nuisances}.
    \begin{enumerate}
        \item \textbf{Sampling.} Assume that $m^{(k)}$ lie in a space of diameter $M>0$.
        If $\Phi=\Phi_\ell$, for $\ell:[n]\to[K]$ such that $\ell(i)\ne k(i) $, then we have, for all $j\in [n]\setminus\cI_{k(i)}$:
        \begin{equation*}
            |\hat m(j)-\hat m'(j)| \leq M\one_\set{j\in\cJ_{k(i)}}\,,
        \end{equation*}
        where $\cJ_k=\set{i:\ell(i)=k}$,
        and $\hat m(j)=\hat m'(j)$ for $j\in \cI_{k(i)}$.
        
        \item \textbf{Mean.} Assume that $m^{(k)}$ lie in a space of diameter $M>0$.
        If $\Phi=\Phi_\cK$ for $\cK\subset[K]$, then we have, for all $j\in [n]\setminus\cI_{k(i)}$:
        \begin{equation*}
            |\hat m(j)-\hat m'(j)| \leq \frac{M}{\#\cK -\one_\set{k(i)\in\cK}}\,,
        \end{equation*}
        and $\hat m(j)=\hat m'(j)$ for $j\in \cI_{k(i)}$.
        
        \item \textbf{Harmonic mean (1).} Assume that $m^{(k)}>0$ and that $1/m^{(k)}$ are uniformly bounded by a constant $M>0$.
        If $\Phi=\Tilde\Phi_\cK$ for $\cK\subset[K]$, then we have, for all $j\in [n]\setminus\cI_{k(i)}$:
        \begin{equation*}
            \left|\frac{1}{\hat m(j)}-\frac{1}{\hat m'(j)}\right| \leq \frac{M}{\#\cK -\one_\set{k(i)\in\cK}}\,,
        \end{equation*}
        and $\hat m(j)=\hat m'(j)$ for $j\in \cI_{k(i)}$.

        \item \textbf{Harmonic mean (2).} Assume that $0<m^{(k)}$ and $1/m^{(k)}$ are uniformly bounded by a constant $M>0$.
        If $\Phi=\Tilde\Phi_\cK$ for $\cK\subset[K]$, then we have, for all $j\in [n]\setminus\cI_{k(i)}$:
        \begin{equation*}
            |\hat m(j)-\hat m'(j)| \leq \frac{M^3}{\#\cK -\one_\set{k(i)\in\cK}}\,,
        \end{equation*}
        and $\hat m(j)=\hat m'(j)$ for $j\in \cI_{k(i)}$.
    \end{enumerate}
\end{lemma}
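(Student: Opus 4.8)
The plan is to exploit the partition structure. Since $\cD\sim_i\cD'$ differ only at patient $i\in\cI_{k(i)}$ and the folds partition the data, only the fold-$k(i)$ model changes: $m^{(k)}=m'^{(k)}$ for every $k\neq k(i)$, while $m^{(k(i))}$ and $m'^{(k(i))}$ may differ arbitrarily within their common domain. The claim for $j\in\cI_{k(i)}$ is then immediate in all four cases, since the aggregator used to compute $\hat m(j)$ is $\Phi(\cdot,k(j))=\Phi(\cdot,k(i))$, which by construction never involves fold $k(i)$ and hence only unchanged models; thus $\hat m(j)=\hat m'(j)$. All the remaining work concerns $j\notin\cI_{k(i)}$, for which the aggregator for $j$ does involve fold $k(i)$.

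For the \textbf{sampling} aggregator, $\hat m(j)=m^{(\ell(j))}$, which is unchanged unless $\ell(j)=k(i)$, i.e.\ unless $j\in\cJ_{k(i)}$; and in that case $|\hat m(j)-\hat m'(j)|=|m^{(k(i))}-m'^{(k(i))}|\le M$ by the diameter bound, which gives the claimed $M\one_{\{j\in\cJ_{k(i)}\}}$. For the \textbf{mean} aggregator $\Phi_\cK$: if $k(i)\notin\cK$ the changed model never enters and the difference is $0$; if $k(i)\in\cK$, then (since $k(j)\ne k(i)$) $k(i)$ is one of the $N_j:=\#\cK-\one_{\{k(j)\in\cK\}}$ indices averaged for $\hat m(j)$, and it is the unique one whose model changes, so the average moves by at most $M/N_j$. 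Since here $N_j\ge\#\cK-1=\#\cK-\one_{\{k(i)\in\cK\}}$, this is at most $M/(\#\cK-\one_{\{k(i)\in\cK\}})$, which matches the claim (and dominates the trivial bound in the other case). \textbf{Harmonic mean (1)} is the same argument applied to the reciprocals $1/m^{(k)}\in(0,M]$, since $1/\hat m(j)$ is exactly their $\Phi_\cK$-mean; this yields $|1/\hat m(j)-1/\hat m'(j)|\le M/(\#\cK-\one_{\{k(i)\in\cK\}})$.

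For \textbf{harmonic mean (2)} I would combine the previous bound on $|1/\hat m(j)-1/\hat m'(j)|$ with an a priori upper bound on the aggregated values themselves (here $\hat m(j),\hat m'(j)\le M$, which holds in the intended use where the $m^{(k)}$ are propensities or their complements, hence $\le 1\le M$), via
\[
|\hat m(j)-\hat m'(j)| \;=\; \hat m(j)\,\hat m'(j)\,\Bigl|\tfrac{1}{\hat m(j)}-\tfrac{1}{\hat m'(j)}\Bigr| \;\le\; M^2\cdot\frac{M}{\#\cK-\one_{\{k(i)\in\cK\}}}\,.
\]

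I expect no genuine obstacle in the argument: the only points requiring care are bookkeeping the normalization factor — confirming that $j$'s averaging denominator is at least $\#\cK-\one_{\{k(i)\in\cK\}}$ uniformly over $j\notin\cI_{k(i)}$, and that the $k(i)\notin\cK$ case is indeed harmless — together with being explicit about which upper bound on the $m^{(k)}$ licenses the $M^3$ in the last case.
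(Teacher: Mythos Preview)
Your treatment of parts 1--3 is correct and matches the paper, which simply declares these ``straightforward'' without writing out the bookkeeping; your explicit check that the averaging denominator $N_j=\#\cK-\one_{\{k(j)\in\cK\}}\ge \#\cK-\one_{\{k(i)\in\cK\}}$ is exactly the care needed.

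For part 4 you take a genuinely different route. The paper writes $\hat m(j)=f(x)$ with $f(x)=1/(C+x/K)$, where $C$ collects the unchanged reciprocal terms and $x=1/m^{(k(i))}$, and bounds $|f(x_1)-f(x_2)|$ via convexity and $|f'(0)|=1/(KC^2)$, then uses $C\ge 1/M$ to conclude. Your argument instead combines the part-3 bound on $\bigl|1/\hat m(j)-1/\hat m'(j)\bigr|$ with the identity $|\hat m-\hat m'|=\hat m\,\hat m'\,|1/\hat m-1/\hat m'|$ and the bound $\hat m(j),\hat m'(j)\le M$. This is more elementary and reuses part 3 directly rather than redoing a perturbation analysis. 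Both arguments ultimately hinge on the same hidden ingredient: an upper bound $m^{(k)}\le M$ (equivalently, a lower bound $1/m^{(k)}\ge 1/M$), without which the statement is actually false. The paper invokes it silently in the step $C\ge 1/M$; you flag it explicitly as an assumption about the intended use. Reading the hypothesis ``$0<m^{(k)}$ and $1/m^{(k)}$ are uniformly bounded by $M$'' as bounding \emph{both} $m^{(k)}$ and $1/m^{(k)}$ by $M$ is what makes either proof go through, and you are right to be explicit about it.
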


The harmonic mean in its second form above is thus way less competitive and thus should only be used for propensity scores.

\begin{proof}
The proof of the first, second and third points are straightforward.
For the harmonic mean, let $j\in[n]\setminus\cI_{k(i)}$.
We have that
\begin{equation*}
    \hat m(j)=\frac{1}{\frac{1}{K}\sum_{k\in\cK\setminus\set{k(i)}} \hat m^{(k)}(X_j)^{-1} +  \frac{\hat m^{(k(i))}(X_j)^{-1}}{K}}\,,
\end{equation*}
and
\begin{equation*}
    \hat m'(j)=\frac{1}{\frac{1}{K}\sum_{k\in\cK\setminus\set{k(i)}} \hat m^{(k)}(X_j)^{-1} +  \frac{\hat m^{'(k(i))}(X_j)^{-1}}{K}}\,.
\end{equation*}
Let $C=\frac{1}{K}\sum_{k\in\cK\setminus\set{k(i)}} \hat m^{(k)}(X_j)^{-1}\geq \frac{1}{M}$, and
note that $f:x>0\mapsto \frac{1}{C+\frac{x}{K}}$ is convex.
Thus, for $0<x_1<x_2\leq M$,
\begin{align*}
    0\leq f(x_1)-f(x_2)&\leq f'(x_1)(x_1-x_2)\\
    &\leq |f'(0)|(x_2-x_1)\\
    &=\frac{x_2-x_1}{KC^2}\\
    &\leq \frac{M^2(x_2-x_1)}{K}\\
    &\leq  \frac{M^3}{K}\,.
\end{align*}
\end{proof}

\section{Analysis of the DP-G-Formula estimator}

\begin{lemma}[G-Formula]
    \label{lem:g-formula}
    For $\ell:[n]\to[K]$ such that for all $i\in[n]$ we have $\ell(i)\in[K]\setminus\set{k(i)}$, let
    \begin{equation*}
        \Phi_{\mu} \in\set{\Phi_{[K]}, \Phi_\ell}\,,
    \end{equation*}
    as defined in \Cref{eq:sampling}. 
    Let $\cJ_k\eqdef \set{i\in[n]:\ell(i)=k}$ for $k\in[K]$.
    Then, for the DP-G-Formula estimator, the assumptions of \Cref{prop:sensitivity_unified} are verified with:
    \begin{equation*}
        \Delta_= = 4\Bmu \,,
    \end{equation*}
    and
    \begin{equation*}
        \Delta_{\ne} = 4\Bmu\times\frac{K}{K-1}\,.
    \end{equation*}
    if $\Phi_{\mu} =\Phi_{[K]}$, or
    \begin{equation*}
        \Delta_{\ne} = 4\Bmu\times \frac{K \sup_{k\in[K]}\# \cJ_k}{n}\,,
    \end{equation*}
    if $\Phi_{\mu} =\Phi_{\ell}$.
\end{lemma}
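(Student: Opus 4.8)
The plan is to verify, for the G-Formula score $\hat\Gamma_i = \hat\mu_1(i)-\hat\mu_0(i)$, the three hypotheses of \Cref{prop:sensitivity_unified}, and to read off $\Delta_=$ and $\Delta_{\ne}$ from the resulting bounds. Three structural observations drive everything. First, the aggregate $\hat\mu_a(i)$ attached to patient $i$ depends only on the fold-models $\hat\mu_a^{(k)}$ with $k\ne k(i)$, each evaluated at $X_i$. Second, if $\cD\sim_{i_0}\cD'$ with $i_0\in\cI_{k_0}$, then the only fold-models that change are $\hat\mu_a^{(k_0)}$ (those trained on fold $\cI_{k_0}$), and the only evaluation point that changes is $X_{i_0}$. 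Third, under \Cref{asm:boundedness} every $\hat\mu_a^{(k)}$ is valued in $[-\Bmu,\Bmu]$, and both aggregators appearing in \Cref{lem:g-formula} --- the arithmetic mean $\Phi_{[K]}$ and the sampling rule $\Phi_\ell$ --- map into the convex hull of their inputs, hence again into $[-\Bmu,\Bmu]$; consequently $|\hat\Gamma_i|\le 2\Bmu$ for every $i$ and every dataset.

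\textbf{Hypotheses 1 and 2.} Fix $\cD\sim_{i_0}\cD'$. By the triangle inequality and the last observation, $|\hat\Gamma_{i_0}-\hat\Gamma_{i_0}'|\le |\hat\Gamma_{i_0}|+|\hat\Gamma_{i_0}'|\le 4\Bmu$, which gives $\Delta_==4\Bmu$. For the second hypothesis, let $j\in\cI_{k_0}\setminus\{i_0\}$, so that $k(j)=k_0$; then $\hat\Gamma_j$ is assembled from $\{\hat\mu_a^{(k)}:k\ne k_0\}$ evaluated at the unchanged point $X_j$, none of which is affected by the swap at $i_0$, hence $\hat\Gamma_j=\hat\Gamma_j'$.

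\textbf{Hypothesis 3.} Here the two aggregators diverge, and I would invoke \Cref{lem:sensitivity_ensemble_models} with diameter $M=2\Bmu$. Take $j\notin\cI_{k_0}$, so $k(j)\ne k_0$ and the combination defining $\hat\mu_a(j)$ does involve the changed model $\hat\mu_a^{(k_0)}$. If $\Phi_\mu=\Phi_{[K]}$, the Mean case of \Cref{lem:sensitivity_ensemble_models} yields $|\hat\mu_a(j)-\hat\mu_a'(j)|\le 2\Bmu/(K-1)$, hence $|\hat\Gamma_j-\hat\Gamma_j'|\le 4\Bmu/(K-1)$ uniformly in such $j$; bounding the number of indices $j\notin\cI_{k_0}$ by $n$ gives $\frac1n\sum_{j\notin\cI_{k_0}}|\hat\Gamma_j-\hat\Gamma_j'|\le \frac{4\Bmu}{K-1}=\frac1K\cdot 4\Bmu\frac{K}{K-1}$, so $\Delta_{\ne}=4\Bmu\frac{K}{K-1}$. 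If $\Phi_\mu=\Phi_\ell$, the Sampling case gives $\hat\mu_a(j)=\hat\mu_a^{(\ell(j))}(X_j)$, which changes only when $\ell(j)=k_0$, i.e.\ $j\in\cJ_{k_0}$, and then by at most $2\Bmu$; thus $|\hat\Gamma_j-\hat\Gamma_j'|\le 4\Bmu\,\one_{\set{j\in\cJ_{k_0}}}$ and, using $|\cJ_{k_0}|\le\sup_k\#\cJ_k$, $\frac1n\sum_{j\notin\cI_{k_0}}|\hat\Gamma_j-\hat\Gamma_j'|\le \frac{4\Bmu\sup_k\#\cJ_k}{n}=\frac1K\cdot 4\Bmu\frac{K\sup_k\#\cJ_k}{n}$, giving $\Delta_{\ne}=4\Bmu\frac{K\sup_k\#\cJ_k}{n}$. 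Plugging $\Delta_=$ and $\Delta_{\ne}$ into \Cref{prop:sensitivity_unified} closes the argument.

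\textbf{Main obstacle.} I expect the difficulty to be bookkeeping rather than estimation: one must carefully track which fold-models and which evaluation points are touched by the single-record swap, so that hypotheses (2) and (3) are applied to the correct index sets --- $\cI_{k_0}\setminus\{i_0\}$ versus its complement, and within that complement, $\cJ_{k_0}$ for the sampling rule. Once this dependency structure is laid out, the bounds are immediate from the boundedness of \Cref{asm:boundedness} and the per-index aggregator sensitivities of \Cref{lem:sensitivity_ensemble_models}; the only care point is to express the final averaged bound in the form $\Delta_{\ne}/K$ with the stated constants by using the crude counts $|\{j:j\notin\cI_{k_0}\}|\le n$ and $|\cJ_{k_0}|\le\sup_k\#\cJ_k$.
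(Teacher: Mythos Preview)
Your proposal is correct and follows essentially the same approach as the paper's proof: both verify the three hypotheses of \Cref{prop:sensitivity_unified} by (i) using boundedness to get $\Delta_==4\Bmu$ via the triangle inequality, (ii) noting that scores in $\cI_{k_0}\setminus\{i_0\}$ are unchanged since they avoid the perturbed fold-model, and (iii) splitting into the mean and sampling aggregator cases, invoking \Cref{lem:sensitivity_ensemble_models} with diameter $2\Bmu$, and then expressing the resulting averaged bound in the form $\Delta_{\ne}/K$. The only differences are cosmetic (the paper fixes $i_0=1$, $k_0=1$ without loss of generality, and expands the four-term triangle inequality on the $\hat\mu_a$'s directly rather than via $|\hat\Gamma_i|\le 2\Bmu$).
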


\begin{proof}
	Let $\cD\sim\cD'$ and assume without loss of generality that $(X_i,A_i,Y_i)\ne(X_i,A_i,Y_i)$ for $i=1$ and that $k(1)=1$ (i.e., $1\in\cI_1$).
	Denote by $\hat\mu_t^{',(k)},\hat\mu_t',\hat\pi',\hat\pi^{',(k)}$ the models learned on $\cD'$.
	Since $\cD$ and $\cD'$ differ only on the datapoint $i=1$ and that $1\notin\cI_k$ for $k\geq 2$, we have that $\hat\mu_a^{',(k)}=\hat\mu_a^{(k)}$ for $k\geq 2$.
	
    First, for all $j\in\cI_{1}\setminus\set{1}$, we have $(X_i,A_i,T_i)=(X'_i,A_i',T_i')$ and $\hat\mu_a(i)=\hat\mu'_a(i)$, so that $\hat \Gamma_i=\hat\Gamma'_i$.
    
    Then, for $i=1$:
    \begin{align*}
        &|\hat\Gamma_1-\hat\Gamma_1'|\leq |\hat\mu_1(1)-\hat \mu_0(1) -\hat\mu_1'(1)+\hat \mu_0'(1)| \\
        &\leq |\hat\mu_1(1)|+|\hat \mu_0(1)|+|\hat\mu_1'(1)|+|\hat \mu_0'(1)|\\
        &\leq 4B\,,
    \end{align*}
	where we used \Cref{asm:boundedness}.
    This expression is independent of $i=1$, so we can conclude that 
    \begin{equation*}
        \Delta_= = 4\Bmu\,.
    \end{equation*}
    
    We then find a bound for $\Delta_{\ne}$.
    
    \textbf{Case 1:} $\Phi_\mu=\Phi_\ell$.
    Let $\cJ_k=\set{i\in[n]\text{ s.t. }\ell(i)=k}$.
    We have $\cJ_1\subset[n]\setminus \cI_1$.
    For all $j\notin\cJ_1$, we have $\hat\mu_a(j)=\hat\mu'_a(j)$.
    For $i\in\cJ_1$, using \Cref{lem:sensitivity_ensemble_models} and \Cref{asm:boundedness},
	\begin{align*}
    	|\hat\Gamma_i-\hat\Gamma_i'|\leq 4\Bmu\,.
	\end{align*}
    Thus, by summing over all $j\notin\cI_1$,
    \begin{equation*}
        \Delta_{\ne} = 4\Bmu\times \frac{K \sup_{k\in[K]}\# \cJ_k}{n}\,.
    \end{equation*}
    
    \textbf{Case 2:} $\Phi_\mu=\Phi_{[K]}$.
    Using \Cref{lem:sensitivity_ensemble_models} and \Cref{asm:boundedness}, for all $i\notin \cI_1$, we have:
    \begin{equation*}
        |\hat\Gamma_i-\hat\Gamma_i'|\leq \frac{4\Bmu}{K-1}\,.
    \end{equation*}
    By summing, we thus obtain:
    \begin{equation*}
        \Delta_{\ne}=\frac{4\Bmu K}{K-1}\,.
    \end{equation*}
\end{proof}

\section{Analysis of the DP-IPW estimator}

\begin{lemma}[IPW]
    \label{lem:ipw}
    For $\ell:[n]\to[K]$ such that for all $i\in[n]$ we have $\ell(i)\in[K]\setminus\set{k(i)}$, let
    \begin{equation*}
        \Phi_{\pi},\, \Phi_{1-\pi} \in\set{\Tilde \Phi_{[K]}, \Phi_\ell}\,,
    \end{equation*}
    as defined in \Cref{eq:sampling}. 
    Let $\cJ_k\eqdef \set{i\in[n]:\ell(i)=k}$ for $k\in[K]$.
    Then, for the DP-IPW estimator, the assumptions of \Cref{prop:sensitivity_unified} are verified with:
    \begin{equation*}
        \Delta_= = 2\Bmu\Bpi \,,
    \end{equation*}
    and
    \begin{equation*}
        \Delta_{\ne} = \Bmu\Bpi\times\frac{K}{K-1}\,.
    \end{equation*}
    if $\Phi_{\pi},\, \Phi_{1-\pi} =\Tilde \Phi_{[K]}$, or
    \begin{equation*}
        \Delta_{\ne} =\Bmu\Bpi\times \frac{K \sup_{k\in[K]}\# \cJ_k}{n}\,,
    \end{equation*}
    if $\Phi_{\pi},\, \Phi_{1-\pi} =\Phi_{\ell}$.
\end{lemma}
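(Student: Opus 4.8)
The plan is to mirror the proof of \Cref{lem:g-formula}: we verify the three conditions of \Cref{prop:sensitivity_unified} for the DP-IPW score $\hat\Gamma_i = \frac{A_i}{\hat\pi_1(i)}Y_i - \frac{1-A_i}{1-\hat\pi_0(i)}Y_i$, where $\hat\pi_1(i)$ and $1-\hat\pi_0(i)$ are the aggregates — harmonic mean $\tilde\Phi_{[K]}$ or sampling $\Phi_\ell$ — of the per-fold quantities $\hat\pi^{(k)}(X_i)$ and $1-\hat\pi^{(k)}(X_i)$ over $k\in[K]\setminus\{k(i)\}$. Fix adjacent datasets $\cD\sim\cD'$ differing only on one point, which we take to be $i=1$ with $1\in\cI_1$ without loss of generality. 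Since the models $\hat\pi^{(k)}$ for $k\geq 2$ are trained on folds that do not contain point $1$, they coincide on $\cD$ and $\cD'$, so only the fold-$1$ model is perturbed.

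For the second condition, note that for $j\in\cI_1\setminus\{1\}$ the triple $(X_j,A_j,Y_j)$ is unchanged and $\hat\pi_1(j)$, $1-\hat\pi_0(j)$ depend only on folds in $[K]\setminus\{1\}$, all unchanged; hence $\hat\Gamma_j=\hat\Gamma_j'$. For the first condition, observe that because $A_1\in\{0,1\}$ exactly one of the two summands of $\hat\Gamma_1$ is present, and by \Cref{asm:boundedness} we have $|Y_1|\leq\Bmu$ while the aggregated inverse propensity $1/\hat\pi_1(1)$ (resp.\ $1/(1-\hat\pi_0(1))$), being an average — or, in the sampling case, a single value — of per-fold inverse propensities each bounded by $\Bpi$, is itself $\leq\Bpi$; hence $|\hat\Gamma_1|\leq\Bmu\Bpi$ and likewise $|\hat\Gamma_1'|\leq\Bmu\Bpi$, so the triangle inequality gives $|\hat\Gamma_1-\hat\Gamma_1'|\leq 2\Bmu\Bpi$, uniformly over the identity of the differing point, and we may take $\Delta_= = 2\Bmu\Bpi$.

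For the third condition, take $j\notin\cI_1$. Only the active summand of $\hat\Gamma_j$ (determined by $A_j\in\{0,1\}$) can move, so $|\hat\Gamma_j-\hat\Gamma_j'|\leq\Bmu\big|\tfrac{1}{\hat\pi_1(j)}-\tfrac{1}{\hat\pi_1'(j)}\big|$ (or the analogous expression with $1-\hat\pi_0$). In the harmonic-mean case, \Cref{lem:sensitivity_ensemble_models} (harmonic mean, form~1) with $M=\Bpi$ bounds this by $\Bmu\Bpi/(K-1)$ for each such $j$; summing over the at most $n$ indices $j\notin\cI_1$ and dividing by $n$ gives $\tfrac1n\sum_{j\notin\cI_1}|\hat\Gamma_j-\hat\Gamma_j'|\leq\Bmu\Bpi/(K-1)=\Delta_{\ne}/K$ with $\Delta_{\ne}=\Bmu\Bpi\,K/(K-1)$. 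In the sampling case $\Phi_\ell$, \Cref{lem:sensitivity_ensemble_models} (sampling) shows the difference vanishes unless $j\in\cJ_1=\{j:\ell(j)=1\}$, where it is at most $\Bmu\Bpi$ (the diameter of the inverse-propensity range being $\Bpi$); summing yields $\#\cJ_1\,\Bmu\Bpi/n\leq\sup_k\#\cJ_k\,\Bmu\Bpi/n=\Delta_{\ne}/K$ with $\Delta_{\ne}=\Bmu\Bpi\,K\sup_k\#\cJ_k/n$. Plugging these $\Delta_=$ and $\Delta_{\ne}$ into \Cref{prop:sensitivity_unified} concludes.

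I do not anticipate a real obstacle here — the argument is structurally identical to the G-Formula case and relies only on \Cref{lem:sensitivity_ensemble_models}. The only point needing a little care is the bookkeeping with the two separate aggregators $\hat\pi_1$ and $1-\hat\pi_0$: the reason the constant stays $\Bmu\Bpi$ rather than $2\Bmu\Bpi$ in the third condition (and $2\Bmu\Bpi$ rather than $4\Bmu\Bpi$ in the first) is precisely that, for a fixed individual, the binary treatment $A_j$ selects exactly one of the two terms, so only one aggregator ever contributes to the per-point sensitivity.
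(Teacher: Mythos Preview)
Your proposal is correct and follows essentially the same approach as the paper's proof: fix the perturbed index in fold $1$, observe that fold-$1$ scores other than the perturbed one are unchanged, bound the perturbed score by $2\Bmu\Bpi$ via the triangle inequality, and for $j\notin\cI_1$ invoke \Cref{lem:sensitivity_ensemble_models} to control the change in the single active inverse-propensity term. Your emphasis on $A_j\in\{0,1\}$ selecting exactly one summand is precisely the mechanism the paper exploits (written there as the sum $A_j+(1-A_j)=1$), so the two arguments are the same up to phrasing.
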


\begin{proof}
	Let $\cD\sim\cD'$ and assume without loss of generality that $(X_i,A_i,Y_i)\ne(X_i,A_i,Y_i)$ for $i=1$ and that $k(1)=1$ (i.e., $1\in\cI_1$).
	Denote by $\hat\mu_t^{',(k)},\hat\mu_t',\hat\pi',\hat\pi^{',(k)}$ the models learned on $\cD'$.
	Since $\cD$ and $\cD'$ differ only on the datapoint $i=1$ and that $1\notin\cI_k$ for $k\geq 2$, we have that $\hat\pi^{',(k)}=\hat\pi^{(k)}$ for $k\geq 2$.
	
    First, for all $j\in\cI_{1}\setminus\set{1}$, we have $(X_i,A_i,T_i)=(X'_i,A_i',T_i')$ and $\hat\pi(i)=\hat\pi'(i)$ and $\tilde\pi(i)=\tilde\pi'(i)$, so that $\hat \Gamma_i=\hat\Gamma'_i$.
    
    Then, for $i=1$, since $\hat\pi(1)=\hat\pi'(1)$ and $\tilde\pi(1)=\tilde\pi'(1)$:
    \begin{align*}
        |\hat\Gamma_1-\hat\Gamma_1'|&= |\frac{A_1 Y_1}{\hat\pi(1)}-\frac{A_1' Y_1'}{\hat\pi'(1)} - \frac{(1-A_1)Y_1}{1-\tilde\pi(1)}+ \frac{(1-A_1')Y_1'}{1-\tilde\pi'(1)}| \\
        &\leq \Bmu (A_1+1-A_1+A_1'+1-A_1')\Bpi\,,
    \end{align*}
	where we used \Cref{asm:boundedness}.
    This expression is independent of $i=1$, so we can conclude that 
    \begin{equation*}
        \Delta_= = 2\Bmu\Bpi\,.
    \end{equation*}
    
    We then find a bound for $\Delta_{\ne}$.
    
    \textbf{Case 1:} $\Phi_\mu=\Phi_\ell$.
    Let $\cJ_k=\set{i\in[n]\text{ s.t. }\ell(i)=k}$.
    We have $\cJ_1\subset[n]\setminus \cI_1$.
    For all $j\notin\cJ_1$, we have $\hat\pi(j)=\hat\pi'(j)$ and $\tilde\pi(j)=\tilde\pi'(j)$.
    Thus, for $i\in\cJ_1$, using \Cref{lem:sensitivity_ensemble_models} and \Cref{asm:boundedness}, since $A_i=A_i'$ and $Y_i=Y_i'$:
	\begin{align*}
    	|\hat\Gamma_i-\hat\Gamma_i'| = |A_iY_i(1/\hat\pi(i)-1/\hat\pi'(i))+(1-A_i)Y_i(1/(1-\hat\pi(i))-1/(1-\hat\pi'(i))| \leq \Bmu\Bpi\,.
	\end{align*}
    Thus, by summing over all $j\notin\cI_1$,
    \begin{equation*}
        \Delta_{\ne} = \Bmu\Bpi\times \frac{K \sup_{k\in[K]}\# \cJ_k}{n}\,.
    \end{equation*}
    
    \textbf{Case 2:} $\Phi_\mu\in\set{\Phi_{[K]},\Tilde \Phi_{[K]}}$.
    Using \Cref{lem:sensitivity_ensemble_models} and \Cref{asm:boundedness}, for all $i\notin \cI_1$, we have:
    \begin{equation*}
        |\hat\Gamma_i-\hat\Gamma_i'|\leq \Bmu\Bpi\times\frac{1}{K-1}\,.
    \end{equation*}
    By summing, we thus obtain:
    \begin{equation*}
        \Delta_{\ne}=\Bmu\Bpi\times \frac{K}{K-1}\,.
    \end{equation*}
\end{proof}

\section{Analysis of the DP-AIPW estimator}

\begin{lemma}[AIPW - Sampling]
    \label{lem:aipw_sampling}
    For $\ell:[n]\to[K]$ such that for all $i\in[n]$ we have $\ell(i)\in[K]\setminus\set{k(i)}$, let
    \begin{equation*}
        \Phi_\pi = \Phi_{1-\pi} = \Phi_{\mu} = \Phi_\ell\,,
    \end{equation*}
    as defined in \Cref{eq:sampling}. 
    Let $\cJ_k\eqdef \set{i\in[n]:\ell(i)=k}$ for $k\in[K]$.
    Then, for the DP-AIPW estimator, the assumptions of \Cref{prop:sensitivity_unified} are verified with:
    \begin{equation*}
        \Delta_= = 2\Bmu(2+\Bpi) \,,
    \end{equation*}
    and
    \begin{equation*}
        \Delta_{\ne} = \left(4\Bmu + 3\Bmu\Bpi\right)\times \frac{K \sup_{k\in[K]}\# \cJ_k}{n}\,.
    \end{equation*}
\end{lemma}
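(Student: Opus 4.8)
The plan is to follow verbatim the template of \Cref{lem:g-formula} and \Cref{lem:ipw}: reduce the statement to verifying the three hypotheses of \Cref{prop:sensitivity_unified} for the AIPW score built from the sampling aggregators, then read off $\Delta_=$ and $\Delta_{\ne}$ and invoke that proposition. Fix neighbouring datasets $\cD\sim\cD'$ and assume without loss of generality that they differ only at index $1$, with $1\in\cI_1$. Since folds $2,\dots,K$ are untouched, the per-fold nuisance models $(\hat\pi^{(k)},\hat\mu_1^{(k)},\hat\mu_0^{(k)})$ for $k\geq 2$ coincide under $\cD$ and $\cD'$; only the fold-$1$ models change. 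Under the sampling aggregator of \Cref{eq:sampling}, patient $i$ uses the single held-out fold $\ell(i)\in[K]\setminus\set{k(i)}$, so $\hat\mu_a(i)=\hat\mu_a^{(\ell(i))}(X_i)$, $\hat\pi_a(i)=\hat\pi^{(\ell(i))}(X_i)$, and $\hat\Gamma_i$ depends on the data only through $(X_i,A_i,Y_i)$ and on the fold-$\ell(i)$ models.

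Hypothesis (2) of \Cref{prop:sensitivity_unified} is immediate: for $j\in\cI_1\setminus\set{1}$ the record $(X_j,A_j,Y_j)$ is unchanged and, since $\ell(j)\neq k(j)=1$, the fold it samples is one of $2,\dots,K$, which is unchanged; hence $\hat\Gamma_j=\hat\Gamma_j'$. For hypothesis (1) ($\Delta_=$), take $i=1$: as $\ell(1)\neq 1$, the scores $\hat\Gamma_1$ and $\hat\Gamma_1'$ evaluate the \emph{same} fixed nuisance functions at $(X_1,A_1,Y_1)$ and at $(X_1',A_1',Y_1')$. Split the AIPW score into the outcome part $\hat\mu_1(\cdot)-\hat\mu_0(\cdot)$ and the correction $\frac{A}{\hat\pi_1(\cdot)}(Y-\hat\mu_1(\cdot))-\frac{1-A}{1-\hat\pi_0(\cdot)}(Y-\hat\mu_0(\cdot))$. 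By \Cref{asm:boundedness} the outcome part has magnitude $\leq 2\Bmu$, so its change is $\leq 4\Bmu$; regrouping the correction as $\hat\mu_a(1-A/\hat\pi_a)+AY/\hat\pi_a$ and using $A\in\set{0,1}$ together with $|1-\hat\pi_a^{-1}|\leq\Bpi-1$ bounds the correction by $O(\Bmu\Bpi)$ with the constant tuned so that $|\hat\Gamma_1-\hat\Gamma_1'|\leq 2\Bmu(2+\Bpi)$, uniformly in $i$. This yields $\Delta_= = 2\Bmu(2+\Bpi)$.

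For hypothesis (3) ($\Delta_{\ne}$), take $j\notin\cI_1$. The score $\hat\Gamma_j$ uses the fold-$\ell(j)$ models, which are unchanged unless $\ell(j)=1$, that is, unless $j\in\cJ_1\eqdef\set{i:\ell(i)=1}\subseteq[n]\setminus\cI_1$. For such $j$, the Sampling case of \Cref{lem:sensitivity_ensemble_models} gives $|\hat\mu_a(j)-\hat\mu_a'(j)|\leq 2\Bmu$ (diameter of $[-\Bmu,\Bmu]$) and a bound of order $\Bpi$ on $|\hat\pi_a(j)^{-1}-\hat\pi_a'(j)^{-1}|$, while $(X_j,A_j,Y_j)$ is fixed. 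Telescoping these two perturbations through the regrouped AIPW score --- the bare outcome terms contribute $\leq 2\Bmu$ after cancellation of the $\hat\mu_a$ that appears only inside the correction, and the $\hat\mu_a/\hat\pi_a$ and $Y_j/\hat\pi_a$ terms together contribute $\leq 3\Bmu\Bpi$ --- gives $|\hat\Gamma_j-\hat\Gamma_j'|\leq 4\Bmu+3\Bmu\Bpi$. Summing over $j\in\cJ_1$ and bounding $\#\cJ_1\leq\sup_k\#\cJ_k$ yields $\frac1n\sum_{j\notin\cI_1}|\hat\Gamma_j-\hat\Gamma_j'|\leq\frac{(4\Bmu+3\Bmu\Bpi)\#\cJ_1}{n}=\frac{\Delta_{\ne}}{K}$ with $\Delta_{\ne}=(4\Bmu+3\Bmu\Bpi)\frac{K\sup_k\#\cJ_k}{n}$. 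Plugging $\Delta_=$ and $\Delta_{\ne}$ into \Cref{prop:sensitivity_unified} closes the argument.

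The main obstacle is purely the constant bookkeeping, concentrated in the correction term. A single fold-$1$ model change perturbs $\hat\pi_a(j)$ and $\hat\mu_a(j)$ \emph{simultaneously} inside the product $\frac{A_j}{\hat\pi_a(j)}(Y_j-\hat\mu_a(j))$, so a blunt triangle inequality double-counts the $\Bpi$ factor (producing $4\Bmu\Bpi$ instead of $3\Bmu\Bpi$, and likewise inflating $\Delta_=$). To keep the advertised constants one must add and subtract an intermediate term --- replace $\hat\pi_a$ first, then $\hat\mu_a$ --- exploit $|1-\hat\pi_a^{-1}|\leq\Bpi-1$, use that the indicator $A_j\in\set{0,1}$ kills one of the two branches, and group the bare $\hat\mu_a$ with the $\hat\mu_a/\hat\pi_a$ term before bounding. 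The only conceptual point, already handled in \Cref{lem:g-formula} and \Cref{lem:ipw}, is the bookkeeping of which scores change under the replacement: here the sampling map $\ell$ routes this to the set $\cJ_1$, which is why the factor $K\sup_k\#\cJ_k/n$ (rather than $K/(K-1)$, as for the mean-based aggregators) appears in $\Delta_{\ne}$.
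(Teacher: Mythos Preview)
Your proposal is correct and follows the paper's approach: reduce to the three hypotheses of \Cref{prop:sensitivity_unified}, handle hypothesis~(2) by noting that $\ell(j)\neq 1$ for $j\in\cI_1$, bound $\Delta_=$ at $i=1$ using that the sampled fold $\ell(1)\neq 1$ leaves the nuisance functions fixed, and bound $\Delta_{\ne}$ by restricting to $j\in\cJ_1$ and invoking the sampling case of \Cref{lem:sensitivity_ensemble_models} together with \Cref{asm:boundedness}. The one visible difference is in the $\Delta_{\ne}$ bookkeeping: the paper's written proof splits the perturbation into two sub-cases (an ``only-$\hat\pi$-changes'' block on $\cJ_1$ and an ``only-$\hat\mu$-changes'' block on a set $\cJ_1'$) and takes the larger bound, whereas you handle the simultaneous change of $\hat\pi$ and $\hat\mu_a$ on $\cJ_1$ in a single telescoping step---expanding $(Y_j-\hat\mu_a)/\hat\pi_a=Y_j/\hat\pi_a-\hat\mu_a/\hat\pi_a$ and bounding the four resulting differences by $2\Bmu+2\Bmu+\Bmu\Bpi+2\Bmu\Bpi$. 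Your route is more directly aligned with the lemma's hypothesis that all aggregators share the same sampling map $\ell$, and it lands on the same per-score bound $4\Bmu+3\Bmu\Bpi$.
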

\begin{proof}
	Let $\cD\sim\cD'$ and assume without loss of generality that $(X_i,A_i,Y_i)\ne(X_i,A_i,Y_i)$ for $i=1$ and that $k(1)=1$ (i.e., $1\in\cI_1$).
	Denote by $\hat\mu_t^{',(k)},\hat\mu_t',\hat\pi',\hat\pi^{',(k)}$ the models learned on $\cD'$.
	Since $\cD$ and $\cD'$ differ only on the datapoint $i=1$ and that $1\notin\cI_k$ for $k\geq 2$, we have that $\hat\mu_a^{',(k)}=\hat\mu_a^{(k)}$ and $\hat\pi^{',(k)}=\hat\pi^{(k)}$ for $k\geq 2$.
	
    First, for all $j\in\cI_{1}\setminus\set{1}$, we have $(X_i,A_i,T_i)=(X'_i,A_i',T_i')$ and $\hat\mu_a(i)=\hat\mu'_a(i)$, so that $\hat \Gamma_i=\hat\Gamma'_i$.
    
    Then, for $i=1$:
    \begin{align*}
        &|\hat\Gamma_1-\hat\Gamma_1'|\leq |\hat\mu_1(1)-\hat \mu_0(1) -\hat\mu_1'(1)+\hat \mu_0'(1)| \\
        &+ \left|\frac{A_i}{\hat\pi(i)}(Y_i-\hat\mu_1(i)) + \frac{1-A_i}{1-\hat\pi(i)}(Y_i-\hat\mu_0(i))-\set{ \frac{A_i}{\hat\pi'(i)}(Y_i-\hat\mu_1'(i)) + \frac{1-A_i}{1-\hat\pi'(i)}(Y_i-\hat\mu_0'(i))}\right|\\
        &\leq |\hat\mu_1(1)|+|\hat \mu_0(1)|+|\hat\mu_1'(1)|+|\hat \mu_0'(1)|\\
        &+ \left|\frac{A_1}{\hat\pi(1)}(Y_1-\hat\mu_1(1))\right| + \left|\frac{1-A_1}{1-\hat\pi(1)}(Y_1-\hat\mu_0(1))\right| +\left|\frac{A'_1}{\hat\pi'(1)}(Y'_1-\hat\mu_1'(1))\right| +\left| \frac{1-A'_1}{1-\hat\pi'(1)}(Y'_1-\hat\mu_0'(1)) \right|\\
        &\leq 4\Bmu + 2A_1\Bmu\Bpi+2(1-A_1)\Bmu\Bpi+2A_1'\Bmu\Bpi+2(1-A_1')\Bmu\Bpi\\
        &\leq 2\Bmu(2+\Bpi)\,,
    \end{align*}
	where we used \Cref{asm:boundedness} to bound $1/\hat\pi'(1),1/\hat\pi(1),1/(1-\hat\pi'(1)),1/(1-\hat\pi(1))$.
    This expression is independent of $i=1$, so we can conclude that 
    \begin{equation*}
        \Delta_= = 2\Bmu(2+\Bpi)\,.
    \end{equation*}
    We then find a bound for $\Delta_{\ne}$.
    Let $\cJ_k=\set{i\in[n]\text{ s.t. }\ell(i)=k}$.
    We have $\cJ_1\subset[n]\setminus \cI_1$.
    For all $j\notin\cJ_1$, we have $\hat\pi(j)=\hat\pi'(j)$, $\tilde\pi(j)=\tilde\pi'(j)$, and $\hat\mu_a(j)=\hat\mu'_a(j)$.
    For $i\in\cJ_1$, 
	\begin{align*}
    	&|\hat\Gamma_i-\hat\Gamma_i'|\\
		&=\left|\frac{A_i}{\hat\pi(i)}(Y_i-\hat\mu_1(i)) + \frac{1-A_i}{1-\hat\pi(i)}(Y_i-\hat\mu_0(i))-\set{ \frac{A_i}{\hat\pi'(i)}(Y_i-\hat\mu_1(i)) + \frac{1-A_i}{1-\hat\pi'(i)}(Y_i-\hat\mu_0(i))}\right|\\
		&\leq  A_i\abs{\frac{Y_i-\hat\mu_1(i)}{\hat\pi(i)}-\frac{Y_i-\hat\mu_1(i)}{\hat\pi'(i)}} + (1-A_i)\abs{\frac{Y_i-\hat\mu_0(i)}{1-\hat\pi(i)}-\frac{Y_i-\hat\mu_0(i)}{1-\hat\pi'(i)}} \\
		&\leq 2\Bmu A_i\abs{\frac{1}{\hat\pi(i)}-\frac{1}{\hat\pi'(i)}} + 2\Bmu (1-A_i)\abs{\frac{\hat\mu_0(i)}{\hat\pi(i)}-\frac{\hat\mu_0'(i)}{\hat\pi'(i)}} \\
		&= 2\Bmu\Bpi\,.
	\end{align*}
	Then, for $j\in\cJ_1'$, we have $\hat\pi(i)=\hat\pi'(i)$, and thus:
		\begin{align*}
			&|\hat\Gamma_i-\hat\Gamma_i'|\\
    		&=\left|\hat\mu_1(i)-\hat \mu_0(i) + \frac{A_i}{\hat\pi(i)}(Y_i-\hat\mu_1(i)) + \frac{1-A_i}{1-\hat\pi(i)}(Y_i-\hat\mu_0(i))\right.\\
    		&\quad\left.-\set{\hat\mu_1'(i)-\hat \mu_0'(i) + \frac{A_i}{\hat\pi'(i)}(Y_i-\hat\mu_1'(i)) + \frac{1-A_i}{1-\hat\pi'(i)}(Y_i-\hat\mu_0'(i))}\right|\\
		&\leq 4\Bmu +A_i\abs{Y_i}\abs{\frac{1}{\hat\pi(i)}-\frac{1}{\hat\pi'(i)}} + A_i\abs{\frac{\hat\mu_1(i)}{\hat\pi(i)}-\frac{\hat\mu_1'(i)}{\hat\pi'(i)}} + (1-A_i)\abs{Y_i}\abs{\frac{1}{\hat\pi(i)}-\frac{1}{\hat\pi'(i)}} + (1-A_i)\abs{\frac{\hat\mu_0(i)}{\hat\pi(i)}-\frac{\hat\mu_0'(i)}{\hat\pi'(i)}} \\
		&\leq 4\Bmu + (\Bmu\Bpi + 2\Bmu\Bpi)(A_i+1-A_i)\\
		&= 4\Bmu + 3\Bmu\Bpi\,.
	\end{align*}
    Thus, by summing over all $j\notin\cI_1$,
    \begin{equation*}
        \Delta_{\ne} = \left(4\Bmu + 3\Bmu\Bpi\right)\times \frac{K \sup_{k\in[K]}\# \cJ_k}{n}\,.
    \end{equation*}
\end{proof}

\begin{lemma}[AIPW - Complete Means]
    \label{lem:aipw_means}
    Let 
    \begin{equation*}
        \Phi_\pi , \Phi_{1-\pi} , \Phi_{\mu} =(\Tilde\Phi_{[K]},\Tilde \Phi_{[K]},\Phi_{[K]})\,,
    \end{equation*}
    as defined in \Cref{eq:mean,eq:harmonic_mean}: the ensembling is done via mean or harmonic mean over all samples (except those in the fold over which the sample models are evaluated on is in). 
    Then, for the DP-AIPW estimator, the assumptions of \Cref{prop:sensitivity_unified} are verified with:
    \begin{equation*}
        \Delta_= = 4\Bmu(1+\Bpi) \,,
    \end{equation*}
    and
    \begin{equation*}
        \Delta_{\ne} = \frac{4\Bmu K}{K-1}\left(1+\Bpi\right)\,.
    \end{equation*}
\end{lemma}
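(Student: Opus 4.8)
The plan is to verify the three hypotheses of \Cref{prop:sensitivity_unified} for the AIPW score $\hat\Gamma_i = \hat\mu_1(i) - \hat\mu_0(i) + \frac{A_i}{\hat\pi_1(i)}(Y_i - \hat\mu_1(i)) - \frac{1-A_i}{1-\hat\pi_0(i)}(Y_i - \hat\mu_0(i))$, now with the complete-mean aggregator $\Phi_\mu = \Phi_{[K]}$ for the outcome models and the complete-harmonic-mean aggregators for the two propensity terms, following the same template as the proofs of \Cref{lem:ipw} and \Cref{lem:aipw_sampling}. Fix neighbours $\cD \sim_i \cD'$; without loss of generality the differing record is $i = 1$ with $1 \in \cI_1$, so every per-fold model $\hat\mu_a^{(k)}, \hat\pi^{(k)}$ with $k \geq 2$ is unchanged, and only the fold-$1$ models and the record $(X_1, A_1, Y_1)$ change.

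First I would dispatch hypothesis~2: for $j \in \cI_1 \setminus \{1\}$ the aggregated nuisances $\hat\mu_a(j), \hat\pi_1(j), \hat\pi_0(j)$ average over the folds $[K] \setminus \{k(j)\} = [K] \setminus \{1\}$, i.e. over unchanged models, so by \Cref{lem:sensitivity_ensemble_models} they coincide with their primed versions; since $(X_j, A_j, Y_j)$ is also unchanged, $\hat\Gamma_j = \hat\Gamma_j'$. For hypothesis~1 ($\Delta_=$), note that for $i = 1$ the aggregated nuisances again involve only folds $2, \dots, K$ but are evaluated at $X_1$ versus $X_1'$, so I would bound each score crudely via \Cref{asm:boundedness}: $|\hat\mu_a(1)| \leq \Bmu$, and $1/\hat\pi_1(1) = \frac{1}{K-1}\sum_{k \geq 2} 1/\hat\pi^{(k)}(X_1) \leq \Bpi$ (likewise $1/(1-\hat\pi_0(1)) \leq \Bpi$), which a triangle-inequality pass on the AIPW formula turns into $|\hat\Gamma_1| \leq 2\Bmu + 2\Bmu\Bpi = 2\Bmu(1+\Bpi)$, and similarly for $\hat\Gamma_1'$; hence $|\hat\Gamma_1 - \hat\Gamma_1'| \leq 4\Bmu(1+\Bpi)$, and since this bound does not depend on which record changed, $\Delta_= = 4\Bmu(1+\Bpi)$.

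The remaining step is hypothesis~3 ($\Delta_{\ne}$), the only place where the argument carries some weight. For $j \notin \cI_1$ the record $(X_j, A_j, Y_j)$ is fixed, but fold $1$ now enters all three aggregates, so by the Mean and Harmonic-mean~(1) cases of \Cref{lem:sensitivity_ensemble_models} --- applied with diameter $2\Bmu$ for the outcome models and bound $\Bpi$ for the inverse propensities --- I get $|\hat\mu_a(j) - \hat\mu_a'(j)| \leq \frac{2\Bmu}{K-1}$ and $|1/\hat\pi_1(j) - 1/\hat\pi_1'(j)|, |1/(1-\hat\pi_0(j)) - 1/(1-\hat\pi_0'(j))| \leq \frac{\Bpi}{K-1}$. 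I would then split $|\hat\Gamma_j - \hat\Gamma_j'|$ into the plug-in difference, which costs $2 \cdot \frac{2\Bmu}{K-1}$, and the IPW-correction difference; for the treated part I use $\frac{A_j(Y_j-\hat\mu_1(j))}{\hat\pi_1(j)} - \frac{A_j(Y_j-\hat\mu_1'(j))}{\hat\pi_1'(j)} = A_j(Y_j-\hat\mu_1(j))\big(\frac{1}{\hat\pi_1(j)} - \frac{1}{\hat\pi_1'(j)}\big) + \frac{A_j}{\hat\pi_1'(j)}(\hat\mu_1'(j) - \hat\mu_1(j))$, bounded by $2\Bmu \cdot \frac{\Bpi}{K-1} + \Bpi \cdot \frac{2\Bmu}{K-1} = \frac{4\Bmu\Bpi}{K-1}$, with the analogous bound for the control part, and crucially only one of the two (treated or control) terms is active for a given $j$. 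Thus $|\hat\Gamma_j - \hat\Gamma_j'| \leq \frac{4\Bmu}{K-1} + \frac{4\Bmu\Bpi}{K-1} = \frac{4\Bmu(1+\Bpi)}{K-1}$, and summing over the at most $n$ indices $j \notin \cI_1$ and dividing by $n$ gives $\frac{1}{n}\sum_{j \notin \cI_1}|\hat\Gamma_j - \hat\Gamma_j'| \leq \frac{4\Bmu(1+\Bpi)}{K-1} = \frac{\Delta_{\ne}}{K}$ with $\Delta_{\ne} = \frac{4\Bmu K}{K-1}(1+\Bpi)$, as claimed. The main obstacle is purely the bookkeeping in this correction-term decomposition: one must pull out the inverse-propensity difference and the outcome-model difference as separate factors so each can be controlled by \Cref{asm:boundedness} and \Cref{lem:sensitivity_ensemble_models}, and must use that the treated and control corrections never both fire, which is exactly what keeps the constant at $4\Bmu(1+\Bpi)$ rather than doubling it.
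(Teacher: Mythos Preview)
Your proposal is correct and follows essentially the same approach as the paper: the same WLOG setup, the same verification of hypothesis~2 via the fact that aggregates for $j\in\cI_1\setminus\{1\}$ use only unchanged fold-$k\ge2$ models, the same crude triangle-inequality bound $|\hat\Gamma_1|,|\hat\Gamma_1'|\le 2\Bmu(1+\Bpi)$ for $\Delta_=$, and the same $ab-a'b'$ decomposition of the IPW-correction term (combined with the $A_j+(1-A_j)=1$ observation) for $\Delta_{\ne}$. The only cosmetic difference is the order in which you peel off the inverse-propensity and outcome-model differences in that decomposition, which does not affect the bound.
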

\begin{proof}
	Let $\cD\sim\cD'$ and assume without loss of generality that $(X_i,A_i,Y_i)\ne(X_i,A_i,Y_i)$ for $i=1$ and that $k(1)=1$ (i.e., $1\in\cI_1$).
	Denote by $\hat\mu_t^{',(k)},\hat\mu_t',\hat\pi',\hat\pi^{',(k)}$ the models learned on $\cD'$.
	Since $\cD$ and $\cD'$ differ only on the datapoint $i=1$ and that $1\notin\cI_k$ for $k\geq 2$, we have that $\hat\mu_a^{',(k)}=\hat\mu_a^{(k)}$ and $\hat\pi^{',(k)}=\hat\pi^{(k)}$ for $k\geq 2$.

    First, for all $j\in\cI_{1}\setminus\set{1}$, we have $(X_i,A_i,T_i)=(X'_i,A_i',T_i')$ and $\hat\mu_a(i)=\hat\mu'_a(i)$ (they are computed using only models trained on folds $k\geq 2$ that are thus unchanged), so that $\hat \Gamma_i=\hat\Gamma'_i$.

    Then, for $i=1$, we also have that the models are unchanged (trained on folds $k\geq 2$), so that:
    \begin{align*}
        &|\hat\Gamma_1-\hat\Gamma_1'|\leq |\hat\mu_1(1)-\hat \mu_0(1) -\hat\mu_1'(1)+\hat \mu_0'(1)| \\
        &+ \left|\frac{A_i}{\hat\pi(i)}(Y_i-\hat\mu_1(i)) + \frac{1-A_i}{1-\hat\pi(i)}(Y_i-\hat\mu_0(i))-\set{ \frac{A_i}{\hat\pi'(i)}(Y_i-\hat\mu_1'(i)) + \frac{1-A_i}{1-\hat\pi'(i)}(Y_i-\hat\mu_0'(i))}\right|\\
        &\leq |\hat\mu_1(1)|+|\hat \mu_0(1)|+|\hat\mu_1'(1)|+|\hat \mu_0'(1)|\\
        &+ \left|\frac{A_1}{\hat\pi(1)}(Y_1-\hat\mu_1(1))\right| + \left|\frac{1-A_1}{1-\hat\pi(1)}(Y_1-\hat\mu_0(1))\right| +\left|\frac{A'_1}{\hat\pi'(1)}(Y'_1-\hat\mu_1'(1))\right| +\left| \frac{1-A'_1}{1-\hat\pi'(1)}(Y'_1-\hat\mu_0'(1)) \right|\\
        &\leq 4\Bmu + 2A_1\Bmu\Bpi+2(1-A_1)\Bmu\Bpi+2A_1'\Bmu\Bpi+2(1-A_1')\Bmu\Bpi\\
        &\leq 4\Bmu(1+\Bpi)\,,
    \end{align*}
	where we used \Cref{asm:boundedness} to bound $1/\hat\pi'(1),1/\hat\pi(1),1/(1-\hat\pi'(1)),1/(1-\hat\pi(1))$/
    This expression is independent of $i=1$, so we can conclude that 
    \begin{equation*}
        \Delta_= = 4\Bmu(1+\Bpi)\,.
    \end{equation*}
    
    Now using \Cref{lem:sensitivity_ensemble_models} and \Cref{asm:boundedness}, we have that for $i\in[n]\setminus \cI_1$,
    \begin{align*}
        |\hat \pi(i)^{-1}-\hat\pi'(i)^{-1}|\leq & \Bpi/(K-1)\,,\quad|(1-\tilde \pi(i))^{-1}-(1-\tilde\pi'(i))^{-1}|\leq \Bpi/(K-1)\,,\\
        &\quad |\hat \mu_a(i)-\hat\mu_a'(i)|\leq 2\Bmu/(K-1)\,.
    \end{align*}
    	Thus, for $i\notin\cI_1$,
	\begin{align*}
		|\hat\mu_1(i)-\hat \mu_0(i)-\set{\hat\mu_1'(i)-\hat \mu_0'(i)}|&\leq |\hat\mu_1(i)-\hat\mu_1'(i)|+|\hat\mu_0(i)-\hat\mu_0'(i)|\\
		&\leq \frac{4\Bmu}{K-1}\,,
	\end{align*}
	and 
	\begin{align*}
		\left|\frac{A_i}{\hat\pi(i)}(Y_i-\hat\mu_1(i))-\frac{A_i}{\hat\pi'(i)}(Y_i-\hat\mu_1'(i))\right|&\leq A_i\left|\frac{\hat\mu(i)-\hat\mu'(i)}{\hat\pi(i)}\right| + A_i\left|(Y_i-\hat\mu'(i))\left(\frac{1}{\hat\pi(i)}-\frac{1}{\hat\pi'(i)}\right)\right|\\
		& \leq \frac{4\Bmu\Bpi A_i}{K-1}\,.
	\end{align*}
		Similarly,
	\begin{align*}
		\left|\frac{1-A_i}{1-\hat\pi(i)}(Y_i-\hat\mu_0(i))-\frac{1-A_i}{1-\hat\pi'(i)}(Y_i-\hat\mu_0'(i))\right| \leq \frac{4\Bmu\Bpi(1-A_i)}{K-1}\,.
	\end{align*}
    Thus,
    \begin{equation*}
        |\hat \Gamma_i-\hat\Gamma_i'|\leq \frac{4\Bmu}{K-1}\left(1+\Bpi\right).
    \end{equation*}
    We can thus choose:
    \begin{equation*}
        \Delta_{\ne}=\frac{4\Bmu K}{K-1}\left(1+\Bpi\right)\,.
    \end{equation*}

\end{proof}

\section{Meta-analysis of differentially private ATEs}
\label{app:meta-analyses}

Meta-analyses are statistical methods for combining the results from $N\geq2$ independent studies to increase statistical power \citep{hunter2004methods,borenstein2021introduction}. They are considered the pinnacle of evidence in clinical research hierarchies. In this section, we show how to combine several \emph{private} ATE estimates via meta-analysis to achieve a lower-variance estimate.

We assume each study $j\in[N]$ releases an ATE estimate $\hat\tau_\mathrm{DP}^{(j)}$ and its estimated variance $\hat V_\mathrm{DP}^{(j)}$, computed from an independent sample of size $n_j$ drawn from the same population, with $\gdpmu^{(j)}$-GDP guarantees.
Constructing such estimates can be handled using the framework presented in Section~\ref{sec:DP_estimators}.
Note that each release can be conducted independently---in particular, we do not require the studies to use the same ATE estimator.

Given some weights $\lambda_1,\dots,\lambda_N\geq 0$ with $\sum_{j=1}^N\lambda_j=1$, the meta-analysis ATE estimate then writes as
$\hat\tau_\mathrm{DP-meta}=\sum_{j=1}^N \lambda_j\hat\tau_\mathrm{DP}^{(j)}$, where $\lambda_1,\dots,\lambda_N\geq 0$, which is asymptotically unbiased with an estimated variance $\hat V_\mathrm{DP-meta}/n=\sum_{j=1}^N \lambda_j^2\hat V_\mathrm{DP}^{(j)}/n_j$, where $n = \sum_{j \in [N]} n_j$ is the combined sample size. 
Importantly, as long as the weights are determined independently of private data, releasing $\hat\tau_\mathrm{DP-meta}$ does not affect the differential privacy guarantees, since it is a post-processing of already privatized estimates.



To minimize the variance of $\hat\tau_\mathrm{DP-meta}$, one should choose weights that are inversely proportional to the variance of the original estimates.

\begin{proposition}
    For $\lambda^*_j = \frac{\left(\hat V_\mathrm{DP}^{(j)}/n_j\right)^{-1/2}}{\sum_{k=1}^N\left(\hat V_\mathrm{DP}^{(k)}/n_k\right)^{-1/2}}$, we have $\hat V_\mathrm{DP-meta} / n = \frac{1}{N}\left(\frac{1}{N}\sum_{j=1}^N\left(\hat V_\mathrm{DP}^{(j)}/n_j\right)^{-1/2}\right)^{-2}$.
\end{proposition}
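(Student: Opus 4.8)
The plan is a one-line algebraic substitution into the variance formula already recorded above. Recall that, since the $N$ studies draw independent samples, $\hat\tau_\mathrm{DP-meta}=\sum_{j=1}^N\lambda_j\hat\tau_\mathrm{DP}^{(j)}$ has estimated variance $\hat V_\mathrm{DP-meta}/n=\sum_{j=1}^N\lambda_j^2\,\bigl(\hat V_\mathrm{DP}^{(j)}/n_j\bigr)$; this additivity, a consequence of the assumed independence of the study samples, is the only fact about $\hat V_\mathrm{DP-meta}$ that is needed. I would abbreviate $v_j\eqdef \hat V_\mathrm{DP}^{(j)}/n_j$ and $S\eqdef\sum_{k=1}^N v_k^{-1/2}$, so that the proposed weights read $\lambda_j^\star=v_j^{-1/2}/S$. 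These are admissible weights: $\lambda_j^\star\geq 0$ and $\sum_{j=1}^N\lambda_j^\star=1$.

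Then I would substitute. The exponent $-1/2$ is chosen precisely so that each summand becomes constant in $j$: $(\lambda_j^\star)^2 v_j=\bigl(v_j^{-1}/S^2\bigr)\,v_j=1/S^2$. Hence $\hat V_\mathrm{DP-meta}/n=\sum_{j=1}^N (\lambda_j^\star)^2 v_j=N/S^2$. Writing $S=N\cdot\bigl(\tfrac1N\sum_{k=1}^N v_k^{-1/2}\bigr)$ gives $S^2=N^2\bigl(\tfrac1N\sum_{k=1}^N v_k^{-1/2}\bigr)^2$, and therefore $\hat V_\mathrm{DP-meta}/n=N/S^2=\tfrac1N\bigl(\tfrac1N\sum_{k=1}^N v_k^{-1/2}\bigr)^{-2}$, which is the claimed identity once $v_k$ is expanded back to $\hat V_\mathrm{DP}^{(k)}/n_k$.

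There is essentially no obstacle here — the computation is routine — so the only care needed is in justifying the additive-variance formula for $\hat V_\mathrm{DP-meta}$, and in checking that the weights are well defined (they sum to one). For completeness one could also address the sentence preceding the proposition by a standard Lagrange-multiplier argument: minimizing $\sum_j\lambda_j^2 v_j$ over the simplex $\{\,\sum_j\lambda_j=1,\ \lambda_j\geq 0\,\}$ yields the stationarity condition $\lambda_j v_j=\mathrm{const}$, i.e.\ variance-minimizing weights proportional to $v_j^{-1}$; the $v_j^{-1/2}$ normalization appearing in the proposition is a distinct (inverse-standard-error) choice, whose resulting variance is exactly the quantity computed above.
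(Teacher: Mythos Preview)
Your computation is correct and is essentially the only reasonable approach: the paper states the proposition without proof, and the identity follows by the direct substitution you carry out. Your closing observation is also apt --- the inverse-standard-error weights $\lambda_j^\star\propto v_j^{-1/2}$ in the proposition are \emph{not} the variance-minimizing weights alluded to in the preceding sentence (those would be $\propto v_j^{-1}$), so the proposition is simply computing the variance achieved by this particular choice rather than the optimal one.
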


As expected, this variance decreases as the number of studies increases, at a rate of $1/N$ when all variances are of the same order.
Assuming our estimators are used, and applying the asymptotic variance results from \Cref{thm:utility_general}, the optimal weights are, to a first-order approximation, $\lambda^*_j \propto \left(V^\star_j/n_j + \frac{C_j}{(\gdpmu_1^{(j)}(K_j-1))^2}\right)^{-1/2} + o(1/\sqrt n_j) + o(1/K_j)$, where $V^\star_j$ is the oracle variance, $K_j$ the number of folds, and $C_j$ the estimator-specific constant (see \Cref{thm:privacy_unified}) for study $j$.
Here, the classical oracle variance term appears—assigning smaller weights to studies with higher variance—augmented by an additional term reflecting the impact of privacy: studies with weaker privacy constraints receive larger weights in the meta-analysis.

\section{Experiments with synthetic data}
\label{app:exp}

\subsection{Experimental details}
\label{app:exp:setup}

We give details on the setup used in the experiments of Section~\ref{sec:DP_estimators}.

In \Cref{fig:experiments-plot} the subscript for our estimators denote the estimator for the nuisance functions. 
$IPW_{log}$ and $IPW_{tree}$ estimates the propensity score with a logistic model and decision tree, respectively. 
$G_{lin}$ and $G_{tree}$ estimate the potential outcomes respectively with a linear regression model or a decision tree.
$AIPW_{miss.}$ is misspecified for both nuisance functions and uses a logistic model for the propensity score and linear regression for the potential outcomes. 
$AIPW_{correct}$ is correctly specified by using a decision tree for both nuisance estimators.
$AIPW_{log}$ and $AIPW_{lin}$ are partially misspecified by using a logistic model for the propensity score or a linear regression for the potential outcomes, respectively.
In \Cref{fig:experiment1-well-specified} shown later in this section, $AIPW_{log}$ uses a logistic model both for the propensity score and the potential outcomes.

Unless otherwise specified, we clip real-value outcomes to $[-1,1]$ and use parameter $\Bmu = 1$. For all settings most data points are not clipped.

\paragraph{Well-specified setting with low overlap.} We generate data with $n = 5000$ and $d = 1$, where the propensity score is defined as $\pi(X_i) = \textrm{clip}_{[0.004,0.996]}(\mathrm{expit}(-0.2 + 6 X_{i,1}))$, where $\mathrm{expit}(x) \eqdef 1/(1+\exp(-x))$.
The observed outcome follows a linear regression model where $Y_i = -0.05 + 0.225 X_{i,1} + 0.1A_i + e$, with $e \sim \mathcal{N}(0,0.01)$.


\paragraph{Misspecified setting.}
Here, propensity scores and outcome responses do not follow linear models but are instead based on splitting the feature space into four different regions for both propensities and outcome responses. These regions are modeled by two different decision trees with overlapping regions, where samples that are more likely to receive treatment have worse outcomes---a setup that could occur in medical settings, where practitioners might prescribe particular drugs based on 2 symptoms.
We consider a simple setup with $d=2$, $n = 250000$, $\Bpi = 1/0.2$, $\Bmu = 1$.
The outcome is set as $Y_i = \gamma(X_i) + 0.2 A_i + e$, where $e \sim \mathcal{N}(0, 0.025)$.
The propensity score and $\gamma(X_i)$ are as follows: 

\begin{minipage}{0.475\linewidth}
\[
\pi(X_i) = 
\begin{cases}
   0.75 & \text{if } X_{i,1} > 0.1 \text{ and } X_{i,2} > 0.\\
   0.6 & \text{if } X_{i,1} \leq 0.1 \text{ and } X_{i,2} > 0.\\
   0.25 & \text{if } X_{i,1} < -0.05 \text{ and } X_{i,2} \leq 0.\\
   0.5 & \text{if } X_{i,1} \geq -0.05 \text{ and } X_{i,2} \leq 0.
\end{cases}
\]
\end{minipage}
\hfill
\begin{minipage}{0.475\linewidth}
\[
\gamma(X_i) = 
\begin{cases}
   -0.7 & \text{if } X_{i,1} > 0 \text{ and } X_{i,2} > 0.\\
   0.1 & \text{if } X_{i,1} > 0 \text{ and } X_{i,2} \leq 0.\\
   -0.4 & \text{if } X_{i,1} \leq 0 \text{ and } X_{i,2} > 0.05.\\
   0.6 & \text{if } X_{i,1} \leq 0 \text{ and } X_{i,2} \leq 0.05.
\end{cases}
\]
\end{minipage}

\paragraph{Comparison with non-private G-Formula.}
For the experiment in \Cref{fig:sample-AZH} (left) both the propensity score and the outcome follow a logistic regression model. 
We use the same parameters as in \Cref{app:exp:results} when we compare with prior methods in a well-specified setting.
All estimators use G-Formula with a logistic regression estimate for the nuisance functions.
For the non-private baseline we use cross-fitting by splitting the data into two folds. 
We set $K=n/100$ for all our estimators.

\subsection{Additional results}
\label{app:exp:results}


\paragraph{Well-specified setting with good overlap.}
For this experiment, we again consider a well-specified setting for \cite{lee2019privacyIPW, ohnishi2024covariateBalancing, DBLP:journals/csda/GuhaR25}.
The outcomes $Y_i \in \{0,1\}$ are binary, and both the propensity score and potential outcomes follow logistic regression models.
This setup is essentially the same as one of the well-specified experiments in \cite{ohnishi2024covariateBalancing}. 
We set $n=50000$, $d=10$ and $\Bpi = 1/0.1$. 
The propensity score is $\pi(x) = \textrm{clip}_{[0.1,0.9]}(\mathrm{expit}(0.1 + X_{i}^T\beta_\pi))$, where $\beta_\pi = (-0.15, 0.225, -0.15, -0.2, 0.1, 0.05, -0.075, 0.225, -0.15, -0.2)$. 
Here $\beta_\pi$ is chosen to give us a range of propensity scores while only few samples require clipping. That is, we have large overlap for most samples.
The outcome of a sample is generated by $\mathrm{Bernoulli(\mathrm{expit}(-0.05 + X_i^T\beta_\mu + 0.42585A_i))}$, where $\beta_\mu = (0.175, 0.1, -0.125, 0.075, -0.1, 0.2, -0.2, 0.175, -0.1, 0.2)$. This gives an $ATE$ of $0.1$.

For our estimators, we set $K = 500$ and run IPW, G-Formula and AIPW with logistic regression for the nuisance estimators.  
We also run G-Formula using a simple linear regression model.
We run the experiment with two sets of privacy parameters.
This experiment is an ideal setup for previous work, so we expect them to perform well here. This is confirmed by the results in 
\Cref{fig:experiment1-well-specified}.
However, our G-Formula estimator achieves similar performance and outperforms our other estimators, as it requires less noise thanks to its lower sensitivity. 



\begin{figure}[htbp]
  \centering
    \centering
    \includegraphics[width=\linewidth]{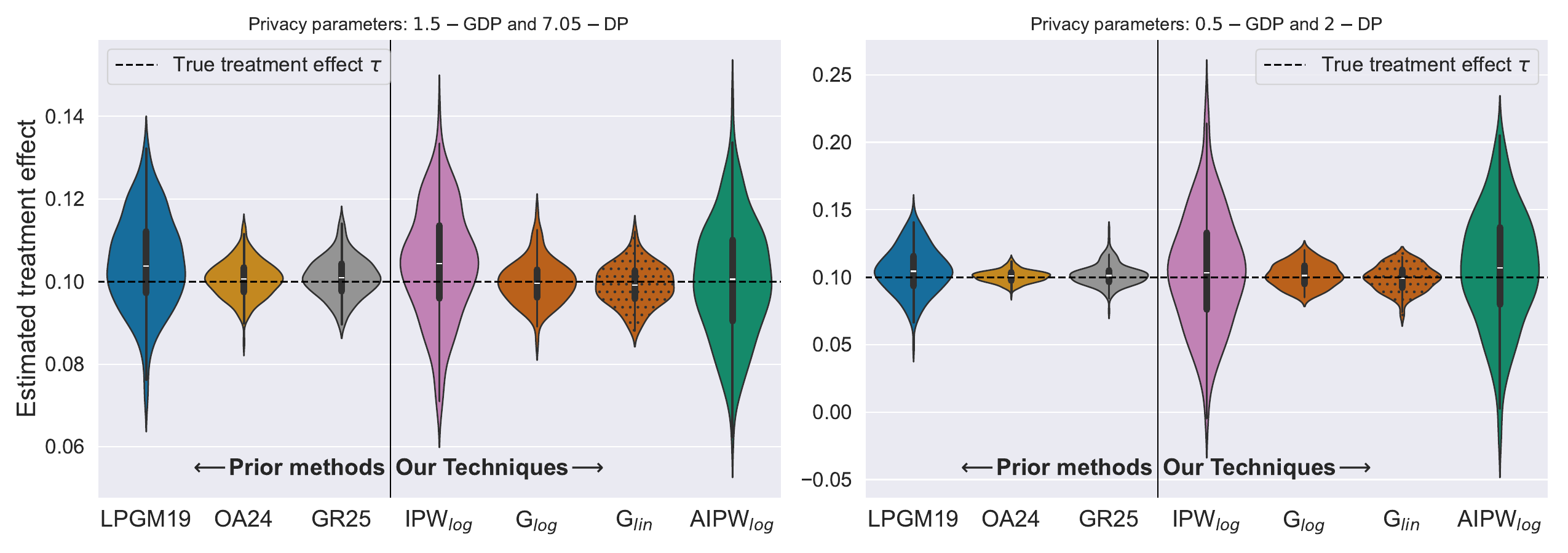}
  \caption{Well-specified setting with good overlap: propensity scores and outcome are linear models.}
  \label{fig:experiment1-well-specified}
\end{figure}

\subsection{Effect of parameter $K$}\label{app:exp:effect-k}
Our approach requires selecting the number of folds $K$. By increasing $K$, we reduce sensitivity, allowing us to achieve differential privacy with less noise as shown in \Cref{sec:DP_estimators}.
However, if we set $K$ too high, the nuisance estimators may not have enough samples to train well. 
Therefore, we want to use the largest value of $K$ that maintains good nuisance estimation performance.
This choice depends on both the choice of nuisance estimator and the dataset size.
Here, we evaluate the effect of $K$ on our estimators for datasets with $n=20000$, $d = 20$, and $\Bpi = 1/0.1$.
The propensity score follows a logistic regression model where $\pi(X_i) = \textrm{clip}_{[0.1,0.9]}(\mathrm{expit}(0.1 + X_{i}^T\beta_\pi))$ with $\beta_\pi =  (-0.17, \allowbreak -0.06, \allowbreak 0.05, \allowbreak 0.14, \allowbreak 0.12, \allowbreak -0.195, \allowbreak -0.205, \allowbreak 0.07, \allowbreak 0.18, \allowbreak 0.14, \allowbreak -0.14, \allowbreak 0.05, \allowbreak 0.01, \allowbreak -0.16, \allowbreak -0.18, \allowbreak -0.1, \allowbreak 0.2, \allowbreak 0.03, \allowbreak -0.16, \allowbreak -0.1)$.
The outcome follows a linear regression model such that $Y_i = -0.08 + X_{i}^T\beta_Y + 0.15 A_i + e$, where $e \sim \mathcal{N}(0, 0.0025)$ and $\beta_Y = (-0.0385, \allowbreak -0.0111, \allowbreak -0.105, \allowbreak -0.0344,  \allowbreak 0.1405, \allowbreak 0.0550, \allowbreak 0.0344, \allowbreak -0.0908, \allowbreak -0.0023, \allowbreak -0.0243, \allowbreak -0.0076, \allowbreak -0.0416, \allowbreak 0.0193, \allowbreak -0.0846, \allowbreak 0.0582, \allowbreak 0.0824, \allowbreak 0.0184, \allowbreak 0.0064, \allowbreak -0.0895, \allowbreak 0.0241)$.
We do not add Gaussian noise in these experiments. The purpose is to evaluate the effect of the parameter $K$ on the aggregated estimate.

The results are shown in \Cref{fig:experiment3-G-formula} (G-Formula) and \Cref{fig:experiment3-IPW-AIPW} (IPW and AIPW). 

\begin{figure}[htbp]
  \centering
  \includegraphics[width=\linewidth]{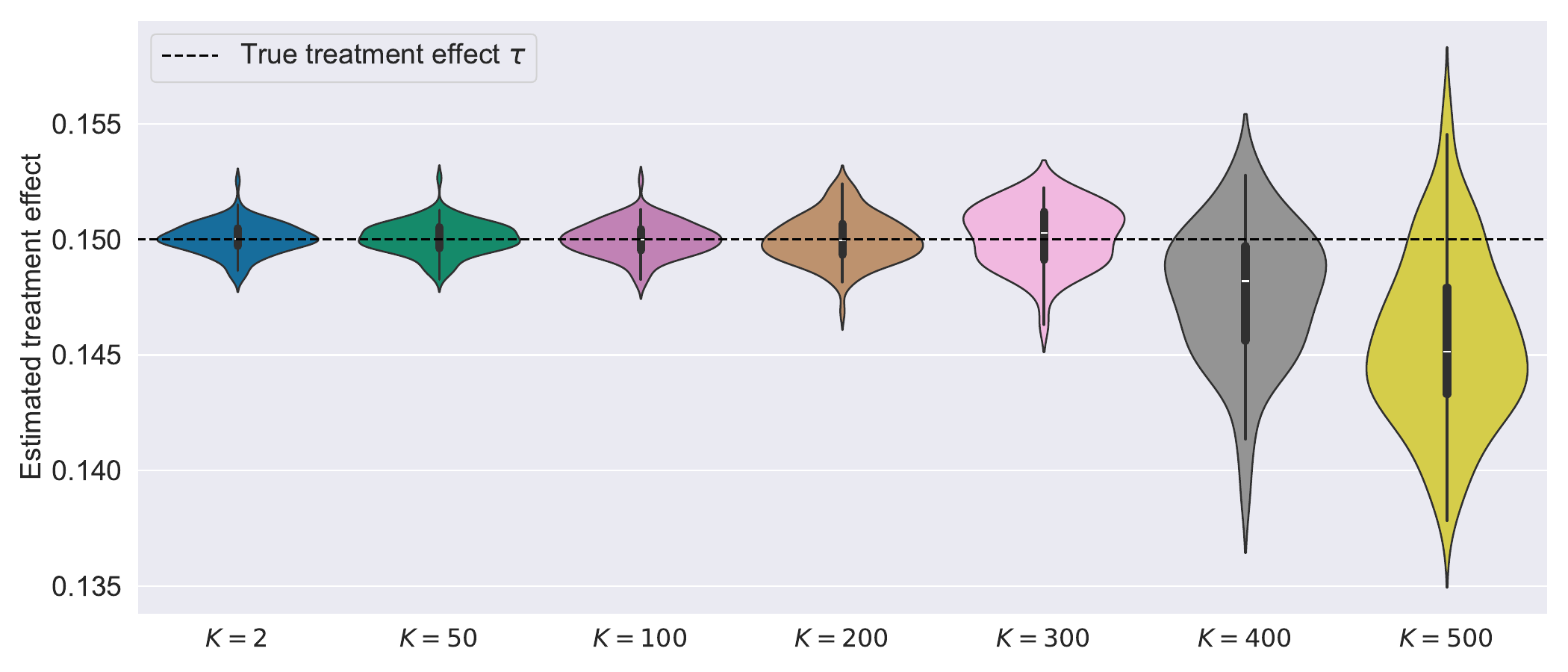}
  \caption{Effect of changing $K$ for G-Formula linear regression. The aggregated estimators perform well even as $K$ increases. Although the nuisance estimators are each trained on smaller folds which increases variance, this is balanced by aggregating more estimators.
  However, at $K=400$ the performance of the aggregator estimator degrades as each fold contains only $50$ samples.}
  \label{fig:experiment3-G-formula}
\end{figure}

\begin{figure}[htbp]
  \centering
  \begin{subfigure}[t]{0.48\textwidth}
    \centering
    \includegraphics[width=\linewidth]{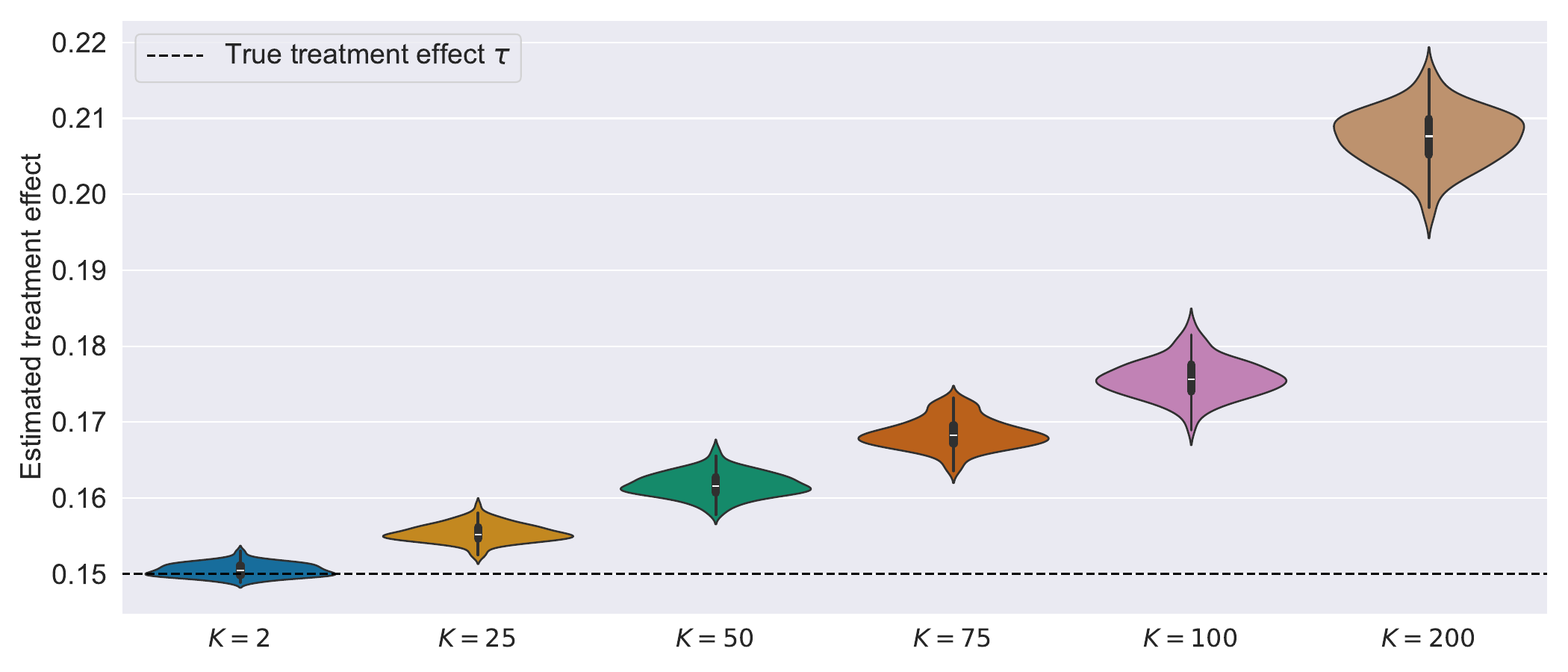}
   \end{subfigure}
   \hfill
   \begin{subfigure}[t]{0.48\textwidth}
     \centering
     \includegraphics[width=\linewidth]{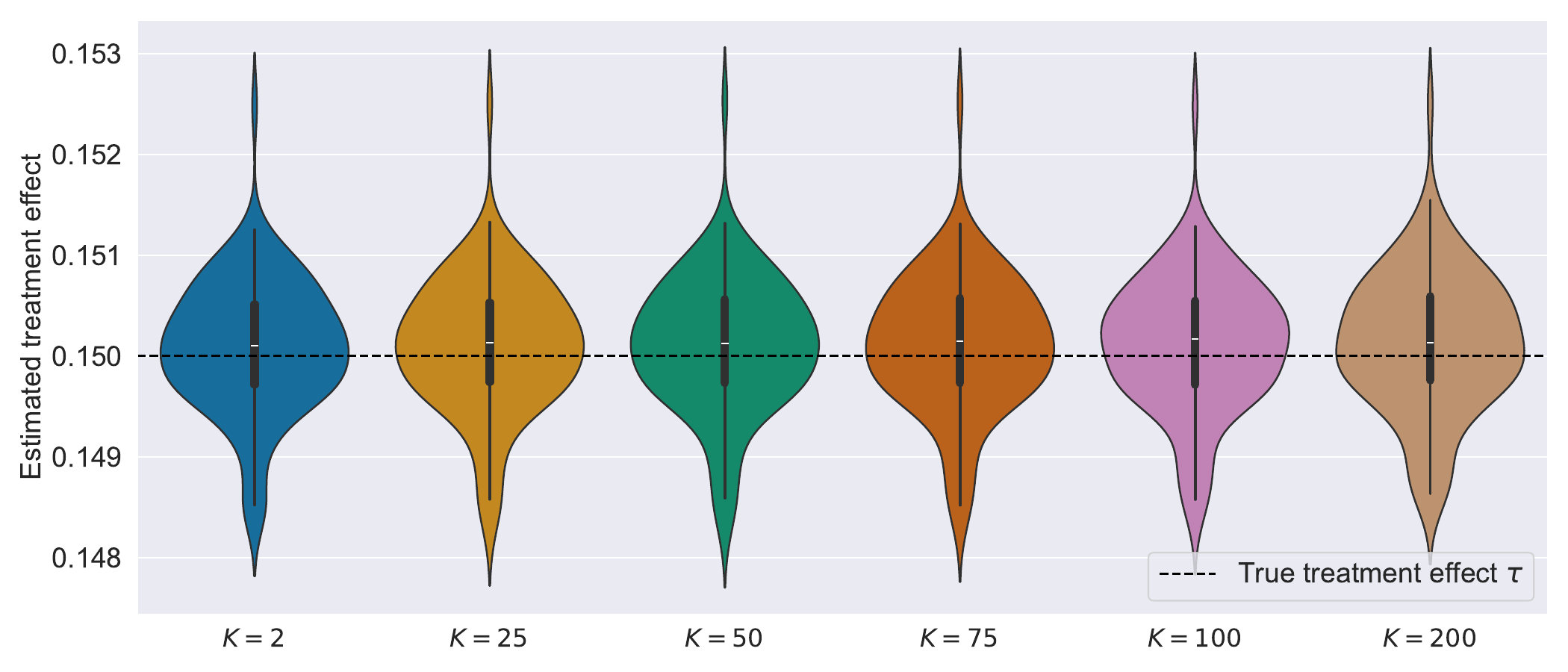}
   \end{subfigure}
   \caption{Effect of changing $K$ for IPW and AIPW. We see that in this setup the accuracy of IPW quickly decreases and the smaller fold size introduces bias. In contrast, AIPW benefits from the performance of G-Formula and remains stable for moderate values of $K$.}
   \label{fig:experiment3-IPW-AIPW}
\end{figure}

\paragraph{On nuisance function convergence rates}
Our theoretical result in \Cref{thm:utility_general} presents the asymptotic variance of our ATE estimators.
The results are conditioned on sufficiently fast convergence rates of the nuisance estimators. 
This is similar to the guarantees for the non-private setting as presented in \Cref{sec:prelim} that rely on the same convergence rate assumptions.
In the non-private setting, nuisance estimators are often trained on half the dataset, while we typically train on a larger number of smaller folds.
It is natural to wonder if the assumptions still hold especially if we set $K$ to a function of $n$ rather than a constant.
Since we train nuisance estimators on a smaller fraction of the dataset, we expect each non-private estimator to converge slower. However, we argue that this is offset by the ensemble technique. 

Consider a simplified example, where we want to apply $\hat\tau_\mathrm{DP-AIPW}$ and we have that $(\hat\pi^{(k)}(X_i)-\pi(X_i)) \sim \mathcal{N}(0,1/m)$ and $(\hat\mu^{(k)}_a(X_i)-\mu_a(X_i)) \sim \mathcal{N}(0,1/m)$, whenever $\hat\pi^{(k)}$ and $\hat\mu^{(k)}_a$ are trained on folds with $m$ data points.
If we train the nuisance estimators with standard cross-fitting (2 folds) we have $(\hat\pi(X_i)-\pi(X_i)) \sim \mathcal{N}(0,2/n)$ and $(\hat\mu_a(X_i)-\mu_a(X_i)) \sim \mathcal{N}(0,2/n)$ and thus $\esp{\left(\hat\pi(i)-\pi(X_i)\right)^2}\esp{\left(\hat\mu_a(i)-\mu_a(X_i)\right)^2} = 4/n^2 = o(n^{-1})$ as required.
Using our folding strategy the error of each nuisance estimator would now be distributed as $\mathcal{N}(0, K/n)$. However, we average estimators trained on independent datasets which lowers the variance by a factor $K - 1$ such that the aggregate estimate has error of $\mathcal{N}(0, K/(K-1) \cdot 1/n)$.
Thus we have that $\esp{\left(\hat\pi(i)-\pi(X_i)\right)^2}\esp{\left(\hat\mu_a(i)-\mu_a(X_i)\right)^2} = (K/(K-1))^2 \cdot 1/n^2 = o(n^{-1})$.

In this simplified example, we can thus set $K$ to be arbitrary high without degrading the estimates. However, in practice we cannot make the folds too small, as it can introduce bias. Unlike the increase in variance, bias will not degrade from aggregating estimators. This is backed up by our numerical observations: in \Cref{fig:experiment3-IPW-AIPW} (right), the estimates remain stable as we increase $K$; but when the folds are too small, such as the right-most plots in \Cref{fig:experiment3-G-formula}, bias kicks in and degrades the performance. 



Note that if the actual convergence rate of nuisance estimators is too slow, the technique from \Cref{app:CI_var} might report too small confidence intervals. We stress the fact that this is a challenge also in the non-private setting.
In cases where this assumption on convergence rate does not hold, confidence intervals can instead be reported as in \Cref{sec:private_CIs}, via private bootstrap techniques or built-in causal forests methods, which take into account the variability of both nuisance training and sample variance. 


\subsection{Comparison to subsample and aggregate}
\label{app:subsample-and-aggregate}


In the Subsample and Aggregate (SA) framework~\citep{NissimRS07,smith11}, the data is first split into disjoint folds.
Each fold is used to compute some estimate independently of the other folds.
These estimates are then aggregated in the differentially private manner. 
Our folding scheme is conceptually similar to the SA framework: 
just as in our estimators, increasing the number of folds reduces the sensitivity since each sample only influences one of the estimates.
However, a key difference is that, in our approach, nuisance estimators are combined across folds, thus increasing the effective sample size for nuisance function estimates of each score. 
Additionally, the subsample and aggregate framework has to further divide folds to ensure data used for nuisance estimation and ATE estimation is independent to avoid introducing bias, whereas we achieve similar guarantees by combining folds. 

Here, we evaluate the SA framework when adapted to our setting and estimators. 
For any fold $\cI_k$, we split each fold into two smaller folds $\cI_k^{(1)}$ and $\cI_k^{(2)}$, learn nuisance functions on one fold, and compute score functions using these nuisance functions on data from the other fold.
That is, if a sample $i\in[n]$ is in $\cI_k^{(1)}$, we have $\Gamma_i^\mathrm{G} = \mu_1^{(k,2)}(X_i) - \mu_0^{(k,2)}(X_i)$, where $\mu_0^{(k,2)},\mu_1^{(k,2)}$ are learned on $\cI_k^{(2)}$. 
Similarly, we use nuisance functions learned on $\cI_k^{(1)}$ to compute scores for samples in $\cI_k^{(2)}$.

In \Cref{fig:subsample-and-aggregate}, we compare effect of changing the parameter $K$ for the two folding schemes. 
For simplicity, we focus on reporting an ATE estimate with the G-Formula, we use the same synthetic data generation that we use for varying the parameter $K$ in \Cref{app:exp:effect-k}, and we do not add noise in order to isolate the effect of the folding schemes.
As expected, the plot shows that our folding scheme consistently performs better and remains more stable for larger values of $K$.

\begin{figure}[htbp]
  \centering
   \includegraphics[width=\linewidth]{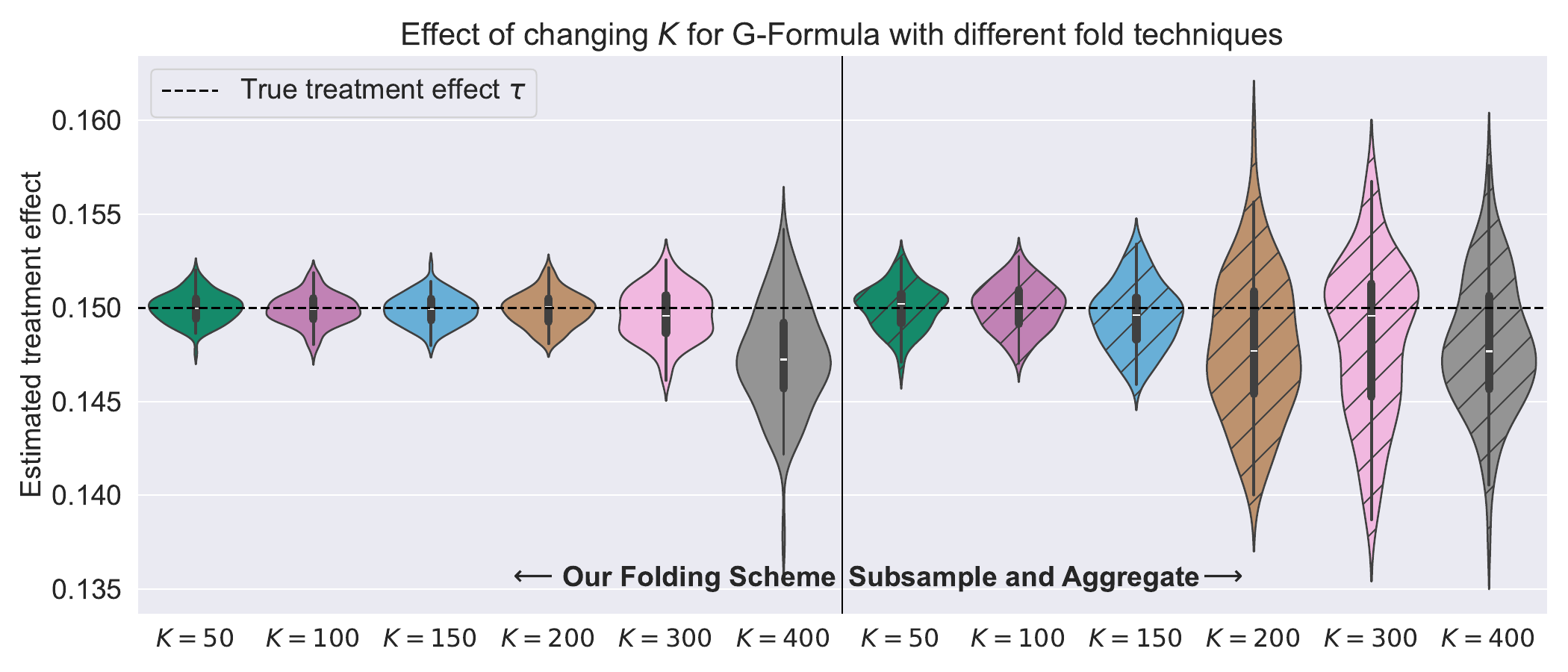}
   \caption{Effect of changing $K$ for G-Formula using our folding technique (left) and the Subsample and Aggregate folding technique (right). The setting is identical to \Cref{fig:experiment3-G-formula}.}
   \label{fig:subsample-and-aggregate}
\end{figure}

\section{Real datasets experimental details}
\label{app:details_real_semi-synth}

The license information and terms of use of the ADNI and NACC datasets can be found respectively at
\url{https://adni.loni.usc.edu/terms-of-use/} and \url{https://files.alz.washington.edu/documentation/nacc_data_use_agreement.pdf}.

\paragraph{NACC preprocessing.}
The NACC Unified Data Set (UDS) \citep{beekly2007national} includes clinical, demographic, medication, and genetic information collected annually across 36 Alzheimer’s Disease Research Centers. For this study, data from 2005–2016 were used. Patients were selected if they had at least one visit with a diagnosis of mild cognitive impairment (MCI) due to AD, and those already prescribed cholinesterase inhibitors (ChEIs) at their first UDS visit were excluded. Treated patients were defined as those who began ChEI treatment during follow-up, while controls were those who never reported ChEI use. Patients who switched treatment during follow-up were censored at the switch date.

\paragraph{ADNI preprocessing.}
The ADNI dataset \citep{petersen2010alzheimer}, a large multi-center cohort, contains biomarker, imaging, and neuropsychological test data from participants aged 55–90, including normal, MCI, and AD individuals. As with NACC, patients were filtered to those meeting MCI due to AD criteria and aligned in terms of treatment history. Both datasets were harmonized by defining the baseline visit as the first MCI diagnosis and by excluding patients who had received prior treatment.

\paragraph{Merging and alignment.}
To ensure comparability between treated and untreated groups, the baseline for treated patients was set to the time of their first ChEI prescription. Controls were aligned by shifting their baseline to the median lag between MCI diagnosis and treatment initiation in the treated group. Only control patients with a visit within ±6 months of this shifted baseline (of 1 years) were retained.  Overlap trimming (with trimming of $0.05$) was then applied to exclude patients with extreme propensity scores.
This preprocessing reduced the combined sample from $\sim$ 12,000 to $\sim$ 5,200 patients, of whom 3,215 remained after filtering (712 treated) those who had a visit in the shifted baseline.

\paragraph{Covariates selected.}
For our analysis, we selected a set of covariates from the NACC database that capture demographic characteristics, cognitive performance, functional status, and medication use at baseline. Specifically, these include: treatment indicator $A$ and outcome $Y$, demographic variables such as age (AGE) and education (EDUC), cognitive assessment scores including Montreal Cognitive Assessment (MOCA), Mini-Mental State Examination (MMSE), Clinical Dementia Rating global score (CDRGLOB), and Clinical Dementia Rating sum of boxes (CDR); functional assessment measures such as Functional Activities Questionnaire (FAQ); depression score (Geriatric Depression Scale, GDS); the time in years from cognitive threshold (YEARS\_FROM\_COGNITIVE\_THRESHOLD); body-mass index (NACCBMI); and a summary of medications (MEDS). All scores were measured at baseline, ensuring that the covariates reflect the patient’s status prior to any outcome assessment or intervention.

\paragraph{Missing data.}
Missing data is imputed using iterative imputations with the MICE package \citep{mice} with its R by-default parameters, prior to all our analysis.  We thus assume that the dataset to privatize is the imputed dataset.
Bootstraps do not take into account the randomness that comes from imputation, and we assume that the dataset to privatize is the imputed one, setting that problem apart.
Handling missing data properly under DP involves further complications, but we ignore those challenges. E.g., if data is imputed with a mean, then one datapoint can affect the data of others, while using more sophisticated imputations (like mice) can create further problems. Interestingly, this is not an issue at all if imputation is handled separately for each fold.
However, missing data imputation is orthogonal to our work.

\paragraph{Outcome.}
The outcome we consider is the evolution of the MMSE score after baseline.
The outcome for patient $i$ is thus constructed as the difference in MMSE score between the 1 year visit ($\pm 6$ months) and the baseline visit.
The MMSE scores are on a scale of 30. A score of 23 or less is the generally accepted cutoff point indicating the presence of cognitive impairment. Levels of impairment have also been classified as none (24-30); mild (18-23) and severe (0-17).

\paragraph{Parameters of the algorithms.}
The number of bootstraps is set to $R=200$.
Regression forests and causal forests are taken with 300 trees, a minimal node size of 5 and a sample fraction of 0.4, to avoid empty leaves.
The outcomes are bounded between 0 and 9 after reversing signs and translating them, so that we can take $\Bmu=4.5$.

\end{document}